\definecolor{mycitecolor}{rgb}{0,0.08,0.45}
\newcommand{\expectation}{{\mathbb{E}}}
\def\1{\bm{1}}
\def\sp{space}
\def\vs{{\bm{s}}}
\DeclareMathAlphabet{\mathsfit}{\encodingdefault}{\sfdefault}{m}{sl}
\SetMathAlphabet{\mathsfit}{bold}{\encodingdefault}{\sfdefault}{bx}{n}
\newcommand{\vsp}{\csname v\sp \endcsname}
\newcommand{\hsp}{\csname h\sp \endcsname}
\definecolor{mybrown}{rgb}{0.87058824, 0.56078431, 0.01960784}
\definecolor{myblue}{rgb}{0.3372549 , 0.70588235, 0.91372549}
\definecolor{mypurple}{rgb}{0.8, 0.47058824, 0.7372549 }
\definecolor{myorange}{rgb}{0.835, 0.368, 0}
\definecolor{mygreen}{rgb}{0.00784314, 0.61960784, 0.45098039}
\definecolor{mygt}{rgb}{0.0078125 , 0.57421875, 0.40625}
\definecolor{mysp}{rgb}{0.84765625, 0.515625  , 0.0234375}
\definecolor{Set2c0}{rgb}{0.54117647, 0.16862745, 0.88627451}
\title{High-Throughput Synchronous Deep RL}
\author{
  Iou-Jen Liu, ~Raymond A. Yeh,~Alexander G. Schwing  \\
  University of Illinois at Urbana-Champaign\\
   \texttt{\{iliu3, yeh17, aschwing\}@illinois.edu} \\
}
\let\cite\citep
\begin{document}
\maketitle
\begin{abstract}
Deep reinforcement learning (RL) is computationally demanding and requires processing of many data points. 
Synchronous methods enjoy training stability while having lower data throughput. In contrast, 
asynchronous methods achieve high throughput but suffer from stability issues  and lower sample efficiency due to `stale policies.' 
To combine the advantages of both methods we propose High-Throughput Synchronous Deep Reinforcement Learning (HTS-RL). In HTS-RL, we perform  learning and rollouts concurrently,  devise a system design which avoids  `stale policies'  and ensure that actors interact with environment replicas in an asynchronous manner while maintaining \emph{full determinism}. We evaluate our approach on Atari games and the Google Research Football environment. Compared to  synchronous baselines, HTS-RL is $2-6\times$ faster. Compared to state-of-the-art asynchronous methods, HTS-RL has competitive throughput and consistently achieves  higher average episode rewards. 
\end{abstract}

\section{Introduction}
\label{sec:intro}

Deep reinforcement learning (RL) has been impressively successful on a wide variety of tasks, including playing of video games~\cite{dqn1, dqn2, starcraft2, a3c, a2c, ga3c, kf, pic, Jain19, JainECCV2020} and robotic control~\cite{Levine15, Gu17, Luo19}. However, a long training time is a key challenge hindering deep RL to scale to even more complex tasks. 
To counter the often excessive training time, RL frameworks aim for two properties: (1) A high throughput which ensures that the framework collects data at very high rates. (2) A high sample efficiency which ensures that the framework learns the desired policy with fewer data. 
To achieve both, synchronous and asynchronous parallel actor-learners have been developed which accelerate RL training~\cite{a3c, a2c, impala, ga3c, seed_rl, ddppo, rlpyt, iswitch}. 

State-of-the-art synchronous methods, such as synchronous advantage actor critic (A2C)~\cite{a3c, a2c} and related algorithms~\cite{ppo, trpo, acktr} 
are popular because of their \emph{data efficiency}, \emph{training stability}, \emph{full determinism}, and \emph{reproducibility}.  
However, synchronous methods suffer from idle time as all actors need to finish experience collection before trainable parameters are updated. This is particularly problematic when the time for an environment step varies significantly. 
As a result, existing synchronous methods don't scale to environments where the step time
varies significantly due to computationally intensive 3D-rendering and (physics) simulation. 

\begin{figure}[t]
\label{fig:flow}
\centering
\includegraphics[width=0.87\linewidth]{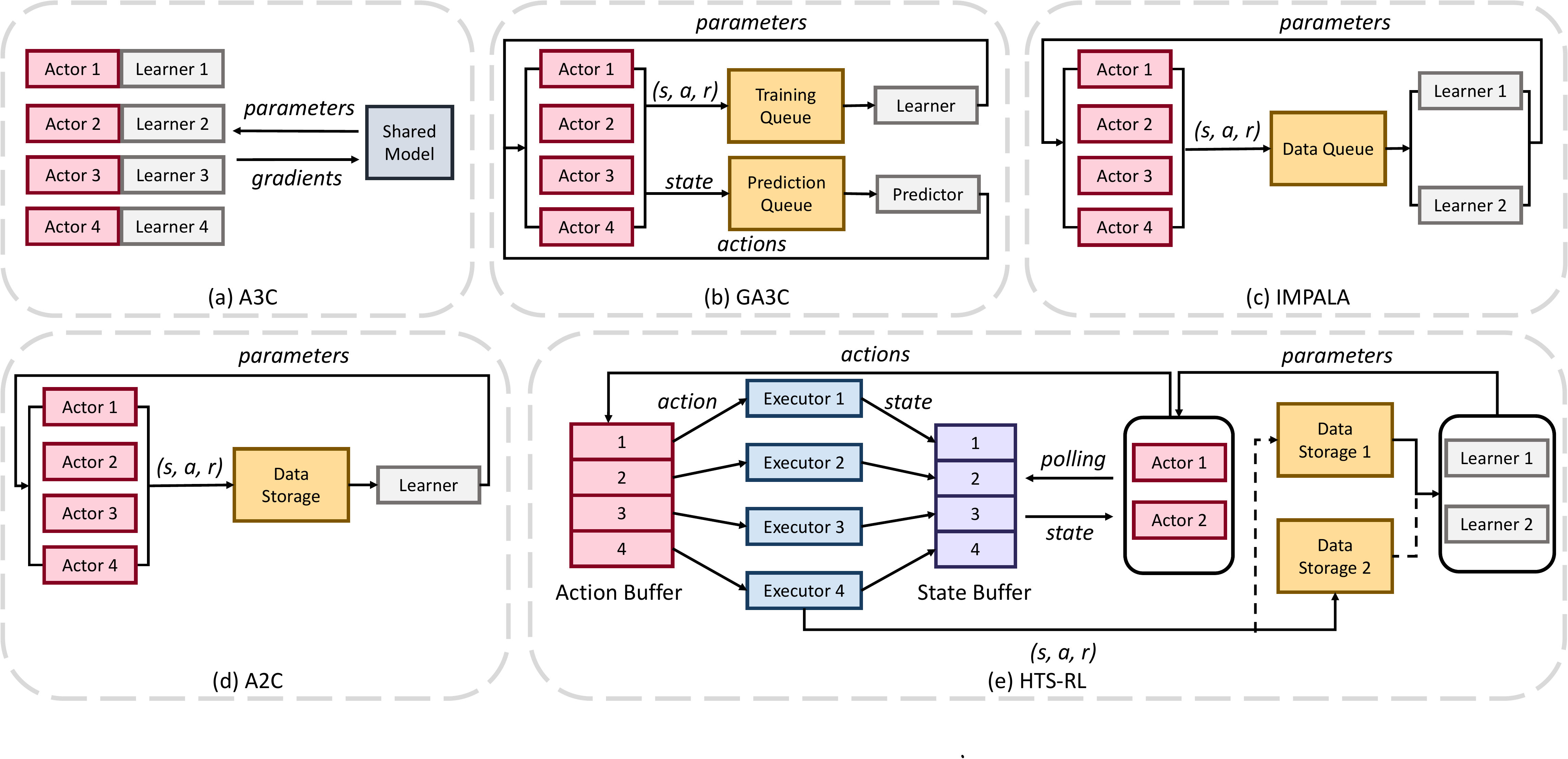}
\vspace{-0.5cm}
\caption{Structure and  flow of (a) A3C, (b) GA3C, (c) IMPALA, (d) A2C  and (e) our HTS-RL.
}

\label{fig:flow}
\end{figure}

\begin{figure}[t]
\centering
\includegraphics[width=1\linewidth]{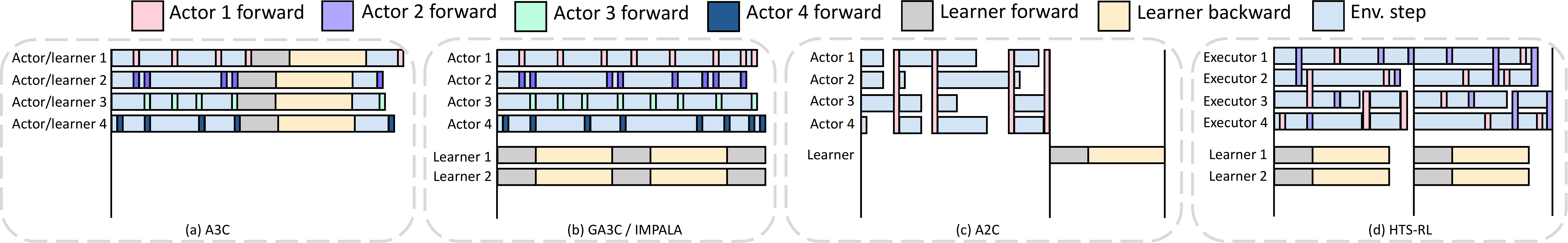}
\vsp{-0.5cm}
\caption{Processing timeline of (a) A3C , (b) GA3C/IMPALA, (c) A2C and (d) our HTS-RL.
}
\label{fig:timeline}
\end{figure}

Instead, asynchronous methods, \eg, asynchronous advantage actor-critic on a GPU (GA3C)~\cite{ga3c} and importance weighted actor-learner architectures (IMPALA)~\cite{impala}, achieve high throughput. However, these methods suffer from 
a stale-policy issue~\cite{ga3c, impala} as learning and data collecting are performed asynchronously. The policy which is used to collect data (behavior policy) is several updates behind the latest policy (target policy) used for parameter updates. This lag leads to noisy gradients and thus lower  data efficiency. Consequently, asynchronous methods trade training stability for  throughput. 
 
We show that this trade-off is not necessary and 
 propose High-Throughput Synchronous RL (HTS-RL), a technique, which achieves both high throughput and high training stability. 
HTS-RL can be applied to many off-the-shelf deep RL algorithms, such as A2C, proximal policy optimization (PPO)~\cite{ppo}, and actor-critic using Kronecker-factored trust region (ACKTR)~\cite{acktr}. HTS-RL is particularly suitable for 
environments with large step time variance: 
It performs concurrent learning and data collection, {which significantly increases throughput.} 
Additionally, actors interact with environment replicas in an asynchronous manner. The asynchronous interaction reduces the idle time.  
Lastly, by performing batched synchronization, HTS-RL ensures that the lag between target and behavior policy equals one. Thanks to this constant latency, HTS-RL uses a `one step delayed-gradient' update which has the same convergence rate as a non-delayed version. As a result, HTS-RL maintains the advantages of synchronous RL, \ie, \emph{data efficiency}, \emph{training stability}, \emph{full determinism}, and \emph{reproducibility}, while achieving  speedups, especially in environments where the step time varies. 

We show that HTS-RL permits to speedup A2C 
and PPO on Atari environments~\cite{gym, atari} and the Google Research Football environment (GFootball)~\cite{gfootball}. Following the evaluation protocol of~\citet{Henderson17} and~\citet{Colas18}, we compare to the state-of-the-art, asynchronous method, 
IMPALA~\cite{impala}: our approach has {$3.7\times$} higher average episode reward on Atari games, and {$43\%$} higher average game scores in the GFootball environment. 
When compared with synchronous A2C and PPO, our approach achieves a $2-6\times$ speedup.  Code is available at~\url{https://github.com/IouJenLiu/HTS-RL}.

\section{Related Work} 
In the following we briefly review work on asynchronous and synchronous reinforcement learning.
\noindent\textbf{Asynchronous reinforcement learning:} 
Asynchronous advantage actor-critic (A3C)~\cite{a3c} is an asynchronous multi-process variant of the advantage actor-critic algorithm~\cite{SuttonRL}. A3C runs on a single machine and does not employ GPUs. As illustrated in \figref{fig:flow}(a),
in A3C, each process is an actor-learner pair which updates the trainable parameters asynchronously. Specifically, in each process the actor collects data by interacting with the environment for a number of steps. The learner uses the collected data to compute the gradient. Then the gradient is applied to a shared model which is accessible by all processes. One major advantage of A3C: throughput increases almost linearly with the number of processes as no synchronization is used. See \figref{fig:timeline}(a) for an illustration of the timing.  

Building upon A3C,~\citet{ga3c} further introduce the GPU/CPU hybrid version GA3C. 
However, GA3C suffers from the stale-policy issue: the behavior policy is  several updates behind the target policy. This discrepancy occurs because GA3C decouples actors and learners as illustrated in \figref{fig:flow}(b) and \figref{fig:timeline}(b), \ie, actors and learners use different processes and perform update and data collection asynchronously.  
Because of this discrepancy, on-policy algorithms like actor-critic become off-policy and the target policy may assign a very small probability to an action from the rollout data generated by the behavior policy. As a result, when updating the parameters, the log-probability may tend to infinity, which makes training unstable~\cite{ga3c, impala}.  To mitigate this, GA3C uses $\epsilon$-correction. 

Importance weighted actor-learner architectures (IMPALA)~\cite{impala} (\figref{fig:flow}(c) and \figref{fig:timeline}(b)) adopt a system similar to GA3C. Hence, IMPALA also suffers from a stale-policy issue, harming its performance. In IMPALA, the behavior policy can be tens or hundreds of updates behind the target policy~\cite{impala}. To correct this stale-policy issue, instead of  $\epsilon$-correction, IMPALA  introduces the importance sampling-based off-policy correction method `V-trace.' 

In contrast to those baselines, we develop High-Throughput Synchronous RL (HTS-RL), which is a fully deterministic synchronous framework. It enjoys the advantages of synchronous methods, \ie, the impact of a stale-policy issue is minimal. More specifically,  the behavior policy is guaranteed to be only one update behind the target policy. Thanks to the small delay, we   avoid a stale-policy. 
As a result, HTS-RL  doesn't require any correction method while being  stable and data efficient.

\noindent\textbf{Synchronous reinforcement learning:} 
Synchronous advantage actor critic (A2C)~\cite{a2c}  
operates in steps as illustrated in \figref{fig:flow}(d) and 
synchronizes all actors at each environment step. As illustrated in \figref{fig:timeline}(c), A2C waits for each actor to finish all its environment steps before performing the next action. A2C has  advantages over its asynchronous counterpart:  (1) It can more effectively use GPUs, particularly when the batch size is large; (2) It is fully deterministic, which guarantees reproducibility and facilitates  development and debugging; (3) It does not suffer from stale policies; and (4) It has a better sample complexity~\cite{acktr}. 

Because of the aforementioned advantages, synchronous methods are popular. In addition to A2C, OpenAI Baselines~\cite{a2c} implement a series of synchronous versions of  RL algorithms, including actor-critic using Kronecker-factored trust region (ACKTR)~\cite{acktr}, trust region policy optimization (TRPO)~\cite{trpo}, proximal policy optimization algorithms (PPO)~\cite{ppo}, and sample efficient actor-critic with experience replay (ACER)~\cite{acer}.  
However, 
one major drawback 
is a lower throughput  compared to their asynchronous counterparts. This issue is exacerbated in environments where the time to perform a step varies.  

To address this concern, we develop High-Throughput Synchronous RL (HTS-RL). HTS-RL performs concurrent learning and data collection, which significantly enhances  throughput. 
Moreover, HTS-RL ensures that actors interact with environment replicas asynchronously while maintaining full determinism. To further reduce actor idle time, HTS-RL performs batched synchronization. Compared to A2C, which synchronizes every step (see \figref{fig:timeline}(c)), HTS-RL synchronizes every $\alpha$ steps. As we  show in~\secref{subsec:analysis}, 
batch synchronization largely reduces the actor idle time, particularly in environments where the time to perform a step varies. 

Decentralized distributed PPO (DD-PPO)~\cite{ddppo} is concurrent work which also advocates for synchronous RL. DD-PPO adopts standard synchronous training with a `preemption threshold,' \ie, once a pre-specified percentage of actors finished rollout,  slower actors are terminated. DD-PPO performs distributed training at a  large scale, \eg, 128 GPUs, and achieves impressive throughput. However, in each machine DD-PPO  follows conventional RL, \ie, alternating between learning and data collection. Also note that the `preemption threshold' introduces non-determinism. In contrast, HTS-RL targets parallel computing in a single machine. We  parallelize learning and data rollout, reduce idle time of actors, and maintain full determinism.  HTS-RL can be combined with  across machine DD-PPO training which we leave to future work.

\section{Background}
\label{sec:back}

We first introduce notation and   Markov Decision Processes (MDPs),  review policy gradient and actor-critic algorithms used by   A2C~\cite{a2c} and IMPALA~\cite{impala}  
and finally discuss the stale-policy issue. 

\noindent\textbf{Reinforcement Learning:}  An agent  interacts with an environment, collecting  rewards over discrete time, as formalized by a Markov Decision Process (MDP). Formally, an MDP ${\cal M}({\cal S, \cal A, \cal T}, r, H, \gamma)$ is defined by a set of states $\cal S$, 
a set of actions $\cal A$, 
a transition function $\cal T : \cS \times \cA  \rightarrow \cS$ which maps an action and the current agent-state to the next state, a reward function $r  : \cS \times \cA  \rightarrow \mathbb{R}$, which maps an action and a state to a scalar, a horizon $H$, and a discount factor $\gamma \in (0, 1]$. Note ${\cal T}$ and $r$ can be  stochastic or deterministic. At each time $t$, the agent selects an action $a_t \in \cal A$ according to a  policy  $\pi$. This  policy $\pi$ maps the current state $s_t \in \cal S$ to a probability distribution over the action space ${\cal A}$. Formally, we refer to the output distribution given a state $s_t$ as $\pi({\cdot|s_t})$, and denote the probability of a specific action $a_t$ given a state $s_t$ via $\pi(a_t|s_t)$. 
After executing the selected action $a_t$, the agent finds itself in state $s_{t+1} = {\cal T}(s_t, a_t) \in \cal S$ and obtains a scalar reward $r_t = {r}(s_t, a_t)$.  The discounted return from a state $s_t$ is 
$
R_t=\sum_{i=0}^{H}\gamma^ir(s_{t+i}, a_{t+i})$. 
The goal is to find a policy $\pi_{\theta}$, parameterized by $\theta$,  which maximizes the expected discounted return 
$
    J(\theta) = \expectation_{s_t \sim \rho^{\pi_{\theta}}, a_t \sim \pi_{\theta}}[R_1]$, 
where $\rho^{\pi_{\theta}}$ is the state visitation distribution
under $\pi_{\theta}$.

\noindent\textbf{Policy Gradient:} 
To maximize the expected discounted return $J(\theta)$ 
\wrt the parameters $\theta$ of the policy we use its gradient~\cite{pg} given by
\begin{equation}
    \nabla_{\theta} J(\theta) = \expectation_{s_t \sim \rho^{\pi_{\theta}}, a_t \sim \pi_{\theta}}[\nabla_{\theta} \log\pi_{\theta}(a_t|s_t)Q^{\pi_{\theta}}(s_t, a_t)].
\end{equation}
Here $Q^{\pi_{\theta}}(s_t, a_t) = \expectation_{s_{i > t} \sim \rho^{\pi_{\theta}}, a_{i > t} \sim \pi_{\theta}}[R_t|s_t, a_t]$ is the expected future return obtained by following policy $\pi_{\theta}$ after action $a_t$ was selected at state $s_t$. Because $Q^{\pi_{\theta}}$ is  unknown, we need to estimate it. For instance, in the REINFORCE algorithm~\cite{pg}, $Q^{\pi_{\theta}}$ is estimated from $R_t$. 
To reduce the variance of this estimate, a baseline is subtracted from $R_t$. A popular choice 
is the value function $V^{\pi_{\theta}} = \expectation_{s_{i > t} \sim \rho^{\pi_{\theta}}, a_{i } \sim \pi_{\theta}}[R_t|s_t]$ which estimates the expected future return by following policy $\pi_\theta$ given that we start in state $s_t$. Generally, $V^{\pi_{\theta}}$ is unknown and therefore often estimated via a function approximator $V^{\pi_{\theta}}_{\phi}$ parameterized by $\phi$.
Combined we obtain the following expression for the gradient of the expected discounted return: 
\begin{equation}
    \nabla_{\theta}J(\theta) = \expectation_{s_t\sim \rho^{\pi_{\theta}}, a_t\sim\pi_{\theta}}[\nabla_{\theta} \log\pi_{\theta}(a_t|s_t)(R_t - V^{\pi_{\theta}}_{\phi}(s_t))].
\end{equation}
Note, it is common to learn the parameters $\phi$ of the value function $V^{\pi_{\theta}}_{\phi}$ by minimizing the squared loss $J_V(\phi) = \expectation_{s_t\sim \rho^{\pi_{\theta}}, a_t\sim\pi_{\theta}}[(R_t - V_{\phi}^{\pi_{\theta}})^2]$.

\noindent\textbf{Actor-Critic Algorithms:} In actor-critic algorithms~\cite{SuttonRL, a3c}, 
the $n$-step truncated return
$
 R^{(n)}_t = \sum_{i=0}^{n-1} \gamma^ir_{t+i} + \gamma^nV^{\pi_{\theta}}_{\phi}(s_{t+k})$ 
is used to estimate $Q^{\pi_{\theta}}$. In addition, to encourage exploration, an entropy term $H$ is sometimes added. The gradient of the expected discounted return \wrt the policy parameters $\theta$ is
\begin{eqnarray}
\label{eq:ac}
  \nabla_{\theta}J(\theta) = 
   \expectation_{s_t\sim \rho^{\pi_{\theta}}, a_t\sim\pi_{\theta}}[\nabla_{\theta} \log\pi_{\theta}(a_t|s_t)(R^{(n)}_t  
   - V^{\pi_{\theta}}_{\phi}(s_t)) + \lambda H(\pi_{\theta}(\cdot|s_t))],
\end{eqnarray}
where  $\lambda \geq 0$ controls the strength of the entropy. 
In practice, the expectation is estimated via 
\begin{eqnarray}
\label{eq:ac_d}
  \nabla_{\theta}\hat{J}(\theta, \cD^{\theta}) =
   \frac{1}{|\cD^{\theta|}}\sum_{(s_t, a_t, r_t) \in \cD^{\theta}}[\nabla_{\theta} \log\pi_{\theta}(a_t|s_t)
   (R^{(n)}_t  - V^{\pi_{\theta}}_{\phi}(s_t))  + \alpha H(\pi_{\theta}(\cdot|s_t))],
\end{eqnarray}
where the set $\cD^{\theta} = \{(s_t, a_t, r_t) \}_t$ subsumes the rollouts collected by following the behavior policy $\pi_{\theta}$ which is identical to the target policy, \ie, the policy specified by the current parameters. 

\noindent\textbf{Stale Policy Issue:} In asynchronous methods like IMPALA and GA3C, the behavior policy  often lags the target policy. 
Specifically, in IMPALA and GA3C, actors send data to a non-blocking data queue, and learners consume data from the queue  as illustrated in \figref{fig:flow}(b, c). The data in the queue is  stale when being consumed by the learner and delays are not deterministic. Consequently, on-policy algorithms like actor-critic become off-policy. 
Formally, suppose the latency is $k$, and we refer to the parameters after the $j$-th update as $\theta_j$. Instead of directly estimating the gradient using samples from the current target policy as indicated in \equref{eq:ac_d}, the  gradient employed in asynchronous methods is
\begin{eqnarray}
  \nabla_{\theta_j}\hat{J}(\theta_j, \cD^{\theta_{j-k}}) = 
   \frac{1}{|\cD^{\theta_{j-k}}|}\hspace{-0.4cm}\sum\limits_{(s_t, a_t, r_t) \in \cD^{\theta_{j-k}}}\hspace{-0.6cm}[\nabla_{\theta_j} \log\pi_{\theta_j}(a_t|s_t) \cdot(R^{(n)}_t  - V^{\pi_{\theta_j}}_{\phi}(s_t))  + \alpha H(\pi_{\theta_j}(\cdot|s_t))].
   \label{eq:stale}
\end{eqnarray}
Note, the gradient \wrt $\theta_j$ is estimated using stale data $\cD^{\theta_{j-k}}$. This is concerning because the current distribution $\pi_{\theta_j}(a_t|s_t)$  often assigns small probabilities to actions that were taken by the stale policy $\pi_{\theta_{j-k}}$. 
As a result, $\nabla_{\theta_j} \log\pi_{\theta_j}(a_t|s_t)$  tends to infinity, which makes training unstable and harms the performance as also reported by~\citet{ga3c} and~\citet{impala}.
\section{High-Throughput Synchronous RL}

\vspace{-0.1cm}
\subsection{Method}
\label{subsec:method}
\vspace{-0.1cm}

We aim for the following four  features: 
(1)  batch synchronization which reduces actor idle time, 
(2) learning and rollout  take place concurrently which increases throughput, 
(3) guaranteed lag of only \textit{one} step between the behavior and target policy which ensures stability of training,
(4) asynchronous interaction between actors and executors at the rollout phase to increase throughput while ensuring determinism. 
In the following we provide an overview 
before analyzing 
each of the four features.

\noindent\textbf{Overview of HTS-RL:} As illustrated in \figref{fig:flow}(e), HTS-RL decomposes RL training into executors, actors, and learners. In addition, we introduce two buffers, \ie, the action buffer and the state buffer,  as well as two data storages, one  for data writing and another one for data reading.

As shown in~\figref{fig:flow}(e), executors \emph{asynchronously} grab actions from the action buffer, which stores actions predicted by actors as well as a pointer to the environment. Then, executors apply the predicted action to the corresponding environment and observe the next state as well as the reward. The executors then store the received state %
and an environment pointer within the state buffer, from which actors grab this information \emph{asynchronously}. Next, the actors use the grabbed states to predict the corresponding subsequent actions and send those together with the environment pointer back to the action buffer. 
The executors also write  received data (state, actions,  rewards) to one of two  storages, \eg, `storage 2.' Meanwhile, the learners read data from the other data storage, \ie, `storage 1.' 

Note, in HTS-RL, learning and data collection take place concurrently. As shown in~\figref{fig:timeline}(d), the executor and actor processes operate at the same time as the `learner' process. 
When executors fill up one of the data storages, \eg, `storage 2,'  learners concurrently consume  the data  from the other, \eg, `storage 1.' Eventually the role of the two data storages switches, \ie, the learners consume the data just collected by the executors into  `storage 2,' and the executors start to fill   `storage 1.'

The system does not switch the role of a data storage until executors  fill up and learners exhaust the data storage. This synchronization design  leads to some idle time for either learner or executor. However, the synchronization is critical to avoid the stale policy issue and to maintain training stability and data efficiency. As shown in~\secref{sec:exp}, even with this synchronization, HTS-RL has competitive throughput but, importantly, much better data efficiency than asynchronous methods like IMPALA. 
Given this design, we now discuss the aforementioned four key features in  detail.

\noindent\textbf{Batch synchronization:} To adapt to environments with large step time variance, we deploy batch synchronization in HTS-RL. Specifically, HTS-RL synchronizes actors and executors every { $\alpha$} steps. This is indicated via the black vertical lines in \figref{fig:timeline}(d) which shows an $\alpha=4$ configuration.

In contrast,  A2C (\figref{fig:timeline}(a)) synchronizes every timestep ($\alpha=1$), \ie, at every time A2C has to wait for the slowest process to finish.  As a result, the throughput of A2C drops significantly in environments with large step time variance. 
For a detailed analysis on how the step time variance and synchronization frequency impact throughput, please see~\secref{subsec:analysis}. 

\noindent\textbf{Concurrent rollout and learning:} To enhance throughput for HTS-RL, we ensure that  rollout and learning  happen concurrently and synchronize every $\alpha$ steps as mentioned before. 
More specifically, after a synchronization, each executor performs $\alpha$ steps and each learner performs one or more forward and backward passes.

Concurrency contrasts HTS-RL from A2C (\figref{fig:timeline}(a)) where rollout and learning take place alternatingly. We hence obtain a  higher throughput. Note, asynchronous methods such as IMPALA (\figref{fig:timeline}(b)) also parallelize learning and rollout. However, the intrinsic stale policy issue hurts their performance even when correction methods are applied. 

\noindent\textbf{Delayed gradient:} Due to the two data storages we ensure that the model from the previous iteration is used to collect the data. Staleness is consequently bounded by one step. Different from asynchronous methods such as IMPALA and GA3C, where the delay between target and behavior policy increases with the number of actors and is sensitive to system configurations~\cite{ga3c}, HTS-RL guarantees that the delay is fixed to one, \ie, the executors are using the parameters of the policy one step prior to synchronization. Please see~\secref{subsec:analysis} for a more detailed analysis. 

To  avoid a stale policy, we   use the behavior policy to compute a `delayed' gradient. We then apply this gradient to the parameters of the target policy.  
Formally, the update rule of HTS-RL is 
\begin{equation}
\theta_{j+1} = \theta_{j} + \eta\nabla_{\theta_{j-1}}\hat{J}(\theta_{j-1}, \cD^{\theta_{j-1}}), 
\end{equation}
where $\eta$ is the learning rate. Note that the gradient $\nabla_{\theta_{j-1}}\hat{J}(\theta_{j-1}, \cD^{\theta_{j-1}})$ is computed at $\theta_{j-1}$ before being added to $\theta_j$ (`one-step-delay'). Because the gradient is computed on $\cD^{\theta_{j-1}}$ \wrt $\theta_{j-1}$, the stale policy issue  described in~\equref{eq:stale} no longer exists. Using classical results and assumptions, the convergence rate of the one-step-delayed gradient has been shown to be $O(\frac{1}{\sqrt{T}})$, where $T$ is the number of performed gradient ascent updates~\citep{langford2009slow}.
Note that the $O(\frac{1}{\sqrt{T}})$ convergence rate is as good as the zero-delayed case. Please see the appendix for more details and assumptions.

\noindent\textbf{Asynchronous actors and executors:} To enhance an actor's GPU utilization, in HTS-RL, there are usually fewer actors than executors. More importantly, the actors and executors interact in an asynchronous manner. Specifically, as shown in~\figref{fig:flow}(e) and~\figref{fig:timeline}(d), actors keep polling an observation buffer for available observations, perform a forward pass for all available observations at once and  send the predicted actions back to the corresponding action buffer.
This asynchronous interaction prevents actors from waiting for the executors, and thus increases throughput. 

However, this asynchrony makes the generated action nondeterministic as actors' sample from a distribution over actions to facilitate exploration. Specifically, with actors grabbing observations asynchronously we can no longer guarantee that a particular actor handles a specific observation. Consequently, even when using pseudo-random numbers within actors, asynchrony will result in nondeterminism. To solve this nondeterminism issue,  we  defer all  randomness  to the executors. In practice, along with each observation, an executor sends a pseudo-random number to the observation buffer. The pseudo-random number serves as the random seed for the actor to perform  sampling on this observation. Because  generation of the pseudo-random numbers in executors does not involve any asynchrony, the predicted actions are deterministic once the random seed for the executor is properly set. Thanks to this, HTS-RL maintains full determinism. 

\begin{figure*}[t]
\begin{minipage}{.56\textwidth}
\setlength{\tabcolsep}{0pt}
\begin{tabular}{ccc}
\includegraphics[width=0.328\linewidth, trim={0.8cm 0.5cm 0.2cm 0.9cm},clip]{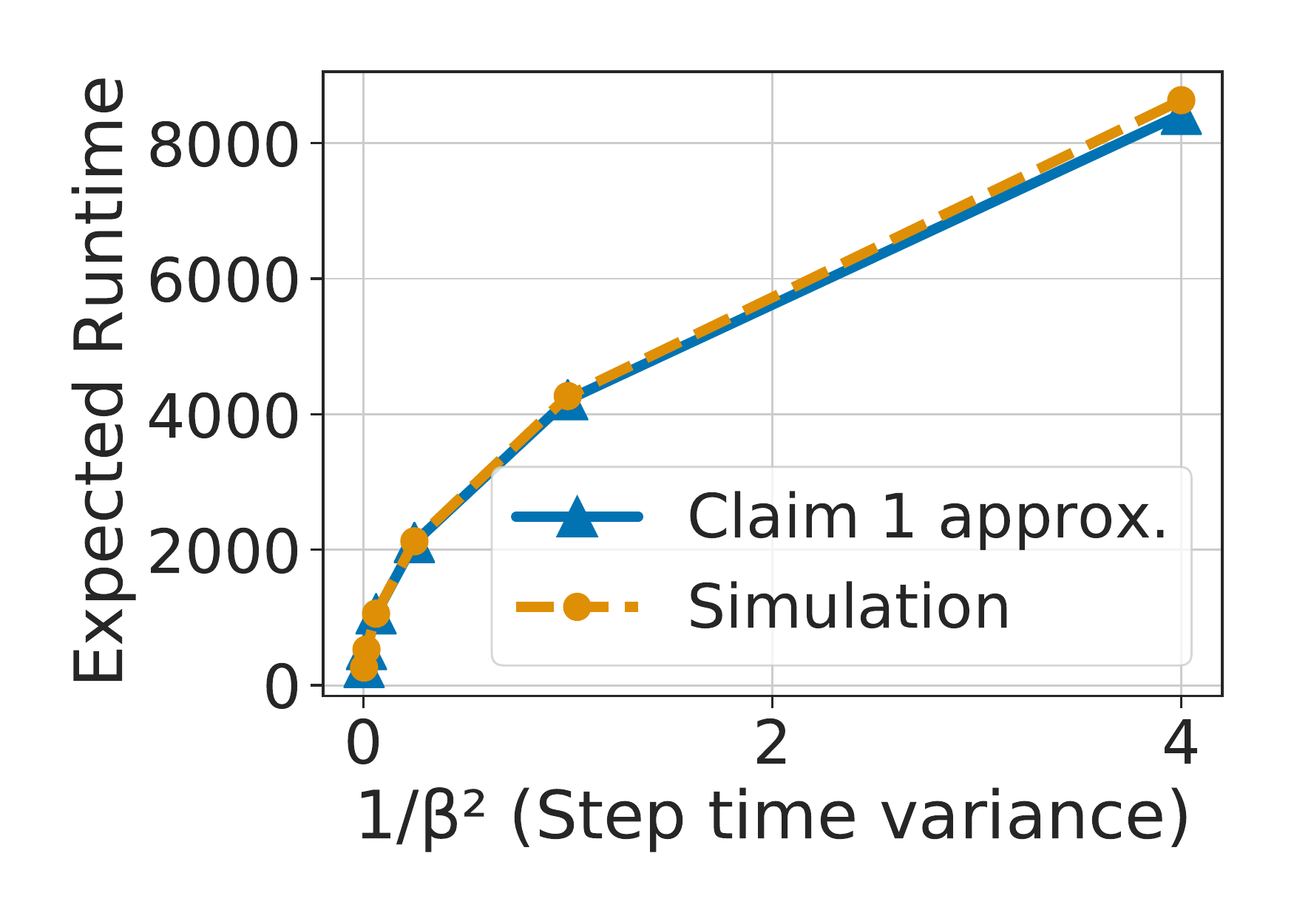} &
\includegraphics[width=0.328\linewidth, trim={0.8cm 0.5cm 0.2cm 0.9cm},clip]{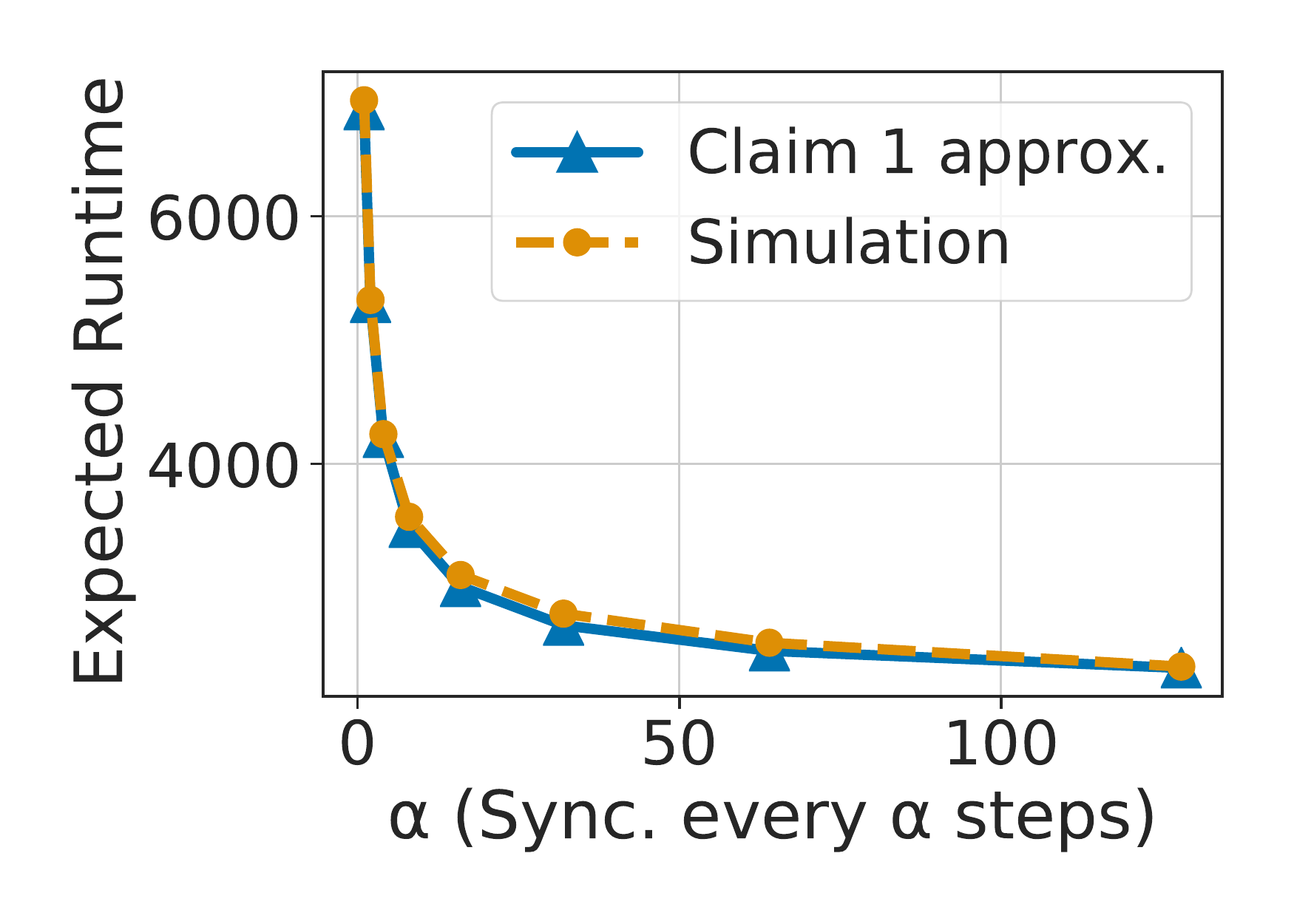}  &
\includegraphics[width=0.328\linewidth, trim={0.8cm 0.5cm 0.2cm 0.7cm},clip]{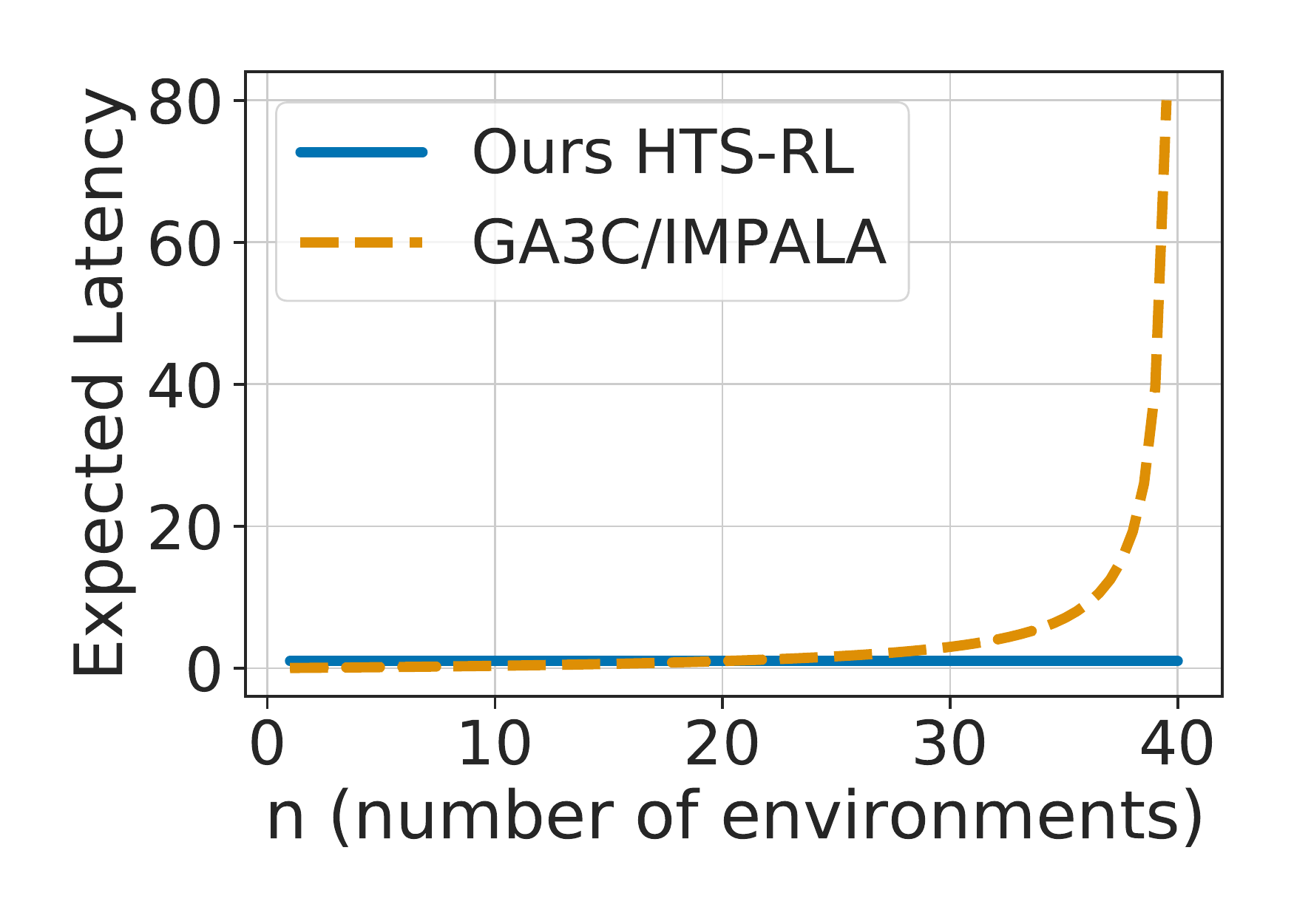} \\
(a) & (b) & (c) \\
\end{tabular}
\vspace{-0.3cm}
\captionsetup{margin=0.4cm}
\caption{
\textbf{(a):}  Expected runtime \vs Environment step time variance {($\frac{1}{\beta^2}$)}. $\alpha$ is fixed at $4$. \textbf{(b):}  Expected runtime \vs synchronization interval ($\alpha$). $\beta$ is fixed at 2. \textbf{(c):} Expected latency between behavior policy and target policy \vs number of environments ($n$).
}
\label{fig:sim}
\end{minipage}
\hspace{.2em}
\begin{minipage}{.43\textwidth}
\vsp{-.4em}
\includegraphics[width=0.47\linewidth, trim={0.6cm 0.5cm 0.2cm 0.},clip]{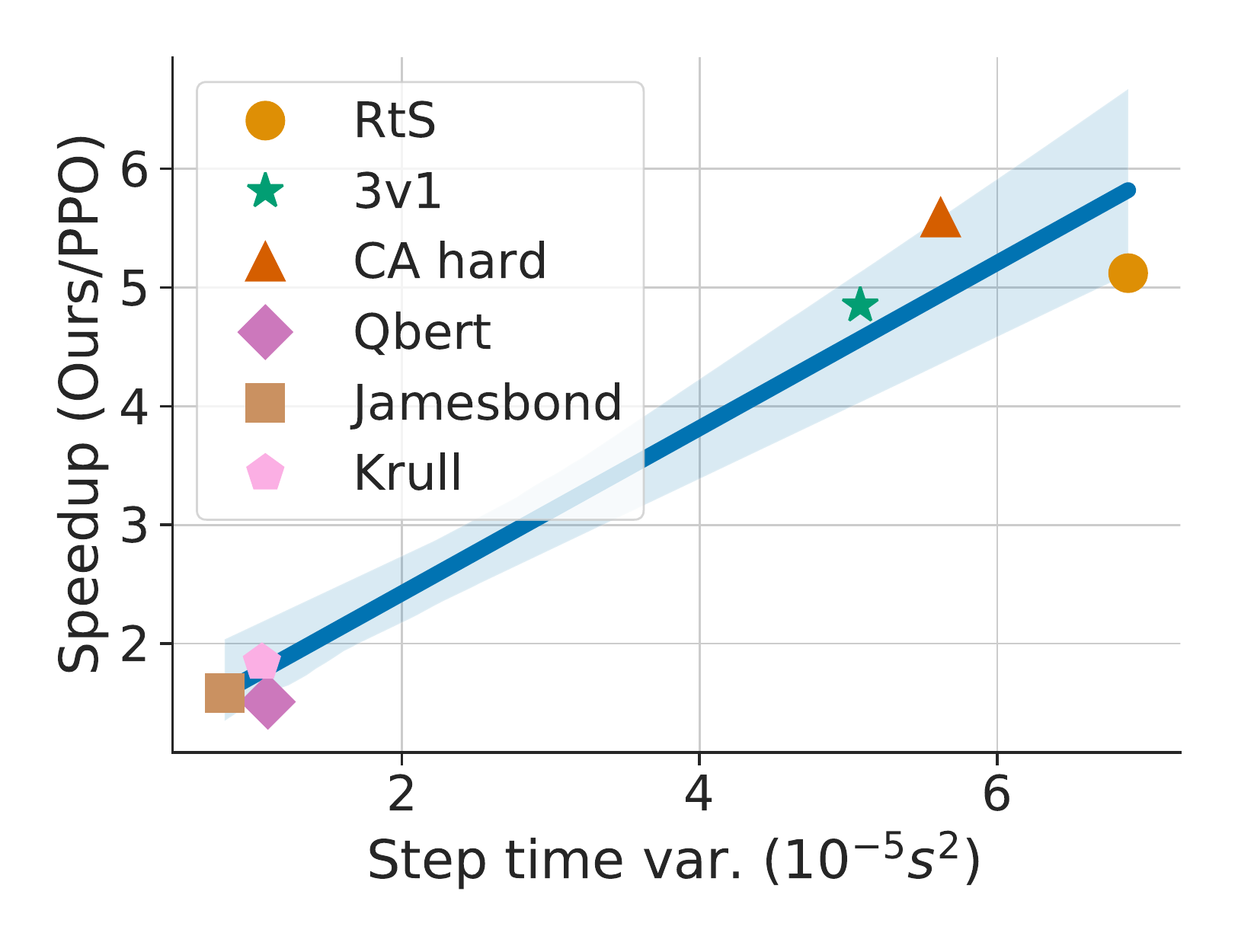}
\includegraphics[width=0.51\linewidth, trim={0.6cm 0.5cm 0.2cm 0.7cm},clip]{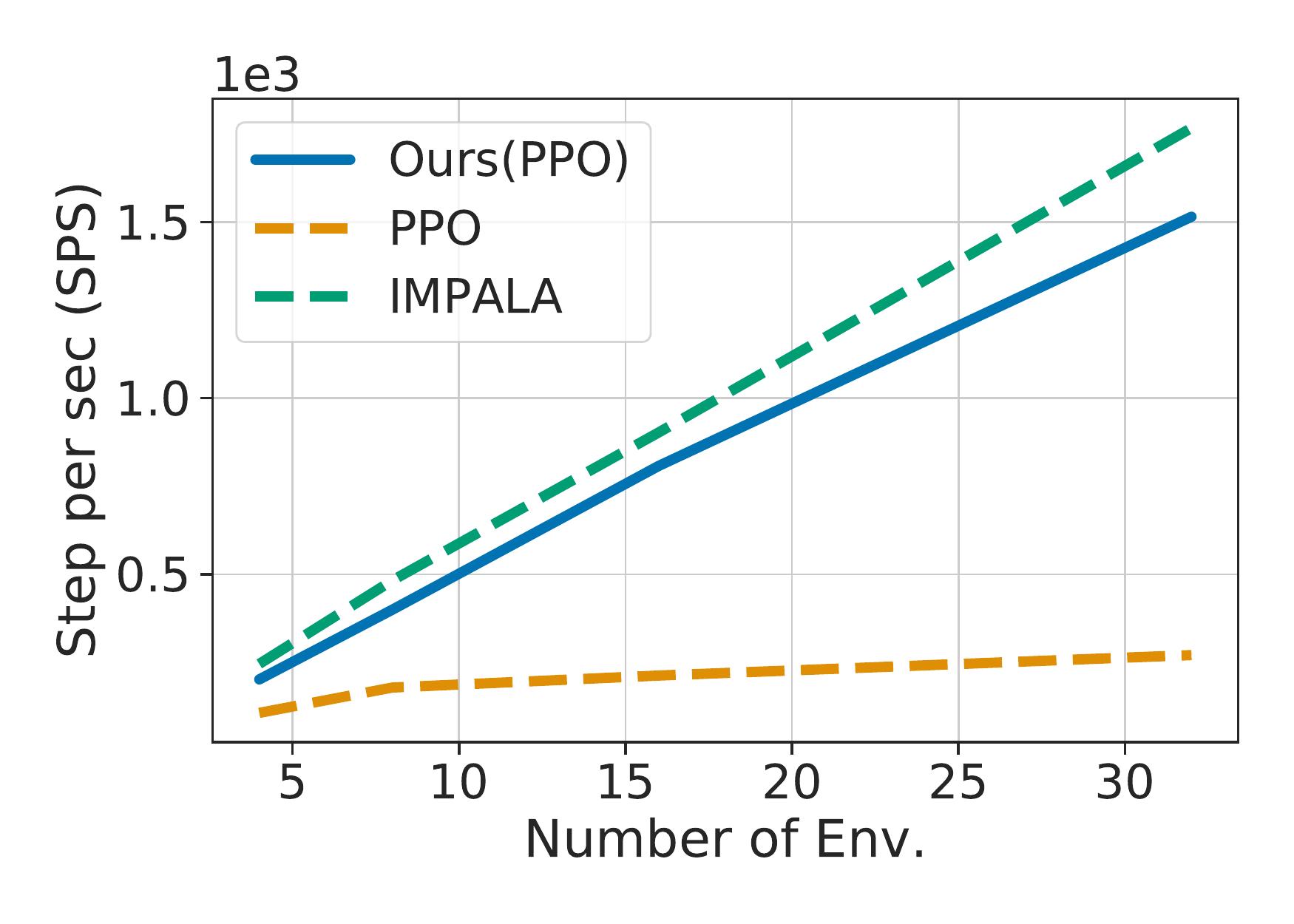}
\vsp{-1em}
\vspace{-0.25cm}
\captionsetup{margin=0.4cm}
\caption{\textbf{Left:}
Speedup \vs env.\ step time variance. RTS: `Run to score', 3v1: `3 \vs 1 with keeper', CA hard: `Conner attack hard'.  \textbf{Right:} Step per second (SPS) \vs  \# of env.\ in GFootball `counterattack hard.'
}
\label{fig:sps}
\end{minipage}
\vsp{-0.5cm}
\end{figure*}

\vspace{-0.1cm}
\subsection{Analysis}
\label{subsec:analysis}
\vspace{-0.1cm}
As discussed in~\secref{subsec:method}, the run-time is impacted by an environment's step time variance and an RL framework's synchronization frequency.
Intuitively, less synchronization and lower step time variance leads to shorter runtime. We will prove this intuition more formally.
To this end we let $T_\text{total}^{n,K}$ denote the run-time when using $n$ environments to collect $K$ states. We can compute the expected run time $\expectation[T^{n, K}_\text{total}]$ as follows.
{\begin{restatable}{claim}{time}{Consider collecting $K$ states using $n$ parallel environments. Let $X_{i}^{(j)}$ denote the time for the $j^{\text{th}}$ environment to perform its $i^{\text{th}}$ step. 
{Suppose  $X_{i}^{(j)}$ is independent and identically distributed (i.i.d.) and $\sum_{i=1}^{\alpha}X_{i}^{(j)}$ follows a Gamma distribution with shape $\alpha$  and rate $\beta$.} Assume the computation time of each actor {consistently takes time $c$.}  
Given these assumptions, the expected time $\expectation[T^{n,K}_{total}]$ to generate  $K$ states approaches \begin{equation}
\frac{K}{n\alpha}(\frac{\gamma}{\beta} (1 + \frac{\alpha - 1}{\beta F^{-1}(1 - \frac{1}{n})} %
) + F^{-1}(1 - \frac{1}{n})) + {\frac{Kc}{n}},
\label{eq:gumbel}
\end{equation} 
where $F^{-1}$ is the inverse cumulative distribution function (inverse CDF) of a gamma distribution with shape $\alpha$ 
and rate $\beta$, \ie, Gamma$(\alpha, \beta)$, and $\gamma$ is the Euler-Mascheroni constant. 
}\label{clm:time}\end{restatable}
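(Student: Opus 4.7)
The plan is to break $T_{\text{total}}^{n,K}$ into a sum over synchronization rounds and then reduce the per-round analysis to computing the expected maximum of $n$ i.i.d.\ Gamma$(\alpha,\beta)$ random variables via extreme-value theory. Since HTS-RL synchronizes actors every $\alpha$ environment steps and uses $n$ parallel environments, it produces $n\alpha$ new states per round, so collecting $K$ states takes $K/(n\alpha)$ rounds. In each round $r$, let $Y_j^{(r)} := \sum_{i=(r-1)\alpha+1}^{r\alpha} X_i^{(j)}$ denote the time the $j^{\text{th}}$ environment needs to complete its $\alpha$ steps; by hypothesis each $Y_j^{(r)}$ is Gamma$(\alpha,\beta)$. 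Because of the batch synchronization barrier, the environment-time of round $r$ is $M_n^{(r)} := \max_{j\le n} Y_j^{(r)}$. Adding the actor overhead $c$ per environment step, which runs in parallel across the $n$ environments and contributes $Kc/n$ in total, yields the decomposition
\begin{equation}
\expectation[T_{\text{total}}^{n,K}] \;=\; \frac{K}{n\alpha}\,\expectation[M_n] \;+\; \frac{Kc}{n},
\end{equation}
where $M_n$ is the max of $n$ i.i.d.\ Gamma$(\alpha,\beta)$ draws.

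The substantive step is evaluating $\expectation[M_n]$ asymptotically. I would apply the Fisher--Tippett--Gnedenko theorem: the Gamma distribution has a light exponential-type tail and therefore lies in the Gumbel max-domain of attraction, with norming constants $b_n := F^{-1}(1-1/n)$ and $a_n := (1-F(b_n))/f(b_n) = 1/\lambda(b_n)$, where $\lambda$ is the hazard rate. Then $(M_n - b_n)/a_n$ converges in distribution to a standard Gumbel random variable, whose mean is the Euler--Mascheroni constant $\gamma$, so (assuming uniform integrability, see below) $\expectation[M_n] = a_n\gamma + b_n + o(a_n)$.

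To obtain the explicit form in the claim, I would expand $1-F(x)$ for the Gamma$(\alpha,\beta)$ density via integration by parts:
\begin{equation}
1-F(x) \;=\; \frac{\beta^{\alpha-1}}{\Gamma(\alpha)} x^{\alpha-1} e^{-\beta x} \left( 1 + \frac{\alpha-1}{\beta x} + O(x^{-2}) \right),
\end{equation}
from which $\lambda(x) = f(x)/(1-F(x)) = \beta - (\alpha-1)/x + O(x^{-2})$, and therefore
\begin{equation}
a_n \;=\; \frac{1}{\lambda(b_n)} \;=\; \frac{1}{\beta}\left(1 + \frac{\alpha-1}{\beta\,F^{-1}(1-1/n)}\right) + O\!\left(b_n^{-2}\right).
\end{equation}
Plugging $a_n$ and $b_n$ back into $\expectation[M_n]\approx a_n\gamma + b_n$ and then into the decomposition above produces exactly the expression in \eqref{eq:gumbel}.

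The main obstacle is not the distributional limit itself but upgrading it to convergence in expectation: Fisher--Tippett gives weak convergence of $(M_n - b_n)/a_n$, whereas the claim is about $\expectation[M_n]$. I would close this gap by either invoking Pickands-type uniform integrability results for max-stable sequences in the Gumbel domain (the Gamma tail decays exponentially, so all moments of the normalized maximum are uniformly bounded), or by directly bounding $\expectation[(M_n-b_n)/a_n]^2$ via the Mill's-ratio expansion above and dominated convergence. A minor secondary issue is that the formula implicitly assumes $n\alpha \mid K$ so that the round count is an integer, and that actor compute overlaps perfectly with environment stepping; both assumptions are used implicitly in the claim's ``approaches'' wording and can be absorbed into the $o(1)$ error as $K,n\to\infty$.
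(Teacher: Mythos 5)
Your proposal follows essentially the same route as the paper's proof: decompose the total time into $K/(n\alpha)$ synchronization rounds, each costing the maximum of $n$ i.i.d.\ Gamma$(\alpha,\beta)$ round times plus the actor overhead (giving $Kc/n$ in total), and then approximate the expected maximum via the Gumbel extreme-value limit with $b_n = F^{-1}(1-\tfrac{1}{n})$ and $a_n = 1/\lambda(b_n)$. The only difference is one of detail: you derive the norming constants from the Gamma hazard-rate expansion and flag the weak-convergence-to-expectation gap, whereas the paper simply cites the extreme-value approximation $\expectation[\max_j Y_j] \simeq \frac{\gamma}{\beta}\bigl(1 + \frac{\alpha-1}{\beta F^{-1}(1-1/n)}\bigr) + F^{-1}(1-\tfrac{1}{n})$ as a known result and plugs it in.
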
}
\vsp{-0.3cm}
\begin{proof}
See supplementary material. 
\vsp{-0.3cm}
\end{proof}

We perform a simulation to verify the tightness of the derived expected runtime. 
As shown in \figref{fig:sim}(a,b), {where we assume the step time follows an exponential distribution with rate $\beta$, \ie, $\text{exp}(\beta)$,}
the derived approximation accurately fits the simulation results. {Note that the sum of $\alpha$ i.i.d.\ exponential random variables ($\text{exp}(\beta)$) is a Gamma$(\alpha, \beta)$ random variable. }
\figref{fig:sim}(a) shows that for a fixed synchronization interval ($\alpha=4$), the total runtime increases rapidly if the variance of the environment step time increases. To reduce runtime, we can perform batch synchronization, \ie, $\alpha > 1$. As shown in~\figref{fig:sim}(b) for a rate $\beta=2$, the total runtime decreases when we increase the synchronization interval $\alpha$.
Note, when $\alpha \geq \frac{K}{n}$,~\equref{eq:gumbel} returns the time of an asynchronous system.

While fully asynchronous systems enjoy shorter runtime, the stale policy issue worsens when the number of actors increases.  Formally, we let $L$ denote the latency between behavior policy and target policy in GA3C and IMPALA. We obtain the following result for the expected latency $\expectation[L]$. 

{\begin{restatable}{claim}{queue}{Consider asynchronous parallel actor-learner systems, such as GA3C and IMPALA. Suppose the system has $n$ actors and each actor is sending data to the data queue following an i.i.d.\ Poisson distribution with rate $\lambda_0$. The learners consume data following an exponential distribution with rate $\mu$. Let $L$ denote the latency between the behavior policy and the target policy. Then, we have $\expectation[L] = \frac{n\rho_0}{1 - n\rho_0}$, where utilization $\rho_0 = \frac{\lambda_0}{\mu}$.
}
\label{clm:queue}\end{restatable}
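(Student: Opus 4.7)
The setup is a classical M/M/1 (or M/M/$c$, but let us first treat the single-learner case) queueing system in disguise. The plan is to (i) aggregate the $n$ actor arrivals into a single Poisson stream, (ii) identify the policy lag $L$ with the steady-state number of items in the queueing system, and (iii) quote the Pollaczek--Khinchine / Little's law consequence for M/M/1 to obtain $\mathbb{E}[L] = \rho/(1-\rho)$ with $\rho = n\rho_0$.

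First I would observe that since each actor sends data to the shared queue according to an independent Poisson process with rate $\lambda_0$, the superposition of $n$ such processes is itself a Poisson process with aggregate rate $\lambda = n\lambda_0$. Combined with the assumption that the learner's service (consumption) times are i.i.d.\ exponential with rate $\mu$, the arrival/service mechanism at the queue is exactly an M/M/1 queue with utilization $\rho = \lambda/\mu = n\rho_0$. Stability (hence finite expected latency) requires $n\rho_0 < 1$, which is implicitly assumed by the statement.

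Next I would argue that the latency $L$ between behavior policy and target policy can be identified with the number of items in the system in steady state. The reasoning: a datum entering the queue was generated under the then-current policy $\pi_{\theta_{j-k}}$, and each time the learner consumes one datum it performs one parameter update (advancing $j$ by one). Thus the number of data items still ahead of a tagged datum (including itself) at the moment it is consumed equals, in distribution, the stationary number-in-system of the M/M/1 queue (by PASTA, Poisson Arrivals See Time Averages). Invoking the standard M/M/1 formula $\mathbb{E}[\text{number in system}] = \rho/(1-\rho)$ yields $\mathbb{E}[L] = \frac{n\rho_0}{1 - n\rho_0}$.

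The main obstacle is the second step: pinning down precisely what "latency" means and justifying the identification with the M/M/1 number-in-system. A careful version would set up the M/M/1 via the birth--death chain with birth rate $n\lambda_0$ and death rate $\mu$, solve for the stationary distribution $\pi_k = (1-\rho)\rho^k$ (geometric with parameter $\rho = n\rho_0$), and then compute $\mathbb{E}[L] = \sum_{k \geq 0} k\pi_k = \rho/(1-\rho)$; combined with a PASTA argument this delivers the claim. The derivation of the geometric stationary distribution is routine (detailed balance $\pi_{k+1} \mu = \pi_k n\lambda_0$), so the only genuinely delicate point is the modeling identification, which I would spell out explicitly before compressing the algebra.
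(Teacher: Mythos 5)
Your proposal is correct and follows essentially the same route as the paper: both model the aggregated arrivals as a Poisson stream of rate $n\lambda_0$ into an M/M/1 queue with service rate $\mu$, identify the policy lag with the stationary queue length, derive the geometric stationary distribution from the (detailed) balance equations under the stability condition $n\rho_0 < 1$, and take its mean to get $\frac{n\rho_0}{1-n\rho_0}$. Your PASTA remark just makes explicit the identification of latency with number-in-system that the paper states as an observation.
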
}
\vsp{-0.3cm}
\vspace{-0.4cm}
\begin{proof}
See supplementary material. 
\vsp{-0.3cm}
\end{proof}

In the GFootball environment, an actor   generates about $\lambda_0 = 100$ frames per second while a reasonable learner  consumes $\mu = 4000$ frames per second.
 The expected latency for this setting is shown in \figref{fig:sim}(c). 
With few actors, the latency is small. However, this means the learners are mostly idle. Note that  the latency grows rapidly when the number of actors grow. This large {policy lag} leads to unstable training. In contrast, the latency of  HTS-RL is always one regardless of the number of deployed actors. As a result, HTS-RL enjoys stable training and good data efficiency.

\vspace{-0.1cm}
\section{Experiments}
\label{sec:exp}
\vspace{-0.1cm}
\noindent\textbf{Environments.} We evaluate the proposed approach on a subset of the Atari games~\cite{atari, gym} and all 11 academy scenarios of the recently introduced Google Research Football (GFootball)~\cite{gfootball} environment. For the Atari environment we use raw images as agent input. For GFootball, 
the input is either raw images or an `extracted map' which is a simplified spatial representation of the current game state. . 

\noindent\textbf{Experimental Setup.} We compare the proposed approach with IMPALA, A2C, and PPO. For IMPALA, we use the Torch Beast implementation~\cite{torchbeast}, which reports  better performance than the official IMPALA implementation~\cite{impala}. For A2C and PPO, we use the popular synchronous Pytorch implementation of~\citet{pytorchrl}. { For both Atari and GFootball experiments, we use the same models as~\citet{torchbeast} and~\citet{gfootball}. We follow their default hyperparameter settings. 
Please see the supplementary material for details on hyperparameter settings and model architectures.  {Note, we study parallel RL training on a single machine while IMPALA is a distributed training algorithm, which performs RL training across multiple machines. For a fair comparison, we use 16 parallel environments on a single machine for all methods. Importantly, while being downscaled to one machine, the reported IMPALA results match the results reported in the original papers~\cite{torchbeast, gfootball}.} 
Following~\citet{pytorchrl}, all methods are trained for 20M environment steps in the Atari environment. For GFootball, following~\citet{gfootball}, we use 5M  steps. 

\noindent\textbf{Evaluation protocol.} To ensure a rigorous and fair evaluation, 
\citet{Henderson17, Colas18} suggest reporting the `\emph{final metric}.'  The \emph{final metric} is 
an average over the last 100 evaluation episodes, \ie, 10 episodes for each of the last ten policies during training. 
However,~\citet{Henderson17, Colas18} don't consider runtime measurements as a metric. To address this, 
we introduce the `\emph{final time metric}' and the `\emph{required time metric}.' The `\emph{final time metric}' reports the \emph{final  metric} given a limited training time. Training is terminated when the time limit is reached, and the final  metric at this point of time is the `\emph{final time  metric}.' The \emph{required time  metric} reports the runtime required to achieve a desired average evaluation episode reward. The average evaluation reward is the running average of the most recent 100 evaluation episodes, \ie, 10 episodes for each of the most recent ten policies. 
All experiments are repeated for five runs with different random seeds. All the plots in~\figref{fig:plot} include mean of the five runs and the $95\%$ confidence interval obtained by using the Facebook Bootstrapped implementation with 10,000 bootstrap samples. For Atari experiments, we follow the conventional `no-op' procedure, \ie, at the beginning of each evaluation episode, the agents perform up to 30 `no-op' actions. 

\begin{table*}[t]
\centering
\begin{minipage}[t]{.48\textwidth}

\setlength{\tabcolsep}{4pt}
{\footnotesize
\resizebox{\textwidth}{!}{
\begin{tabular}{l|rrr}
\specialrule{.15em}{.05em}{.05em}
         Method &  IMPALA  &      A2C &  Ours (A2C)\\
\hline
\hline
BankHeist&  339 $\pm$ 10 &   775 $\pm$ 166 &     \bf{942 $\pm$ 100}   \\
Beam Rider&  4000 $\pm$ 690 &   4392 $\pm$ 134 &     \bf{6995} $\pm$ 420  \\
Breakout&   201 $\pm$ 133 &    362 $\pm$ 29 &      \bf{413 $\pm$ 37}  \\
Frostbite&    73 $\pm$ 2 &    272 $\pm$ 14&      \bf{315 $\pm$ 12}  \\
Jamesbond&    82 $\pm$ 10 &    438 $\pm$ 59 &      \bf{474 $\pm$ 88}  \\
Krull &  2546 $\pm$ 551 &   7560 $\pm$ 892 &     \bf{7737 $\pm$ 609}  \\
KFMaster&  9516 $\pm$ 3311 &  \bf{30752 $\pm$ 6641} &    30020 $\pm$ 3559  \\
MsPacman&   807 $\pm$ 170 &   1236 $\pm$ 292 &     \bf{1675 $\pm$ 459}  \\
Qbert &  4116 $\pm$ 610 &   12479 $\pm$ 1965 &    \bf{13682 $\pm$ 1873}  \\
Seaquest&   458 $\pm$ 2  &   \bf{1833 $\pm$ 6} &     1831 $\pm$ 7 \\
S. Invader &   \bf{1142} $\pm$ 207&    596 $\pm$ 69 &      {731} $\pm$ 80  \\
Star Gunner&  8560 $\pm$ 918 &  41414 $\pm$ 3826 &    \bf{52666 $\pm$ 5182}  \\

\specialrule{.15em}{.05em}{.05em}
\end{tabular}
}
\vsp{-0.1cm}
\captionsetup{margin=0.3cm}
\caption{{Atari experiment in \emph{final time metrics}: }Average evaluation rewards achieved given limited training time. }
\label{tab:atari_results}
}

\end{minipage}
\raisebox{0.0cm}{
\begin{minipage}[t]{.50\textwidth}
\setlength{\tabcolsep}{4pt}
{\footnotesize
\resizebox{\textwidth}{!}{
\begin{tabular}{l|rrr}
\specialrule{.15em}{.05em}{.05em}
         Method & IMPALA  &      PPO &  \makecell{Ours (PPO)}\\
\hline
\hline
{Empty goal close} & 1.7/2.6&    5.4/15.5 &      \bf{1.0/2.0}  \\
                         
{Empty goal}  & 8.4/11.7&    12.8/19.2 &      \bf{2.0/3.9}  \\
                           
{Run to score}  & 27.0/34.6&    16.2/32.5 &      \bf{6.3/11.4}  \\             
                                       
{RSK}  & 52.3/-&    51.2/68.2 &      \bf{11.5/18.8} \\

{PSK}  & -/-&    70.0/- &      {\bf{38.8}}/-  \\

{RPSK} & 22.3/\bf{25.4}&    45.2/90.8 &      {\bf{13.5}}/27.1  \\
     
{3 \vs 1 w/ keeper}  & -/-&    67.4/144.2 &      \bf{15.9/25.6}\\
    
{Corner}  & -/-&    -/- &      -/- \\
                   
{Counterattack easy} & -/-&    223.2/- &      {\bf{91.3}}/- \\
                        
{Counterattack hard}  & -/-&    383.4/- &      \bf{61.8/-}  \\
{11 vs 11 w/ lazy Opp.}  &58.2/-&    95.8/260.9 &      \bf{14.4/72.1} \\
\specialrule{.15em}{.05em}{.05em}
\end{tabular}
}
\vspace{-0.1cm}
\captionsetup{margin=0.3cm}
\caption{GFootball results in \emph{required time metrics:} required time (minutes) to achieve  scores (time  to achieve score 0.4 / time  to achieve score 0.8). 
}
\label{tab:football_results}
}
\end{minipage}
}
\vsp{-0.1cm}
\end{table*}
\begin{figure*}[t]
\vsp{-0.2cm}
\centering
\begin{tabular}{cccc}
\includegraphics[width=0.25\textwidth]{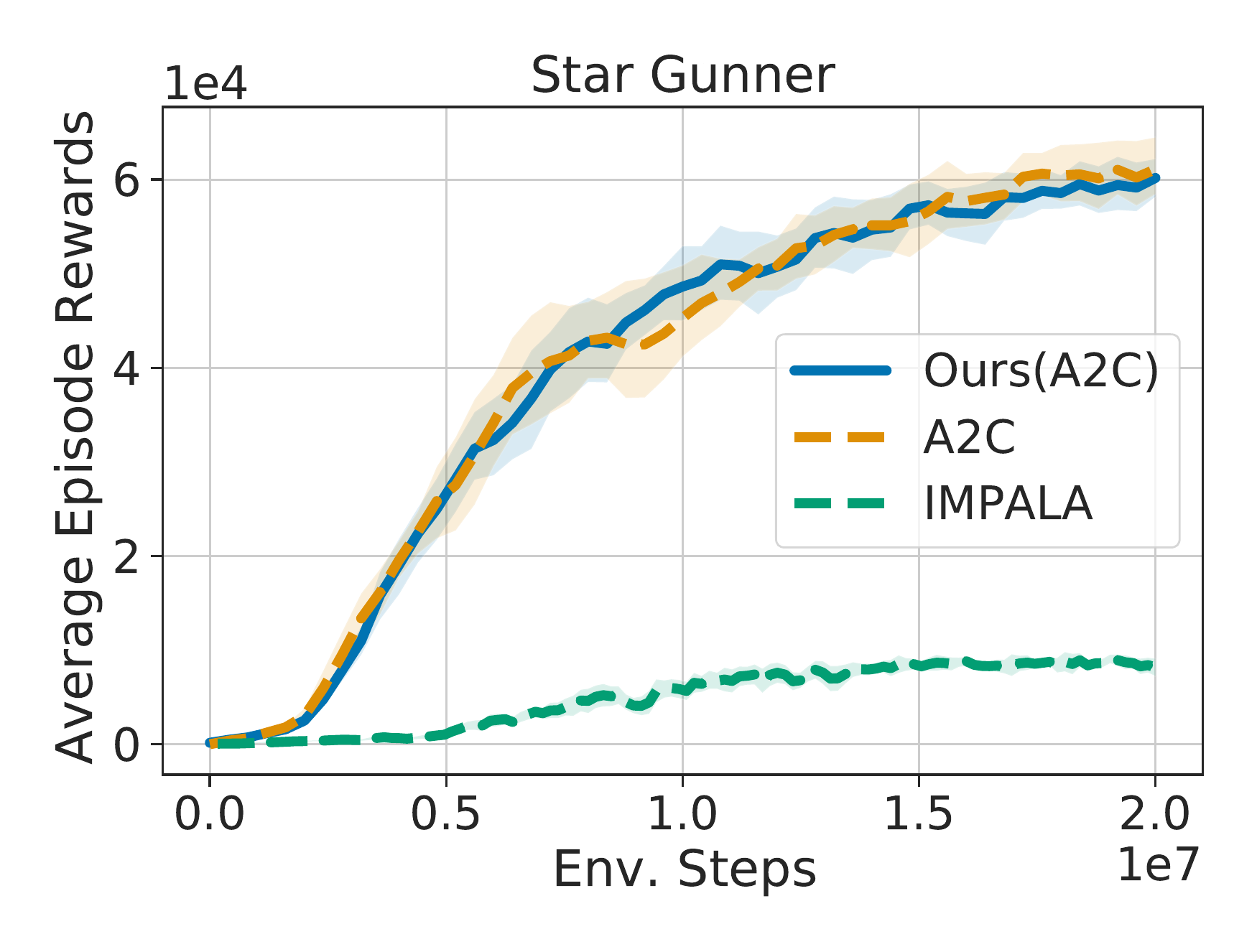}
&
\hsp{-0.6cm}
\includegraphics[width=0.25\textwidth]{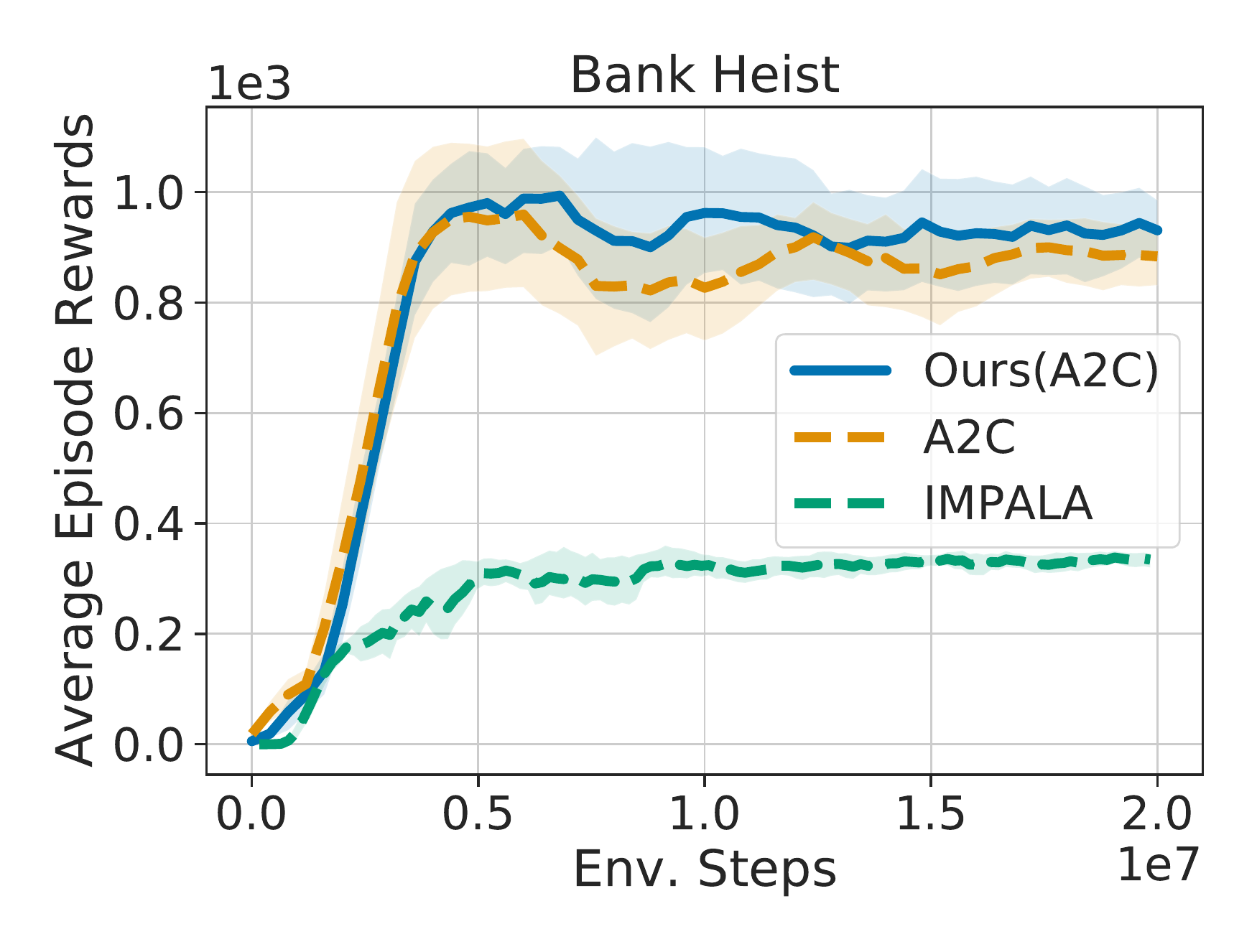}
&
\hsp{-0.6cm}
\includegraphics[width=0.25\textwidth]{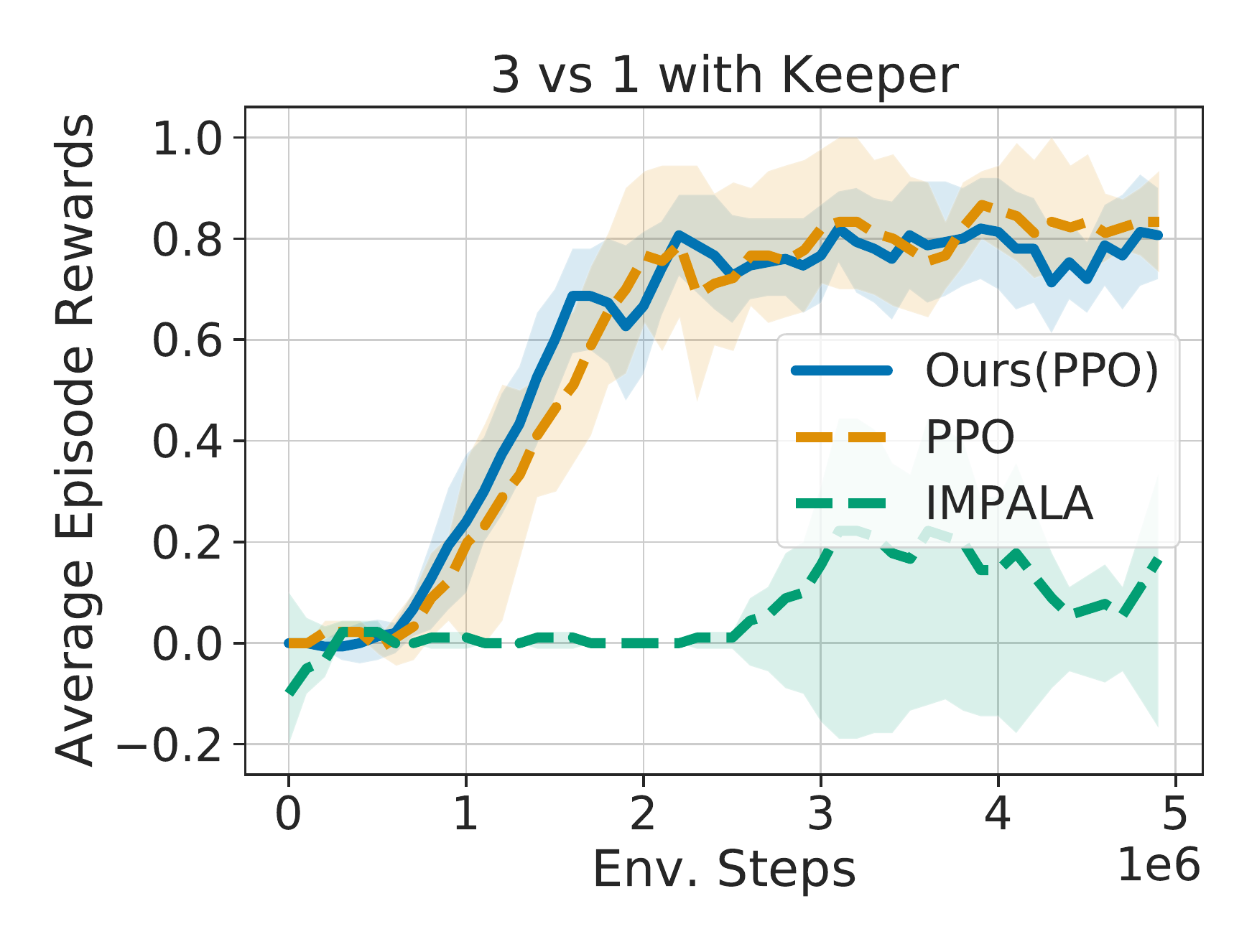}
&
\hsp{-0.6cm}
\includegraphics[width=0.25\textwidth]{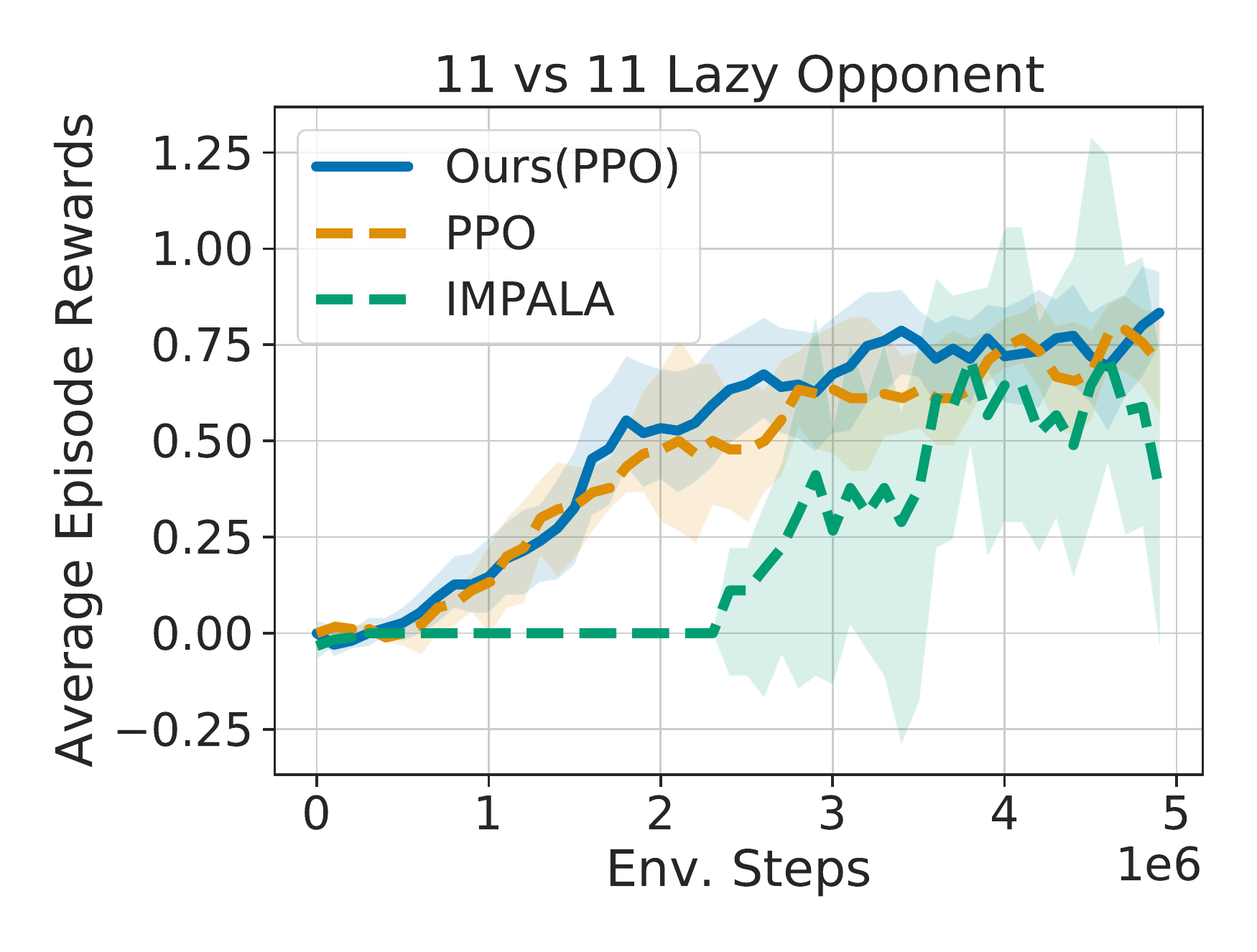}
\\[-0.1cm]
\includegraphics[width=0.25\textwidth]{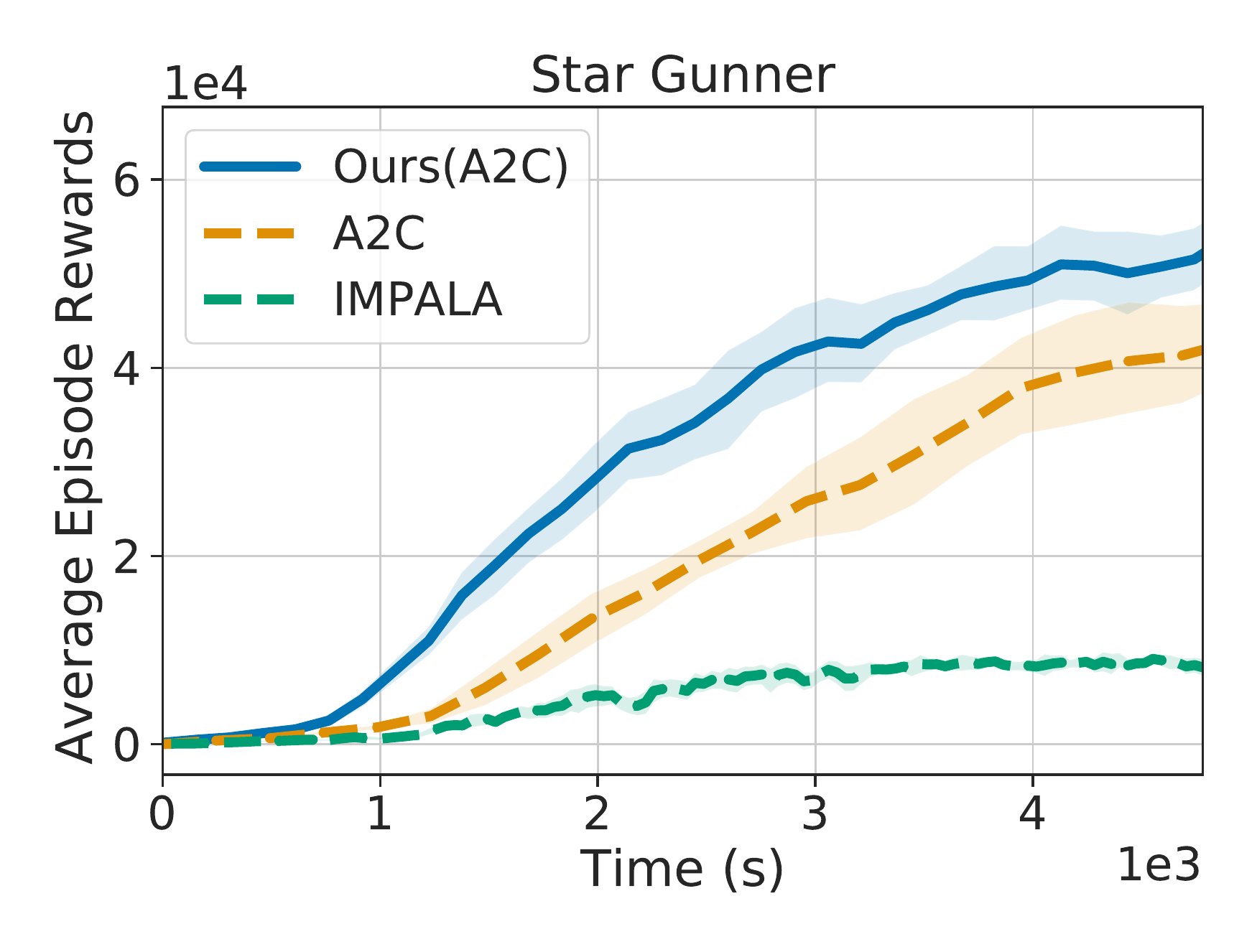}
&
\hsp{-0.6cm}
\includegraphics[width=0.25\textwidth]{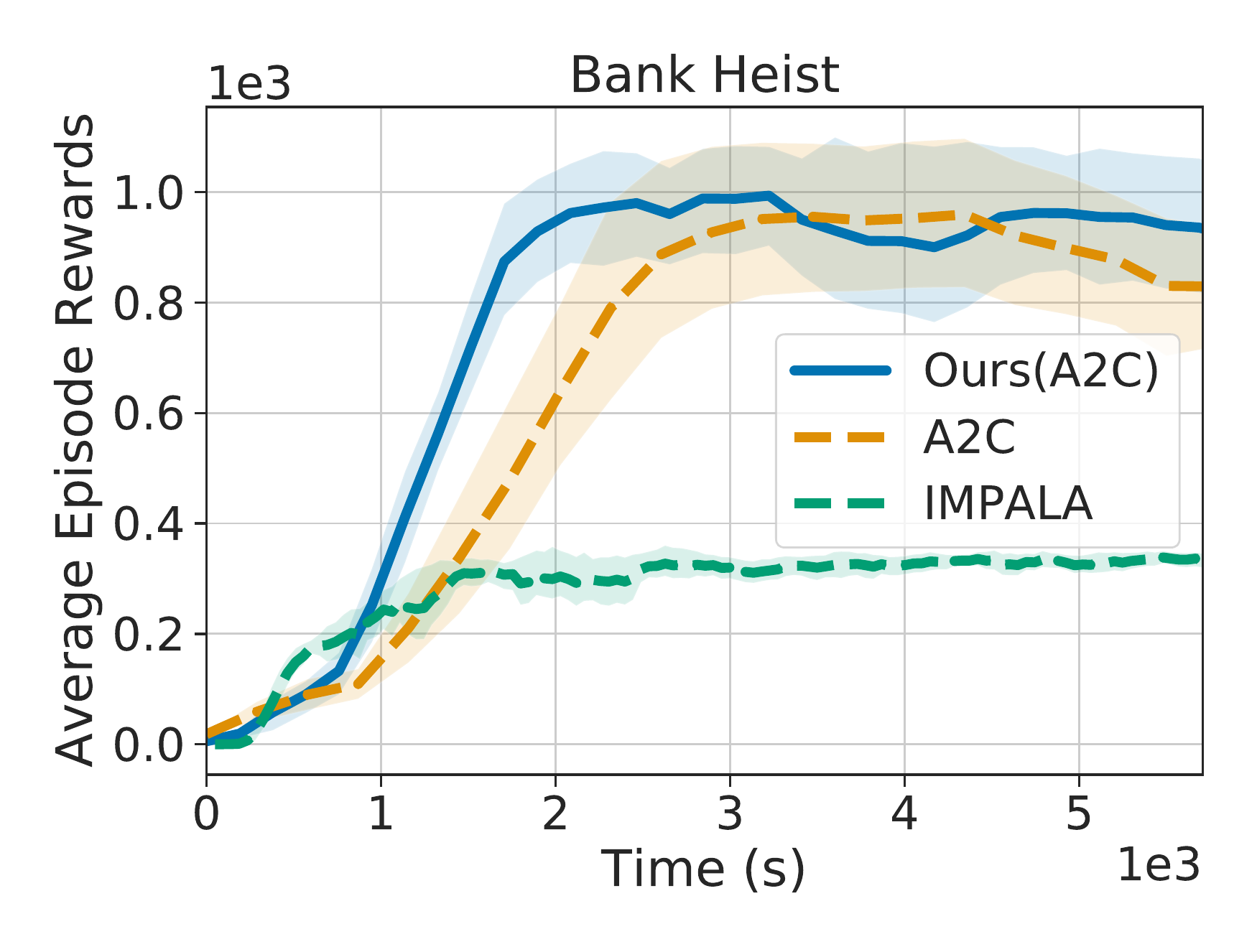}
&
\hsp{-0.6cm}
\includegraphics[width=0.25\textwidth]{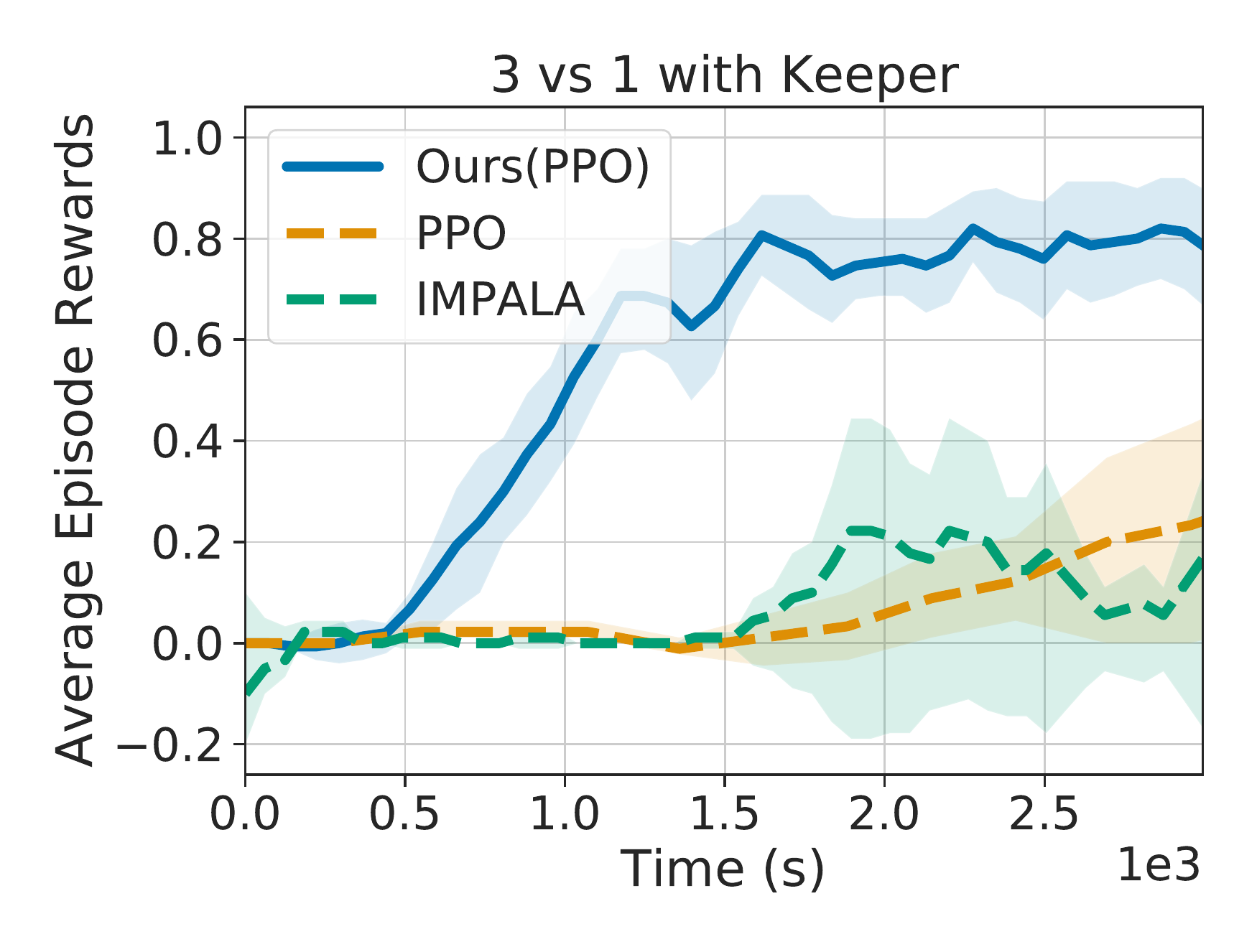}
&
\hsp{-0.6cm}
\includegraphics[width=0.25\textwidth]{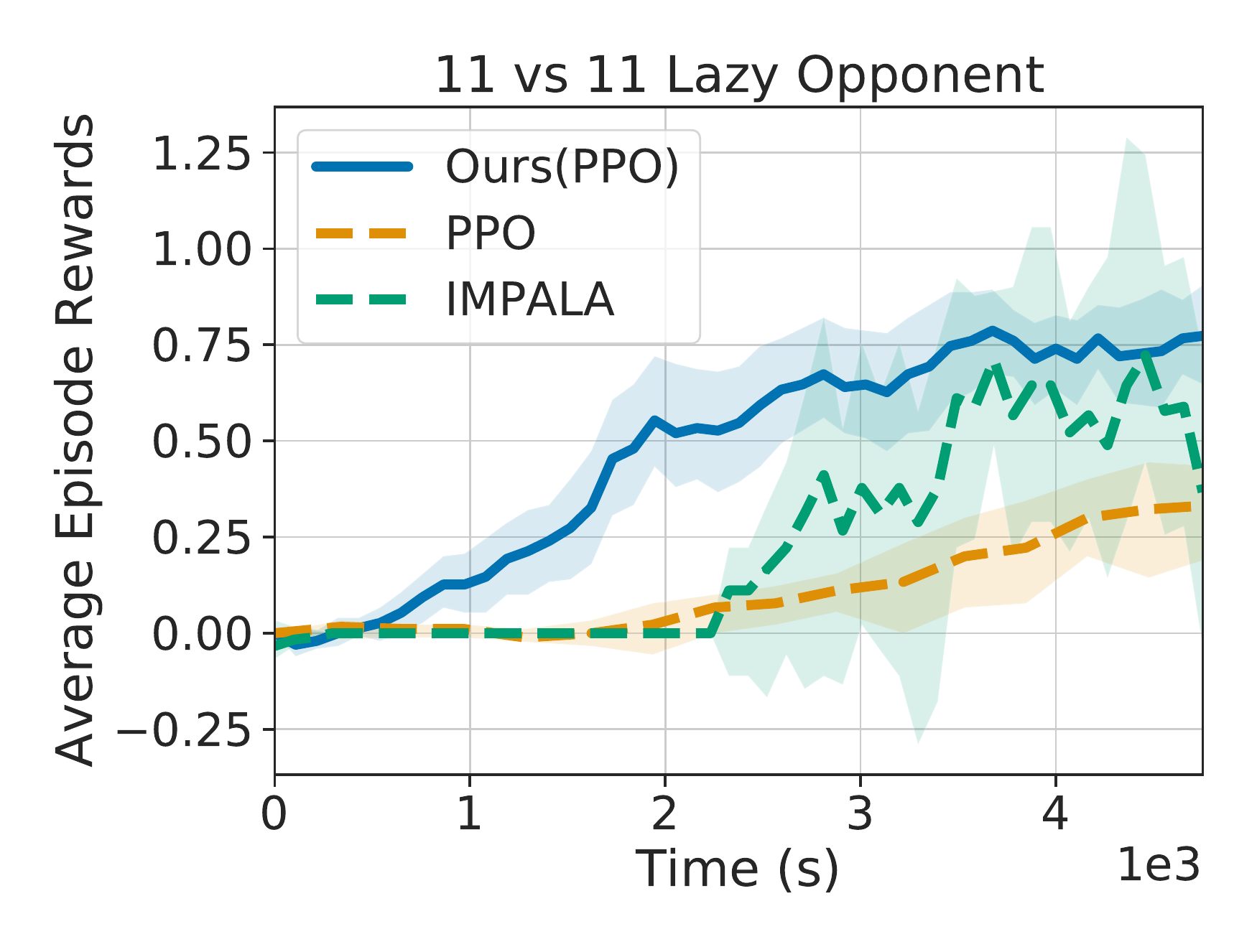}
\end{tabular}
\vsp{-0.2cm}
\caption{Training curves. {\bf{Top row:}} reward \vs environment steps. {\bf{Bottom row:}} reward \vs  time. 
} 
\label{fig:plot}
\vsp{-0.35cm}
\end{figure*}

\noindent\textbf{Results.} To confirm that~\clmref{clm:time} holds 
for complex environments such as Atari and GFootball, we study the speedup ratio of  HTS-RL to  A2C/PPO baselines when the step time variance changes. As shown in~\figref{fig:sps}(left), in environments with small variance, HTS-RL is around $1.5\times$ faster than the baselines. In contrast, in environments with large step time variance, HTS-RL is more than $5\times$ faster. 

To ensure the scalability of HTS-RL, we investigate how the throughput, \ie, the steps per second (SPS), changes when the number of environments increases. For this we conduct experiments on 
the GFootball `counterattack hard' scenario, which has the longest step time and largest step time variance among all GFootball scenarios. As shown in~\figref{fig:sps}(right), the throughput of HTS-RL with PPO increases almost linearly when the number of environments increases. In contrast, for the synchronous PPO baseline, the throughput increase is marginal. 

To study  throughput and data efficiency  on rewards, 
we speed up A2C with HTS-RL,  and compare to baseline A2C and IMPALA on Atari games. \tabref{tab:atari_results} summarizes the  \emph{final time  metric} results. The time limit for all experiments is set to the time when the fastest method, IMPALA, finishes training,  \ie, when it reached the 20M  environment step limit.  As shown in~\tabref{tab:atari_results}, given the same amount of time, HTS-RL with A2C achieves significantly higher rewards than IMPALA and A2C baselines.   

We also speedup PPO with HTS-RL  and compare with baseline PPO and IMPALA on GFootball environments. \tabref{tab:football_results}  summarizes the results using the \emph{required time  metric}. We reported the required time to achieve an average score of $0.4$ and $0.8$. Note that because each episode terminates when one of the teams scores, the maximum possible reward for GFootball scenarios is $1.0$. The results are summarized in \tabref{tab:football_results}, where `-' indicates that the method did not achieve {an average score of $0.4$ and $0.8$} after 5M environment step training. 
As shown in~\tabref{tab:football_results}, our HTS-RL achieves the target score, \ie, $0.4$ and $0.8$ significantly faster than the PPO and IMPALA baselines. Please see the supplementary material for more comparisons, all training curves, and results of final  metric and required time  metric. 
\begin{table}[]

\centering
{\small
\begin{tabular}{lrr}
\specialrule{.15em}{.05em}{.05em}
          &      \makecell{1 Agent \\(Raw Image)} &  \makecell{3 Agents \\(Raw Image)}\\
\hline
\hline
 Ours (PPO) &      0.30$\pm$ 0.11 &      \bf{0.63$\pm$ 0.15}  \\
\specialrule{.15em}{.05em}{.05em}
\end{tabular}

\caption{Average game score of 8M step multi-agent training with raw image input on `3 \vs 1 with keeper' (\emph{final metrics}).
}
\label{tab:img}
}
\end{table}

\figref{fig:plot} shows the training curves for both Atari and Gfootball experiments. The first row of~\figref{fig:plot} shows that HTS-RL achieves similar data efficiency than baseline synchronous methods. In contrast, asynchronous baseline IMPALA has much lower data efficiency.  The second row of~\figref{fig:plot} shows that  HTS-RL achieves higher reward than A2C, PPO and IMPALA in shorter training time.

Thanks to  high throughput and  stability, 
we can demo HTS-RL's potential on more complex RL tasks. Specifically, we scale HTS-RL to training of multiple players in the `3 \vs 1 with keeper' scenario of GFootball from only raw image input. To the best of our knowledge, 
this is the first work to train multiple agents \textit{from only raw image input} in this environment. The results are summarized in~\tabref{tab:img}. Training on three agents achieves higher scores than training on a single agent. Note, we only use one machine with 4 GPUs. 

{
\noindent\textbf{Ablation study.}
\begin{table}[]
\centering
\setlength{\tabcolsep}{4pt}
{\small
\begin{tabular}{l|rrrr}
\specialrule{.15em}{.05em}{.05em}
  Number of Actors       &  1  &   4 & 8 & 16 \\
\hline
\hline

Steps per second (SPS) & 1229 & 1362 & 1393 & 1388  \\
Average scores & 0.82 & 0.82 & 0.82 & 0.82  \\

 \specialrule{.15em}{.05em}{.05em}
\end{tabular}

\caption{Different number of actors on `3 \vs 1 with keeper'. 
Average scores: average score of 100 evaluation episodes.
}
\label{tab:n_actors}
}
\end{table}
\begin{table}[]
\centering
\setlength{\tabcolsep}{4pt}
{\small
\begin{tabular}{l|rrrrrr}
\specialrule{.15em}{.05em}{.05em}
  Synchronization Interval       &  4  &   16 & 64 & 128 & 256 & 512 \\
\hline
\hline

Steps per second (SPS) & 445 & 1070 & 1345 & 1349 & 1360 & 1377  \\
Average scores & 0.82 & 0.81 & 0.81 & 0.82 & 0.81 & 0.83 \\

 \specialrule{.15em}{.05em}{.05em}
\end{tabular}

\caption{Different synchronization interval on `3 \vs 1 with keeper'. 
 Average scores: average score of 100 evaluation episodes.
}
\label{tab:sync_interval}
}
\end{table}
To investigate how the number of actors impacts the performance of HTS-RL, we run HTS-RL with 1, 4, 8, and 16 actors on `3 \vs 1 with keeper'.  The results are summarized in~\tabref{tab:n_actors}, where `average scores' gives the average score of 100 evaluation episodes. The SPS improvement when using more than 4 actors is marginal, because the GFootball game engine process time dominates. That is, when using more than four actors,  most of the time, actors are waiting for response from the GFootball game engine. Note that thanks to full determinism of HTS-RL, different actor numbers have identical final average scores. 

To study the synchronization interval's impact on the performance of HTS-RL, we use synchronization intervals 4, 16, 64, 128, 256, 512 on the `3 \vs 1 with keeper' task. 
As shown in \tabref{tab:sync_interval}, longer synchronization intervals gives higher throughput, which further confirms~\clmref{clm:time} and simulation results in~\figref{fig:sim}(b). Note HTS-RL achieves consistent high average scores while the synchronization interval changes. 
}

\vsp{-0.2cm}
\section{Conclusion}
\vsp{-0.2cm}
We develop High-Throughput Synchronous RL (HTS-RL). It achieves a high throughput while maintaining data efficiency. For this HTS-RL performs batch synchronization, and  concurrent rollout and learning. 
Moreover, HTS-RL avoids the `stale-policy' issue. 

\noindent\textbf{Acknowledgements:} 

This work is supported in part by NSF under Grant $\#$ 1718221, 2008387 and MRI $\#$1725729, NIFA award 2020-67021-32799, UIUC, Samsung, Amazon, 3M, Cisco Systems Inc.\ (Gift Award CG 1377144), and a Google PhD Fellowship to RY. We thank Cisco for access to the Arcetri cluster.

\clearpage
\section*{Broader Impact}
We think artificial intelligence (AI) algorithms can significantly improve people's lives and should be accessible to everyone. 
This paper introduces a high-throughput system which permits efficient deep RL training on a single machine. We believe efficient training of deep RL with limited computational resources is critical to make artificial intelligence more accessible, particularly for people and institutions who can not afford a costly distributed system. 

In the past decade, deep RL achieved impressive results on complex tasks such as GO and 3D video games. However, those achievements rely on a huge amount of computing resources, \eg, AlphaGo~\cite{Silver17} requires 1920 CPUs and 280 GPUs. In contrast, regular personal desktops have only 4 to 16 CPUs and no more than 4 GPUs. As a result, deep RL's huge demand for computational resources makes it a research privilege for large companies and institutions which can afford these resources. 

This paper serves as a step toward making deep RL AI accessible to everyone. With the proposed HTS-RL it is possible to train deep RL agents on 3D video games using a regular desktop with 16 CPUs and 4 GPUs instead of requiring a costly distributed system.

{\small
\bibliographystyle{abbrvnat}
\bibliography{example}
}

\clearpage

\appendix
\renewcommand{\thetable}{A\arabic{table}}
\setcounter{table}{0}
\setcounter{figure}{0}
\setcounter{equation}{0}
\renewcommand{\thetable}{A\arabic{table}}
\renewcommand\thefigure{A\arabic{figure}}
\renewcommand{\theHtable}{A.Tab.\arabic{table}}
\renewcommand{\theHfigure}{A.Abb.\arabic{figure}}
\renewcommand\theequation{A\arabic{equation}}
\renewcommand{\theHequation}{A.Abb.\arabic{equation}}

\newcommand*{\dictchar}[1]{
    \clearpage
    \twocolumn[
    \centerline{\parbox[c][3cm][c]{\textwidth}{
            \centering
            \fontsize{14}{14}
            \selectfont
            {#1}}}]
}

\onecolumn
{\centering \Large \textbf{Appendix: High-Throughput Synchronous Deep RL}}

In this appendix we first provide the proofs for \clmref{clm:time} (\secref{sec:supp_proof_time}) and \clmref{clm:queue} (\secref{sec:supp_proof_queue}). We then discuss delayed gradient updates (\secref{sec:supp_dg}), additional ablation studies (\secref{sec:supp_abla}), comparison with additional baselines(\secref{sec:supp_baseline}), implementation details (\secref{sec:supp_implement}), metrics (\secref{sec:supp_metrics}) and provide all the training curves (\secref{sec:supp_curves}).

\section{Proof of \clmref{clm:time}}
\label{sec:supp_proof_time}

\time*
\begin{proof}[Proof]
As the environments synchronize every $\alpha$ steps, we need $\frac{K}{n\alpha }$ synchronizations to finish the $K$ steps. Let $T_{l}$ denote the time required for the $l^{\text{th}}$ synchronization. We have  
$\expectation[T_\text{total}^{n,K}] = \sum_{l=1}^{\frac{K}{\alpha n}}\expectation[T_l]$. 
Note $\expectation[T_l]  = \expectation[\max_{j} \sum_{i=1}^{\alpha}X_i^{(j)}] {+\alpha c} ~\forall~l$. 
By assumption we know that $Y_j \triangleq \sum_{i=1}^{\alpha}X_i^{(j)}$ follows a gamma distribution with shape $\alpha$ and rate $\beta$. 
By extreme value theory~\cite{Fisher28, Hann06}, suppose $X_l^{(j)} \sim \text{Gamma}(\alpha, \beta)$, then $\expectation[\max_{j} X_l^{(j)}] \simeq \frac{\gamma}{\beta} (1 + \frac{\alpha - 1}{\beta F^{-1}(1 - \frac{1}{n})}
)  + F^{-1}(1 - \frac{1}{n})$, where $F^{-1}=\inf\{x \in {\cal R}: F(x) \geq q\}$. $F(x)$ is the CDF of Gamma$(\alpha, \beta)$, and $\gamma$ is the Euler-Mascheroni constant.  By plugging the obtained approximation into $\sum_{l=1}^{\frac{K}{\alpha n}}\expectation[T_l]$, the result follows.
\end{proof}

\begin{figure}[b]
\centering
\includegraphics[width=0.4\textwidth]{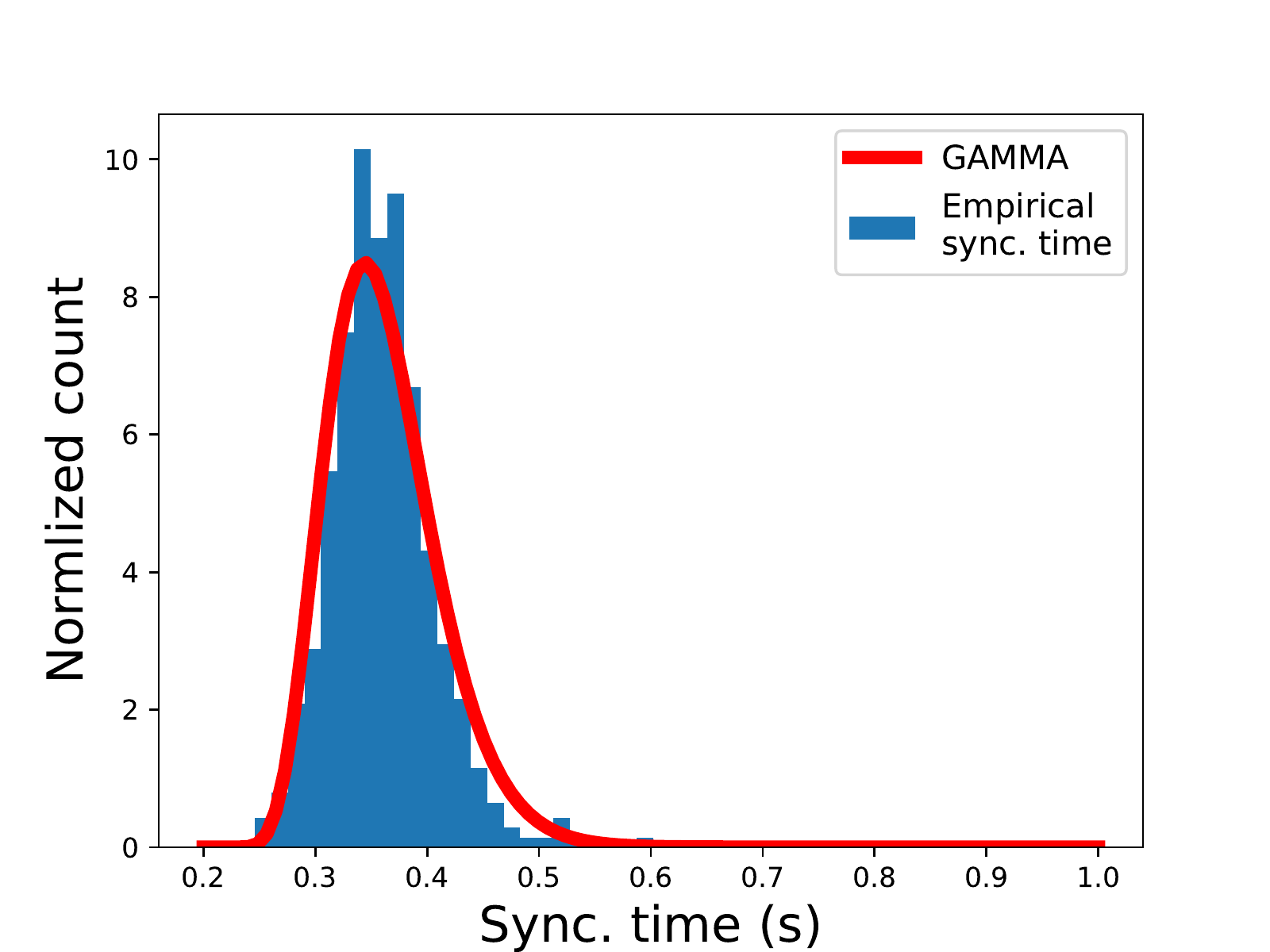}
\caption{Empirical synchronization time.}
\label{fig:exp_plot}
\end{figure}

{In~\clmref{clm:time}, we assume the sum of steptimes (synchronization time) to follow a Gamma distribution. We empirically verify this assumption. In~\figref{fig:exp_plot}, we show the histogram of synchronization time (sum of every 100 step times) on `3 \vs 1 w/ keeper' Furthermore, we perform a Kolmogorov-Smirnov goodness-of-fit test, with a significance-level of 0.05 and D-statistics of 0.04. We find  the empirical data is consistent with the assumed Gamma distribution.
}

\section{Proof of \clmref{clm:queue}}
\label{sec:supp_proof_queue}
\queue*
\begin{proof}
Observe that the latency $L$ is equal to the length of the data queue. $n$ actors send data to the queue with rate $n\lambda_0$ in total. Let $P_i$ denote the probability that there are $i$ data points in the queue. To be stable, the system must satisfy the balance equations
\begin{eqnarray}
n\lambda_0P_0 &=& \mu P_1 \label{eq:p0}\\
(n\lambda_0 + \mu)P_j &=& n\lambda_0P_{j-1} + \mu P_{j+1}, \quad\quad j \geq 1\label{eq:pj}.
\end{eqnarray}
Note~\equref{eq:p0} and~\equref{eq:pj} reduce to 
\begin{equation}
(n\lambda_0)P_j = \mu P_{j+1}, j \geq 0,
\end{equation}
or $P_{j+1} = n\rho_0P_j, j \geq 0$ from which we recursively obtain 
\begin{equation}
P_j = (n\rho_0)^jP_0.
\end{equation}
Using the fact that $1 = \sum_{j=0}^{\infty}P_j = P_0\sum_{j=0}^{\infty}(n\rho_0)^j$, we observed that there is a solution if and only if $n\rho_0 < 1$, in which case $1=P_0(1-n\rho_0)^{-1}$, or
\begin{equation}
P_0 = 1 - n\rho_0.
\end{equation}
Therefore, we have 
\begin{equation}
P_j = (n\rho_0)^j(1 - n\rho_0),
\end{equation}
which follows a geometric distribution with success probability $(1 - n\rho_0)$.  Therefore, we have $\expectation[L] = \frac{n\rho_0}{1 - n\rho_0}$ which concludes the proof.
\end{proof}

\section{Delayed Gradient}
\label{sec:supp_dg}
A delayed stochastic gradient descent performs the following update:
$\theta_t = \theta_{t-1} - \alpha_t \nabla \ell(x_{t-\tau}; \theta_{t-\tau})$. The algorithm is identical to the standard stochastic gradient descent, except that gradients are delayed by a time step of $\tau$. 

Consider a loss function of the form
\begin{equation}
\cL(\theta) \triangleq \sum_{t=1}^T \ell(x_t; \theta).
\end{equation}
We are interested in analyzing the convergence rate of $\theta$ to the optimal parameters $\theta^* \triangleq \arg\min_\theta \cL(x_t; \theta)$. 
Following~\citet{langford2009slow}, we assume the following:
(a) $\ell$ is convex, (b) $L$-Lipschitz, \ie, $\norm{\nabla(\ell(x, \theta))} \leq L$, (c) $x_t$ is drawn i.i.d. following a uniform distribution from a finite set $X$, (d) $\max_{x, x' \in X} \frac{1}{2}\|x-x'\|^2 \leq F^2$, where $F$ is a constant, and (e) the learning rate of the delayed stochastic gradient descent is $\frac{\sigma}{\sqrt{t-\tau}}$, where $\sigma^2 = \frac{F^2}{2 \tau L^2}$,
then
\begin{equation}\sum_{t=1}^T \ell(x_t, \theta_t) - \ell(x_t, \theta^*) \leq 4FL\sqrt{\tau T}.
\end{equation}

Dividing both sides by $T$, we have
\begin{equation}
\frac{1}{T}\sum_{t=1}^T \ell(x_t, \theta_t) - \ell(x_t, \theta^*) \leq 4FL\sqrt{\frac{\tau}{T}}.
\end{equation}
Stated differently, the convergence rate is $O(\sqrt{\frac{\tau}{T}})$. For HTS-RL with on-policy RL algorithms, the delay is guaranteed to be one, \ie, $\tau = 1$. Therefore, the convergence rate is $O(\sqrt{\frac{1}{T}})$. {Note that in practice the aforementioned assumptions are typically not met due to the use of deep nets.}

{
\section{Ablation Study}
\label{sec:supp_abla}
\
\begin{table}[]
\centering
\setlength{\tabcolsep}{4pt}
{\small
\begin{tabular}{l|ccc}
\specialrule{.15em}{.05em}{.05em}
       &  Our Delayed Gradient  &   Truncated I.S. & No Correction \\
\hline
\hline

BankHeist & {\bf 987} & 881 & 877 \\
Breakout & {\bf 415} & 390 & 402 \\
Seaquest & {\bf 1831} & 1827& 1784 \\

 \specialrule{.15em}{.05em}{.05em}
\end{tabular}

\caption{Average episode rewards of our delayed-gradient, truncated importance sampling, and no correction on  Atari games.
}
\label{tab:delayed_grad}
}
\end{table}

\subsection{Delayed Gradient}
In addition to the convergence rate bound of delayed gradient, we verify the effectiveness of delayed gradient empirically. We run HTS-RL with (1) delayed gradient, (2) truncated importance sampling, and (3) no correction on multiple Atari games. The results are summarized in~\tabref{tab:delayed_grad}, where the average rewards of 100 evaluation episodes are reported. Compared with truncated  importance sampling and no correction, the one-step delayed gradient in HTS-RL achieves a higher reward, which underlines the suitability of the delayed gradient strategy. 
} 

\begin{figure}[th]
\centering

\includegraphics[width=0.4\textwidth]{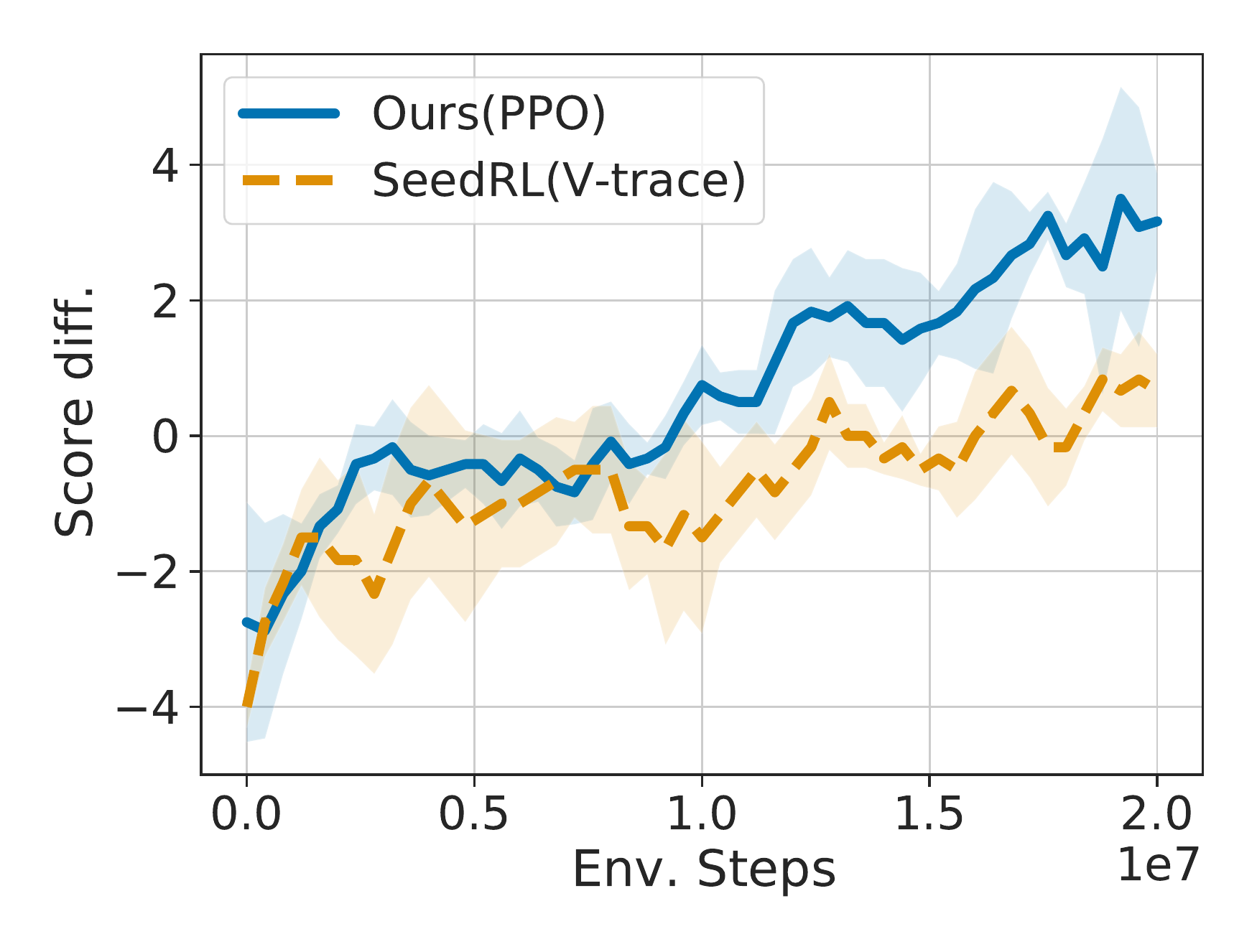}

\caption{HTS-RL and SeedRL on GFootball 11 \vs 11 easy task.}
\label{fig:seedrl}

\end{figure}
\begin{figure}[th]

\centering
\begin{tabular}{l|rrrr}
\specialrule{.15em}{.05em}{.05em}
         Method &  \makecell{\citet{pytorchrl}} &  \makecell{OpenAI\\ Baselines~\cite{a2c}} &  rlpyt~\cite{rlpyt} & Ours\\
\hline
\hline
BankHeist&  1382 $\pm$6  &   991 $\pm$14  &     1737 $\pm$39  &     \textbf{2111 $\pm$21} \\
Beam Rider&  1663 $\pm$14  &   1081 $\pm$18  &     2086 $\pm$32  &     \textbf{2586 $\pm$14} \\
Breakout&   1225 $\pm$12  &    829 $\pm$31  &      1508 $\pm$60  &     \textbf{1885 $\pm$15}  \\
Frostbite&    1337 $\pm$8 &    962 $\pm$15 &      1803 $\pm$17  &     \textbf{1973 $\pm$24}  \\
Jamesbond&    1353 $\pm$5 &    1014 $\pm$1 &      1991 $\pm$24 &     \textbf{2139 $\pm$31}  \\
Krull &  1443 $\pm$6  &   1057 $\pm$11  &    2001 $\pm$29 &     \textbf{2657 $\pm$16}   \\
KFMaster&  1532 $\pm$15  &  1056 $\pm$8  &    1979 $\pm$55 &    \textbf{ 2483 $\pm$15}   \\
MsPacman&   1574 $\pm$9  &   1052 $\pm$3  &    1972 $\pm$13  &     \textbf{2364 $\pm$5}  \\
Qbert &  1232 $\pm$13  &   953 $\pm$7  &   1621 $\pm$43 &     \textbf{1860 $\pm$6}   \\
Seaquest&   1593 $\pm$10  &   946 $\pm$21 &     1918 $\pm$25 &     \textbf{2633 $\pm$32}  \\
S. Invader &   1514 $\pm$20 &    1010 $\pm$7  &      1899 $\pm$32 &     \textbf{2318 $\pm$12}   \\
Star Gunner&  1622 $\pm$19  &  1110 $\pm$5  &    2066 $\pm$24 &    \textbf{ 2616 $\pm$25}   \\
\specialrule{.15em}{.05em}{.05em}
\end{tabular}
\captionof{table}{SPS of different implementations of A2C.}
\label{tab:a2c}

\end{figure}%
{
\section{Baselines}
\label{sec:supp_baseline}
\subsection{A2C}
To ensure the A2C implementation~\cite{pytorchrl} we use is a strong baseline, we compare the speed of different versions of A2C, including our HTS-RL, ~\citet{pytorchrl}, OpenAI baselines~\cite{a2c}, and rlpyt~\cite{rlpyt}, on Atari games. For a fair comparison, all methods use 16 parallel environment processes for data collection, and one GPU for model training/forwarding. For rlpyt, we use the most efficient `parallel-GPU' mode. As shown in~\tabref{tab:a2c}, Kostrikov's A2C is a strong baseline, which achieves $1.4\times$ higher SPS than OpenAI baselines. 
Also, HTS-RL consistently achieves higher SPS than rlpyt.  
\subsection{SeedRL}
SeedRL~\cite{seed_rl} is a recent work that reports results on GFootball `11 \vs 11 easy' task. We compare HTS-RL with Seed RL (V-trace)~\cite{seed_rl} on  Gfootball `11 \vs 11 easy.' For a fair comparison, both HTS-RL and Seed RL use 16 parallel environment processes and one GPU. HTS-RL achieves 829 environment steps per second (SPS) while Seed RL achieves 609 SPS. After 20M steps of training, HTS-RL and Seed RL achieve a $3.55\pm0.3$ and $1.50\pm 0.7$ score difference, respectively. The training curve is shown in~\figref{fig:seedrl}.
}

\section{Implementation Details}
\label{sec:supp_implement}

\subsection{Atari Game Experiments}
In Atari experiments, we use the same neural network architectures as~\citet{impala, torchbeast} for all three methods (IMPALA, A2C, Ours). The network has four hidden layers. The first layer is a convolutional layer with $32$ filters of size $8\times 8$ and stride $4$. The second layer is a convolutional layer with $64$ filters of size $4\times 4$ and stride $2$. The third layer is a
convolutional layer with $64$ filters of size $3\times 3$ and stride 1.  The fourth layer is a fully connected layer with $512$ hidden units. Following the hidden units are two sets of output. One provides a probability distribution over all valid actions. The other one provides the estimated value function. For ours (A2C) and A2C baseline, we use the same hyper-parameters as~\citet{pytorchrl}. For IMPALA, we use the same hyper-parameters as~\citet{torchbeast, impala}. We summarize the hyper-parameters in~\tabref{tab:atari_hyper}. 
{
Note~\citet{torchbeast} deploy distributed IMLALA with 48 actors. However, in this work we target single machine parallel computing, and restrict ourselves to 16 parallel environments.  For a fair comparison, we run all experiments with 16 parallel environments on a single machine. Importantly, while being downscaled to one machine, the reported IMPALA results match the results reported in the original paper~\cite{torchbeast}. \tabref{tab:atari_base} summarizes the results of our baseline and that reported by~\citet{torchbeast}. }

\begin{table}[t]
\begin{minipage}{0.48\textwidth}
\centering
\setlength{\tabcolsep}{4pt}
{\small
\begin{tabular}[t]{l|rr}
\specialrule{.15em}{.05em}{.05em}
         &  IMPALA  &  A2C/Ours \\
\hline
\hline

Unroll length & 20 & 5  \\
Batch size & 32 & -  \\
Discount factor & 0.99 & 0.99  \\
Value loss coefficient & 0.5 & 0.5  \\
Entropy loss coefficient & 0.01 & 0.01  \\
RMSProp momentum & 0.00 & 0.00  \\
RMSProp $\epsilon$ & 0.01 & 0.00001  \\
Learning rate & 0.006 & 0.0007  \\
Number of actors & 16 & 4 \\

 \specialrule{.15em}{.05em}{.05em}
\end{tabular}

\caption{Hyper-parameters of IMPALA and A2C/Ours(A2C) in Atari experiments.
}
\label{tab:atari_hyper}
}
\end{minipage}
\hspace{0.2cm}
\begin{minipage}{0.47\textwidth}
\centering
\setlength{\tabcolsep}{4pt}
{\small
\begin{tabular}[t]{l|rr}
\specialrule{.15em}{.05em}{.05em}
         Method &    \makecell{IMPALA\\ 16 actors \\ (Baseline)}   &   \makecell{IMPALA\\ 48 actors\\ \cite{torchbeast}}    \\
\hline
\hline
BankHeist&  339 &  $\sim$300    \\
Beam Rider&  4000  &   $\sim$4000   \\
Breakout&   201  &    $\sim$130   \\
Frostbite&    73  &    $\sim$70  \\
Jamesbond&    82  &    $\sim$80   \\
Krull &  2546  &  $\sim$2500   \\
KFMaster&  9516  &  $\sim$8000  \\
MsPacman&   807  &   $\sim$1300   \\
Qbert &  4116  &   $\sim$4000  \\
Seaquest&   458   &   $\sim$420  \\
S. Invader &   {1142} &    $\sim$2000   \\
Star Gunner&  8560  &  $\sim$6000   \\
\specialrule{.15em}{.05em}{.05em}
\end{tabular}
\caption{Atari@20M environment steps. Since~\citet{torchbeast} don't report exact scores at 20M environment steps, we obtain their numbers from their plots and indicate that with a $\sim$ symbol.}
\label{tab:atari_base}
}
\end{minipage}
\end{table}

\subsection{GFootball Experiments}
In GFootball  experiments, we use the CNN architecture of~\citet{gfootball} for all three methods (IMPALA, PPO, Ours). The network has four hidden layers. The first layer is a convolutional layer with 32 filters of size $8\times 8$ 
and stride 4. The second layer is a convolutional layer with 64 filters of size $4\times 4$ and stride 2. The third layer is a
convolutional layer with 64 filters of size $3 \times 3$ and stride 1.  The fourth layer is a fully connected layer with 512 hidden units. Following the hidden units are two sets of output. One provides a probability distribution over all valid actions. The other one provides the estimated value function. To be consistent with the official torch beast implementation~\cite{torchbeast}, we use RMSProp for all methods. Regarding hyper-parameters, for ours (PPO) and PPO baseline, we mostly use the same hyper-parameters as~\citet{gfootball}. The only difference is that, instead of 512 steps, we unroll for 128 steps, which we found to give better results. For IMPALA,~\citet{gfootball} deploy distributed training with 500 actors. However, in this work we target single machine parallel computing, and restrict ourselves to 16 parallel environments. Therefore, for a fair comparison, we mostly follow the hyper-parameter settings of~\citet{gfootball}, but decrease the number of actors and batch size. With only 16 actors and a smaller batch size, our baseline results match the results of IMPALA on GFootball environments reported by~\citet{gfootball}. 
{\tabref{tab:gfootball_base} summarizes the results of our baseline and that reported by~\citet{gfootball}.}
The hyper-parameters are summarized in~\tabref{tab:gfootball_hyper}.

\section{Final Time and Required Time Metrics}
\label{sec:supp_metrics}
The results of all Atari experiments in \emph{final time metric} and \emph{required time metric} are summarized in~\tabref{tab:atari_final} and~\tabref{tab:atari_required}.  For \emph{final time metric}, the time limit for each experiment is set to the time when IMPALA finishes training for 20M steps. For \emph{required time metric}, we report the time to achieve average episode rewards of $40\%$ and $80\%$ of the A2C baseline episode rewards reported by~\citet{a2c}.
The results of all GFootball experiments in \emph{final time metric} and \emph{required time metric} are summarized in~\tabref{tab:football_final} and~\tabref{tab:football_required}. 
For \emph{final time metric}, the time limit for each experiment is set to the time when IMPALA finishes training for 5M steps. For \emph{required time metric}, we report the time to achieve an average score of 0.4 and 0.8. As shown in~\tabref{tab:atari_final} and~\tabref{tab:football_final}, given the same amount of time, HTS-RL consistently achieves higher average rewards/scores than IMPALA and synchronous baselines. Moreover, as shown in~\tabref{tab:atari_required} and~\tabref{tab:football_required}, to achieve a target reward/score, HTS-RL consistently needs less time.

\begin{table}[]
\centering
\setlength{\tabcolsep}{4pt}
{\small
\begin{tabular}{l|rr}
\specialrule{.15em}{.05em}{.05em}
         Method &    \makecell{IMPALA\\ 16 actors \\ (Baseline)}   &   \makecell{IMPALA\\ 500 actors\\ \cite{gfootball}}    \\
\hline
\hline
{Empty goal close} & 1.0&    $\sim$0.99   \\
                         
{Empty goal}  & 1.0&    $\sim$0.85   \\
                           
{Run to score}  & 0.80&   $\sim$ 0.80  \\             
                                       
{RSK}  & 0.05&    $\sim$0.22  \\

{PSK}  & 0.20&    $\sim$0.18   \\

{RPSK} & 0.82&    $\sim$0.41   \\
     
{3 vs 1 w/ keeper}  & 0.21&    $\sim$0.20 \\
    
{Corner}  & 0.0&   $\sim$-0.1  \\
                   
{Counterattack easy} & 0.0&    $\sim$0.0  \\
                        
{Counterattack hard}  & 0.50&    $\sim$0.0  \\
{11 vs 11 w/ lazy Opp.} & 0.71&    $\sim$0.38 \\
\specialrule{.15em}{.05em}{.05em}
\end{tabular}
\caption{GFootball Academy@5M environment steps. Since~\citet{gfootball} don't report exact scores at 5M environment steps, we obtain their numbers from their plots and indicate that with a $\sim$ symbol.}
\label{tab:gfootball_base}
}
\end{table}

\begin{table}[]
\centering
\setlength{\tabcolsep}{4pt}
{\small
\begin{tabular}{l|rr}
\specialrule{.15em}{.05em}{.05em}
         &  IMPALA  &      PPO / PPO(Ours) \\
\hline
\hline

Unroll length & 32 & 128  \\
Batch size & 8 & 16  \\
Discount factor & 0.993 & 0.993  \\
Value loss coefficient & 0.5 & 0.5  \\
Entropy loss coefficient &  0.00087453 & 0.003  \\
RMSProp momentum & 0.00 & 0.00  \\
RMSProp $\epsilon$ & 0.01 & 0.00001  \\
Learning rate & 0.00019896 & 0.000343  \\
Number of actors & 16 & 4 \\

 \specialrule{.15em}{.05em}{.05em}
\end{tabular}

\caption{Hyper-parameters of IMPALA and PPO/Ours(PPO) in GFootball experiments.
}
\label{tab:gfootball_hyper}
}
\end{table}

\begin{table}[]
\centering
\setlength{\tabcolsep}{4pt}
{\small
\begin{tabular}{l|rrr}
\specialrule{.15em}{.05em}{.05em}
         Method &  IMPALA  &      A2C &  Ours (A2C)\\
\hline
\hline
BankHeist&  339 $\pm$ 10 &   775 $\pm$ 166 &     \bf{942 $\pm$ 100}   \\
Beam Rider&  4000 $\pm$ 690 &   4392 $\pm$ 134 &     \bf{6995} $\pm$ 420  \\
Breakout&   201 $\pm$ 133 &    362 $\pm$ 29 &      \bf{413 $\pm$ 37}  \\
Frostbite&    73 $\pm$ 2 &    272 $\pm$ 14&      \bf{315 $\pm$ 12}  \\
Jamesbond&    82 $\pm$ 10 &    438 $\pm$ 59 &      \bf{474 $\pm$ 88}  \\
Krull &  2546 $\pm$ 551 &   7560 $\pm$ 892 &     \bf{7737 $\pm$ 609}  \\
KFMaster&  9516 $\pm$ 3311 &  \bf{30752 $\pm$ 6641} &    30020 $\pm$ 3559  \\
MsPacman&   807 $\pm$ 170 &   1236 $\pm$ 292 &     \bf{1675 $\pm$ 459}  \\
Qbert &  4116 $\pm$ 610 &   12479 $\pm$ 1965 &    \bf{13682 $\pm$ 1873}  \\
Seaquest&   458 $\pm$ 2  &   \bf{1833 $\pm$ 6} &     1831 $\pm$ 7 \\
S. Invader &   \bf{1142} $\pm$ 207&    596 $\pm$ 69 &      {731} $\pm$ 80  \\
Star Gunner&  8560 $\pm$ 918 &  41414 $\pm$ 3826 &    \bf{52666 $\pm$ 5182}  \\

\specialrule{.15em}{.05em}{.05em}
\end{tabular}
\caption{{Atari experiment in \emph{final time metrics}: }Average evaluation rewards achieved given limited training time. }
\label{tab:atari_final}
}
\end{table}

\begin{table}[]
\centering
\setlength{\tabcolsep}{4pt}
{\small
\begin{tabular}{l|rrr}
\specialrule{.15em}{.05em}{.05em}
         Method (target reward 1 / target reward 2)&  IMPALA  &      A2C &  Ours (A2C)\\
\hline
\hline
BankHeist (480 / 960)&  -/- &    28.9/{\bf62.1} &      {\bf18.9}/116.8 \\
Beam Rider (1600 / 3200)&  36.4/60.4 &    32.1/54.5 &      {\bf10.3/30.9}  \\
Breakout (160 / 320)&   77.8/- &    21.7/43.5 &      {\bf17.7/38.9} \\
Frostbite (104 / 208)&    -/- &    5.0/10.0 &      {\bf3.4/6.8}  \\
Jamesbond (200 / 400)&    -/- &    39.4/49.2 &      {\bf21.8/31.1} \\
Krull (3600 / 7200)& -/- &    13.9/37.0 &      {\bf7.5/37.6}  \\
KFMaster (15200 / 30420)&  -/- &   39.5/192.6 &      {\bf18.8/118.1}  \\
MsPacman (880 / 1760)&  75.8/- &    49.3/160.3 &      {\bf22.9/94.3}  \\
Qbert (4000 / 8000)&  94.2/- &    53.3/83.8 &      {\bf52.2/67.7}  \\
Seaquest (640 / 1280)&   -/- &    6.7/28.4 &      {\bf4.0/17.2} \\
Space. (240 / 480)&   {\bf9.65/19.9} &    14.1/26.4 &      {6.9/21.8}  \\
Star Gunner (8400 / 16800)&  47.1/- &    28.7/41.1 &     {\bf 17.8/25.4}  \\

\specialrule{.15em}{.05em}{.05em}
\end{tabular}
\caption{{Atari experiment in \emph{required time metrics}: } Required time (minutes) to achieve goal episode rewards (time required to achieve 40\% rewards reported by~\citet{a2c} / time required to achieve 80\% rewards reported by~\citet{a2c}).  `-' indicates that the method did not achieve the desired reward after 20M environment step training.
Space.: Space invaders, KFMaster: KungFu Master.
}
\label{tab:atari_required}
}
\end{table}

\begin{table}[]
\centering
\setlength{\tabcolsep}{4pt}
{\small
\begin{tabular}{l|rrr}
\specialrule{.15em}{.05em}{.05em}
         Method & IMPALA  &      PPO &  \makecell{Ours (PPO)}\\
\hline
\hline
{Empty goal close} & {\bf{1.00 $\pm$ 0.00}}&    {\bf{1.00 $\pm$ 0.00}} &      {\bf{1.00 $\pm$ 0.00}}  \\
                         
{Empty goal}  &{\bf{1.00 $\pm$ 0.00}}&    0.89 $\pm$0.05 &      0.94 $\pm$0.04  \\
                           
{Run to score}  & 0.65$\pm$0.42 &    0.89$\pm$0.05 &      \bf{0.93$\pm$0.03}  \\             
                                       
{RSK}  & 0.03$\pm$0.03 &    0.52$\pm$0.21 &      \bf{0.88$\pm$0.06}  \\  

{PSK}  & 0.00$\pm$0.00 &    0.05$\pm$0.04 &      \bf{0.41$\pm$0.02}  \\  

{RPSK} & 0.67$\pm$0.06 &    0.49$\pm$0.06 &      \bf{0.80$\pm$0.03}  \\  
     
{3 vs 1 w/ keeper}  &0.23$\pm$0.01 &    0.20$\pm$0.09 &      \bf{0.81$\pm$0.02}  \\  
    
{Corner}  & -0.10$\pm$0.33 &    -0.06$\pm$0.08 &      \bf{0.03$\pm$0.10}  \\  
                   
{Counterattack easy} & 0.00$\pm$0.00 &    0.01$\pm$0.01 &      \bf{0.39$\pm$0.02}  \\  
                        
{Counterattack hard}  & 0.00$\pm$0.00 &    0.01$\pm$0.02 &      \bf{0.53$\pm$0.09}  \\  
{11 vs 11 w/ lazy Opp.}  &0.46$\pm$0.21 &    0.33$\pm$0.07 &      \bf{0.72$\pm$0.09}  \\  
\specialrule{.15em}{.05em}{.05em}
\end{tabular}
\caption{GFootball experiments in \emph{final time metrics:}  Average evaluation scores achieved given limited training time. The time limit for each experiment is set to the time when IMPALA finishes training for 5M steps RSK: run to score w/ keeper, PSK: pass, shoot, w/ keeper, RPSK: run, pass, shoot, w/ keeper.
}
\label{tab:football_final}
}
\end{table}

\begin{table}[]
\centering
\setlength{\tabcolsep}{4pt}
{\small
\begin{tabular}{l|rrr}
\specialrule{.15em}{.05em}{.05em}
         Method & IMPALA  &      PPO &  \makecell{Ours (PPO)}\\
\hline
\hline
{Empty goal close} & 1.7/2.6&    5.4/15.5 &      \bf{1.0/2.0}  \\
                         
{Empty goal}  & 8.4/11.7&    12.8/19.2 &      \bf{2.0/3.9}  \\
                           
{Run to score}  & 27.0/34.6&    16.2/32.5 &      \bf{6.3/11.4}  \\             
                                       
{RSK}  & 52.3/-&    51.2/68.2 &      \bf{11.5/18.8} \\

{PSK}  & -/-&    70.0/- &      {\bf{38.8}}/-  \\

{RPSK} & 22.3/\bf{25.4}&    45.2/90.8 &      {\bf{13.5}}/27.1  \\
     
{3 vs 1 w/ keeper}  & -/-&    67.4/144.2 &      \bf{15.9/25.6}\\
    
{Corner}  & -/-&    -/- &      -/- \\
                   
{Counterattack easy} & -/-&    223.2/- &      {\bf{91.3}}/- \\
                        
{Counterattack hard}  & -/-&    383.4/- &      \bf{61.8/-}  \\
{11 vs 11 w/ lazy Opp.}  &58.2/-&    95.8/260.9 &      \bf{14.4/72.1} \\
\specialrule{.15em}{.05em}{.05em}
\end{tabular}
\caption{GFootball experiments in \emph{required time metrics:} required time (minutes) to achieve goal scores (time required to achieve score 0.4 / time required to achieve score 0.8).  `-' indicates that the method did not achieve the desired score after 5M environment step training. RSK: run to score w/ keeper, PSK: pass, shoot, w/ keeper, RPSK: run, pass, shoot, w/ keeper.
}
\label{tab:football_required}
}
\end{table}

\section{Training Curves}
\label{sec:supp_curves}
The training curves of all Atari and GFootball experiments in terms of time and number of environment steps are shown in~\figref{fig:atari_time_plot}, \figref{fig:atari_step_plot}, \figref{fig:football_time_plot}, and \figref{fig:football_step_plot}. As shown in~\figref{fig:atari_step_plot} and~\figref{fig:football_step_plot}, HTS-RL does not trade data efficiency for higher throughput. While achieving much higher throughput, HTS-RL still maintains a  data efficiency similar to synchronous baselines. As a result, HTS-RL consistently achieves higher rewards in shorter time than IMPALA and synchronous baselines across different environments (\figref{fig:atari_time_plot}, \figref{fig:football_time_plot}).
\begin{figure}[t]
\centering
\begin{tabular}{ccc}
\includegraphics[width=0.33\textwidth]{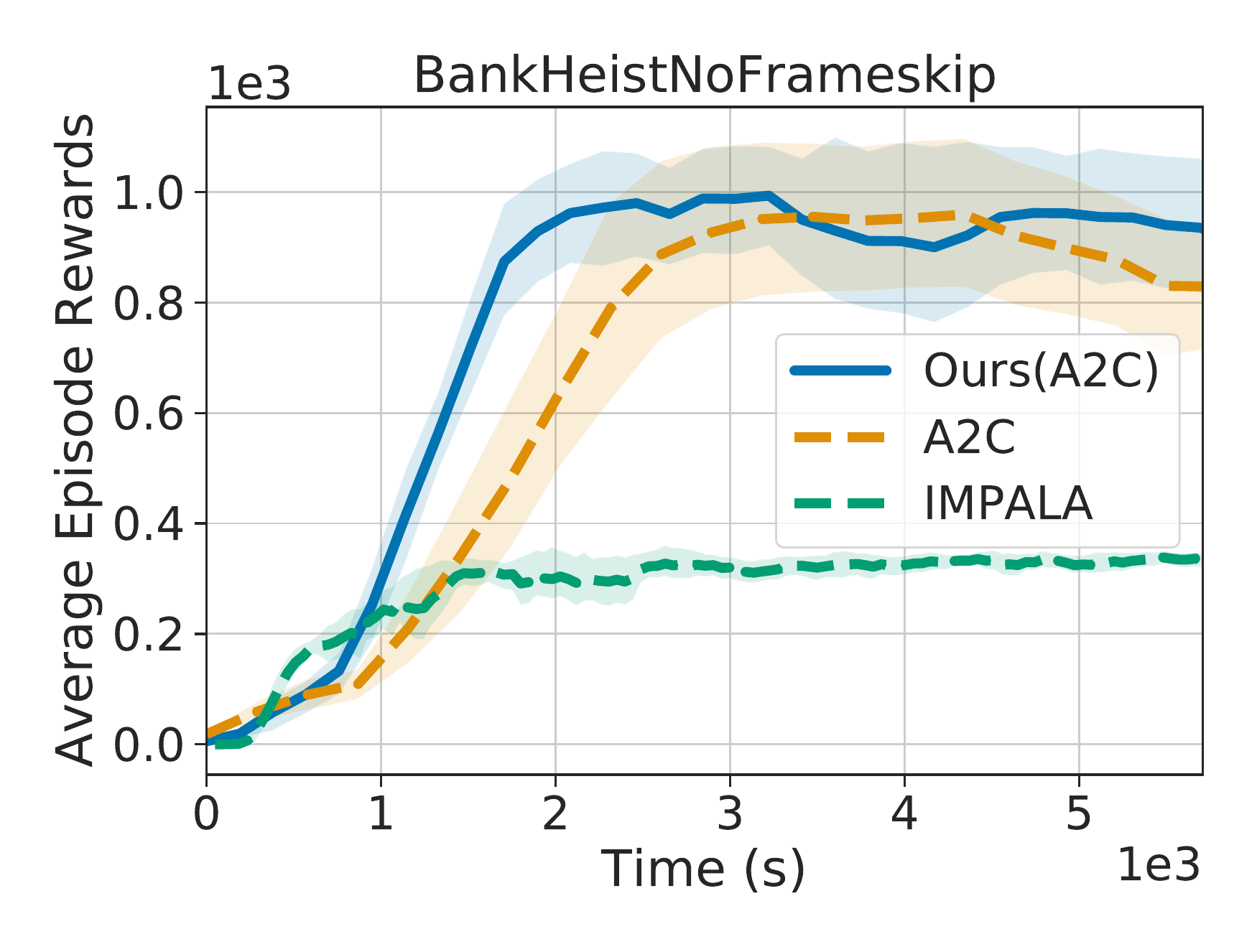}
&\hsp{-0.6cm}
\includegraphics[width=0.33\textwidth]{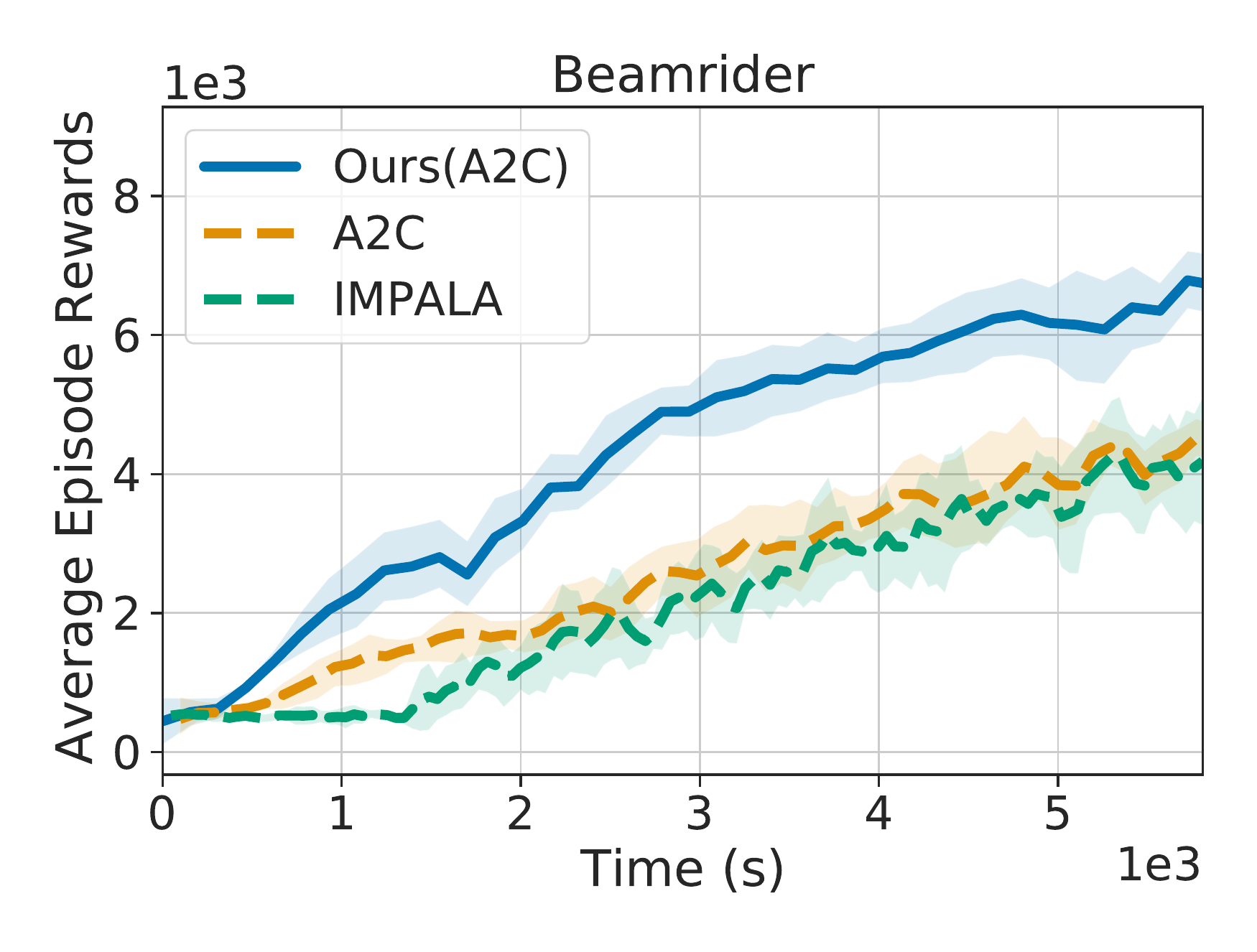}
&\hsp{-0.6cm}
\includegraphics[width=0.33\textwidth]{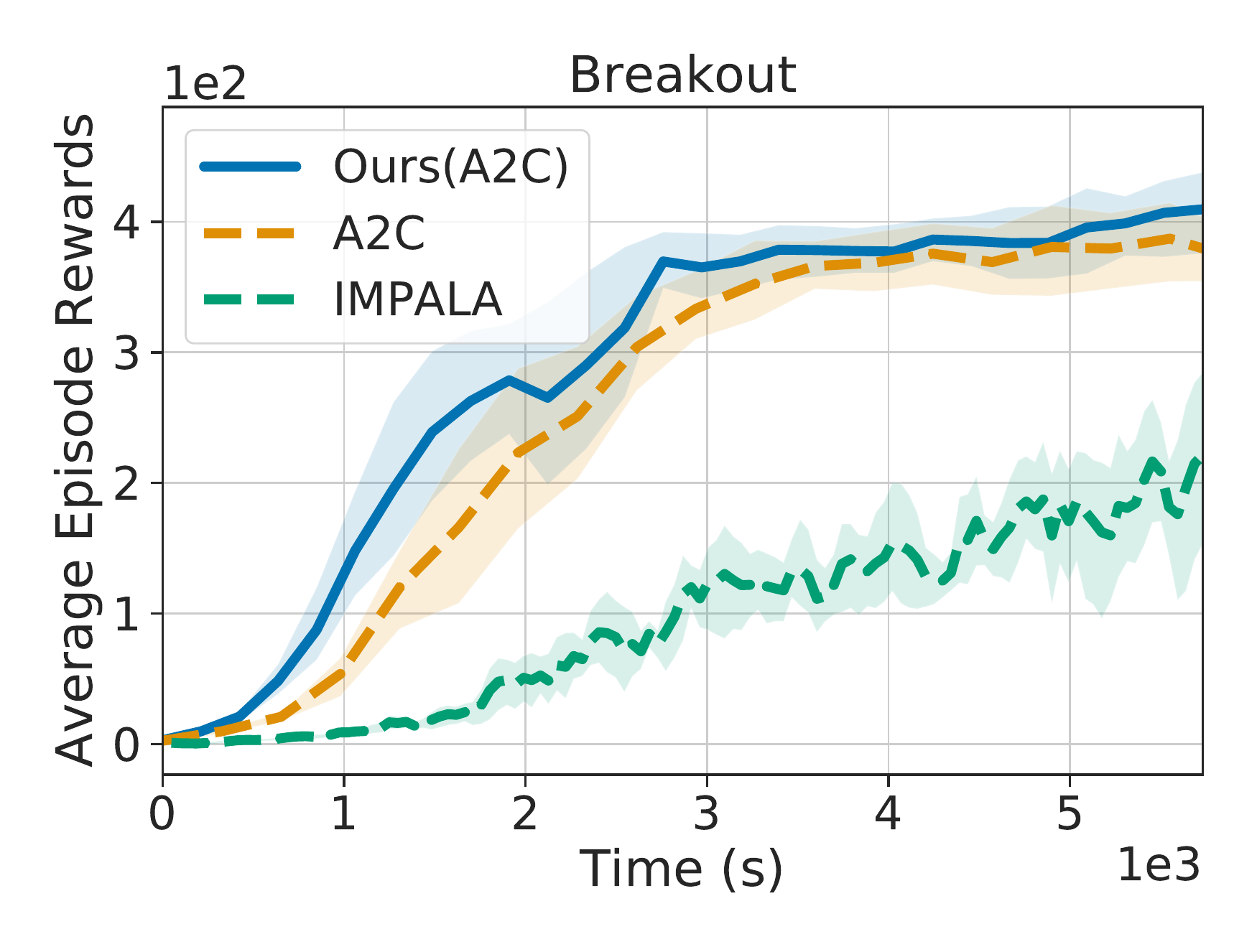}\\

\includegraphics[width=0.33\textwidth]{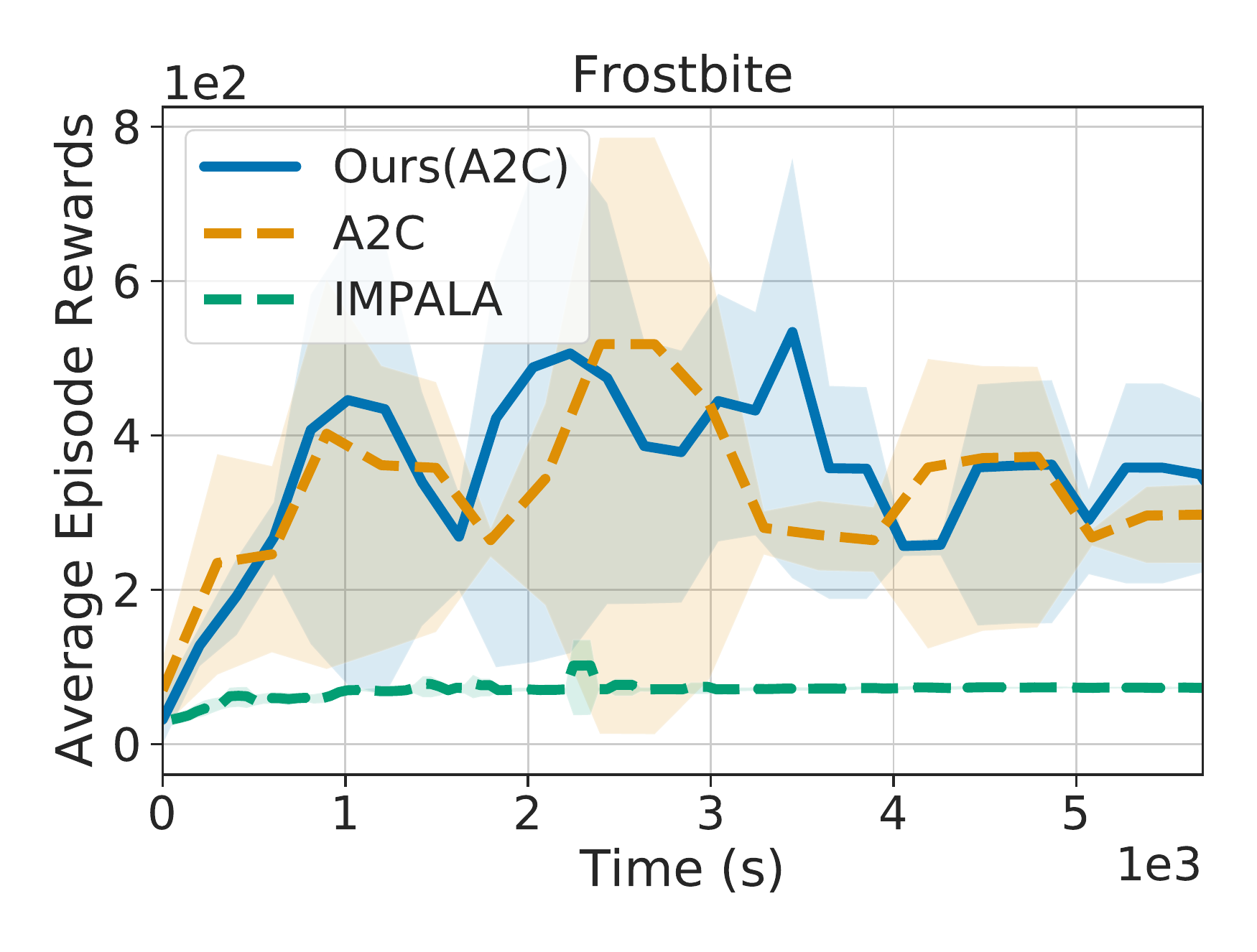}
&\hsp{-0.6cm}
\includegraphics[width=0.33\textwidth]{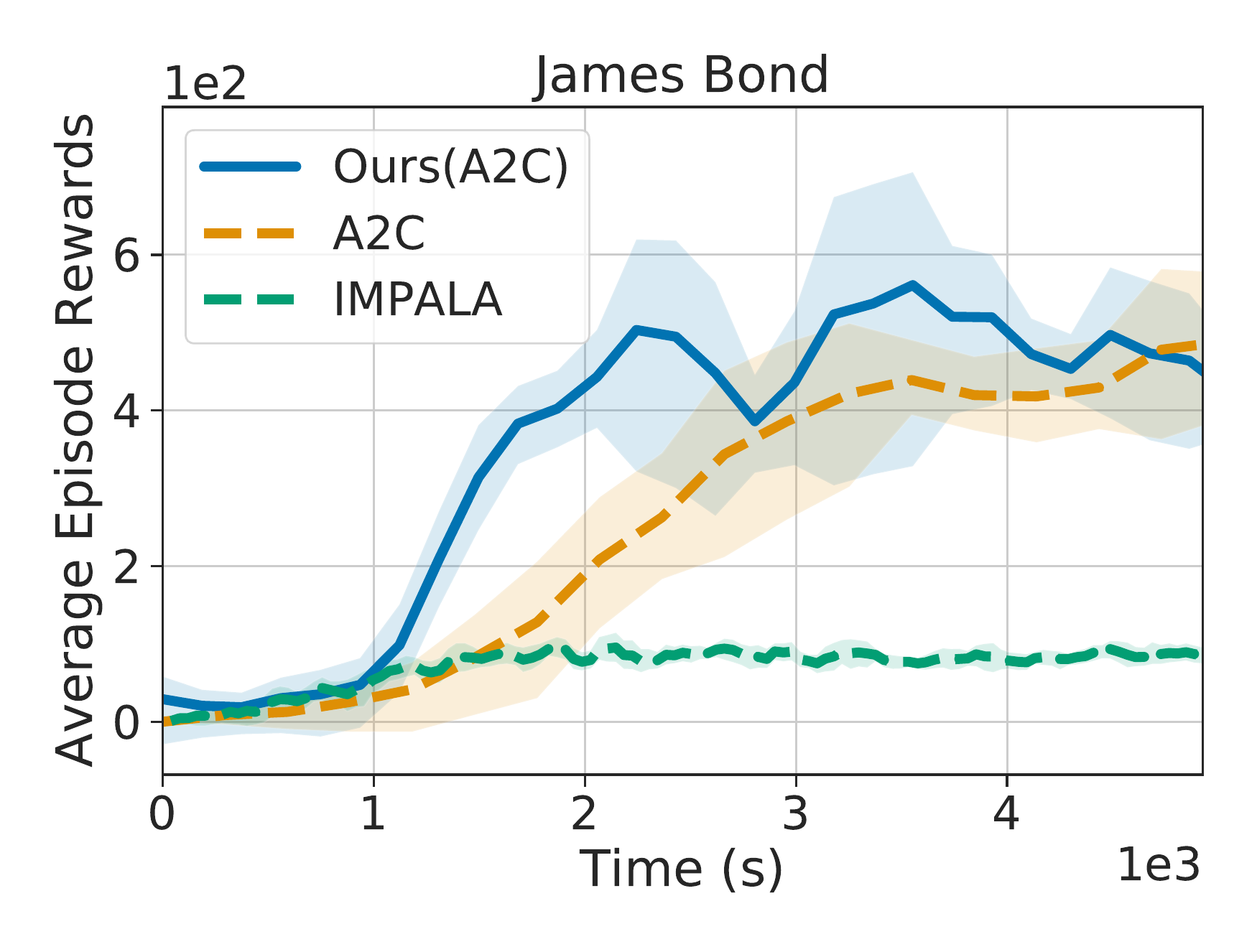}
&\hsp{-0.6cm}
\includegraphics[width=0.33\textwidth]{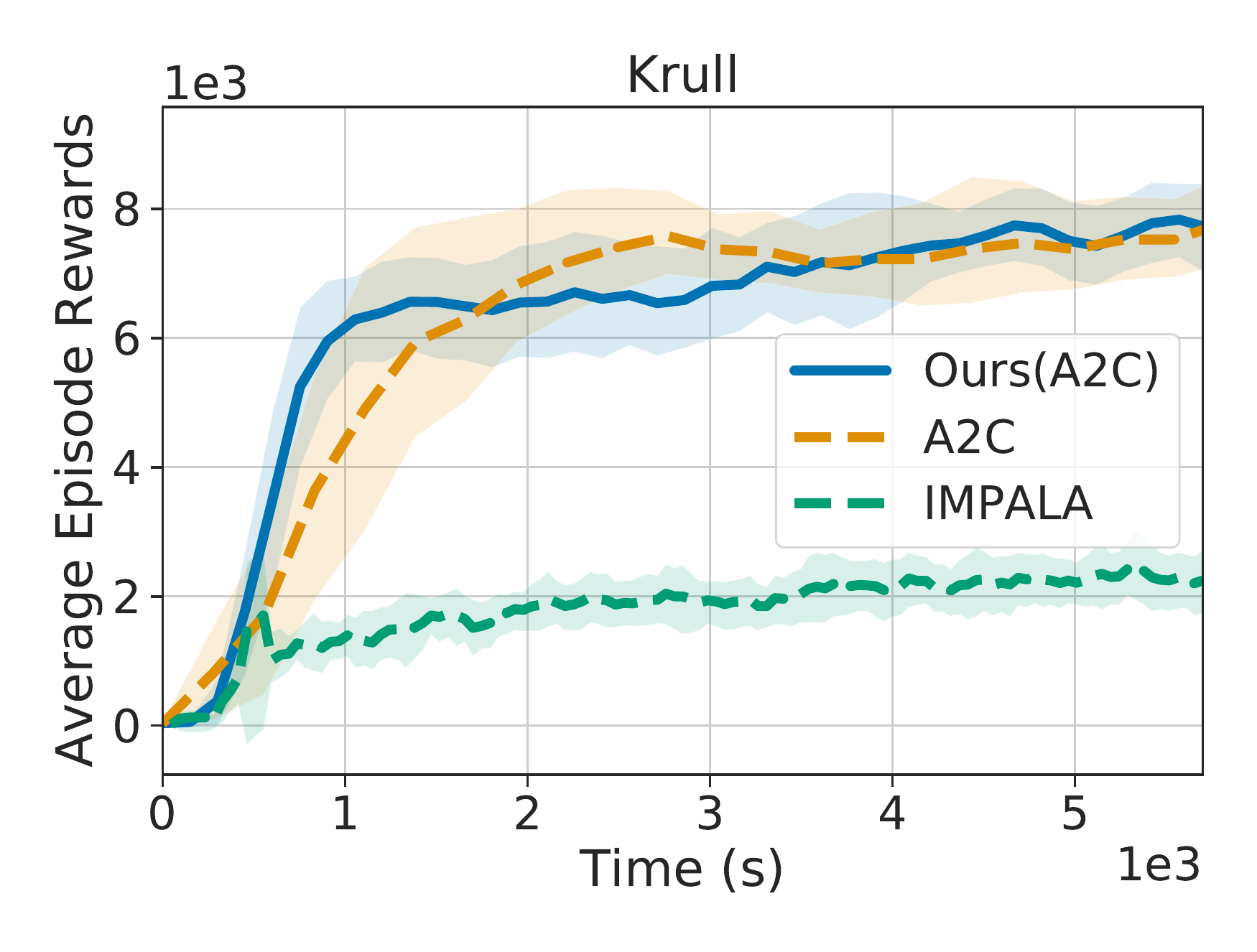}\\

\includegraphics[width=0.33\textwidth]{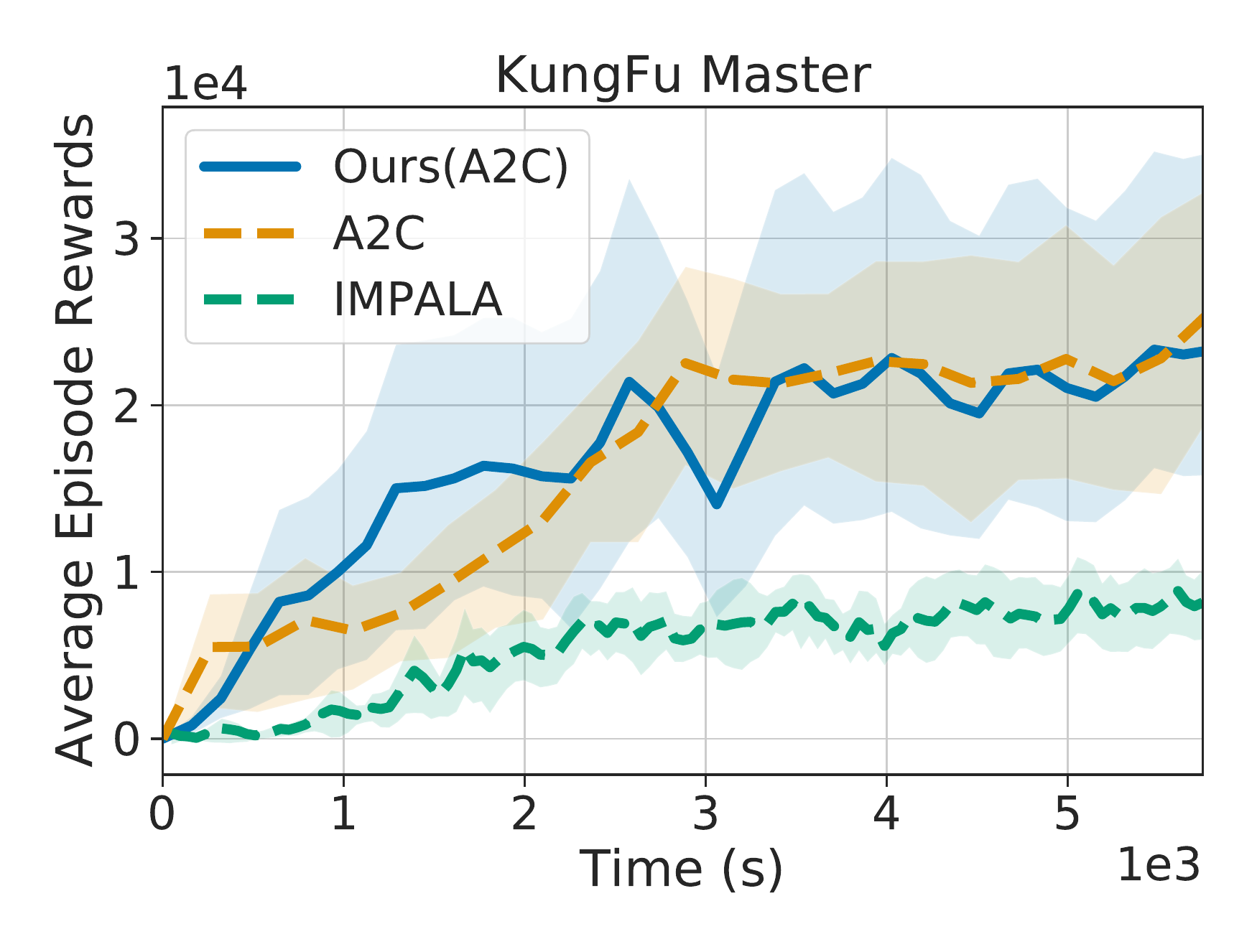}
&\hsp{-0.6cm}
\includegraphics[width=0.33\textwidth]{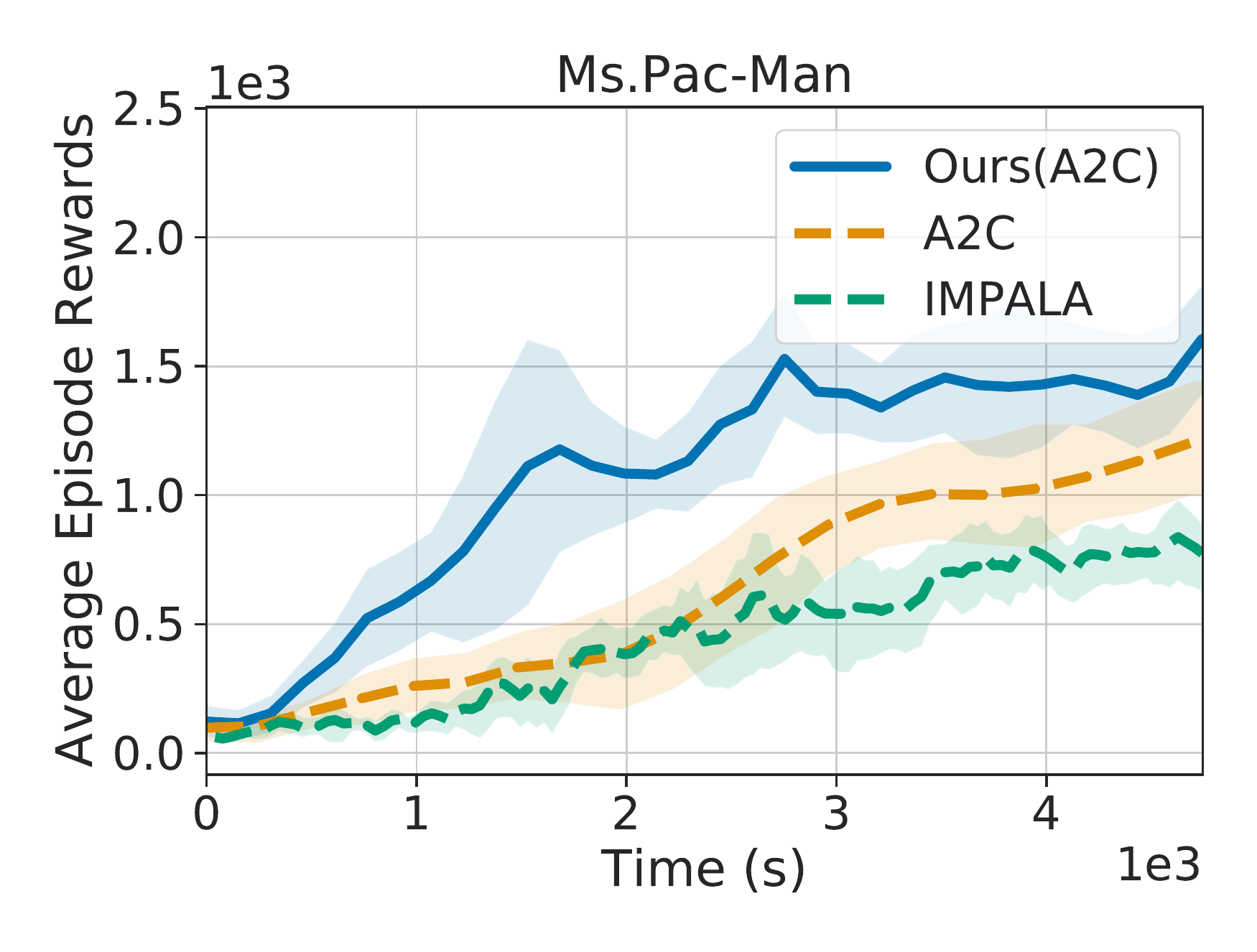}
&\hsp{-0.6cm}
\includegraphics[width=0.33\textwidth]{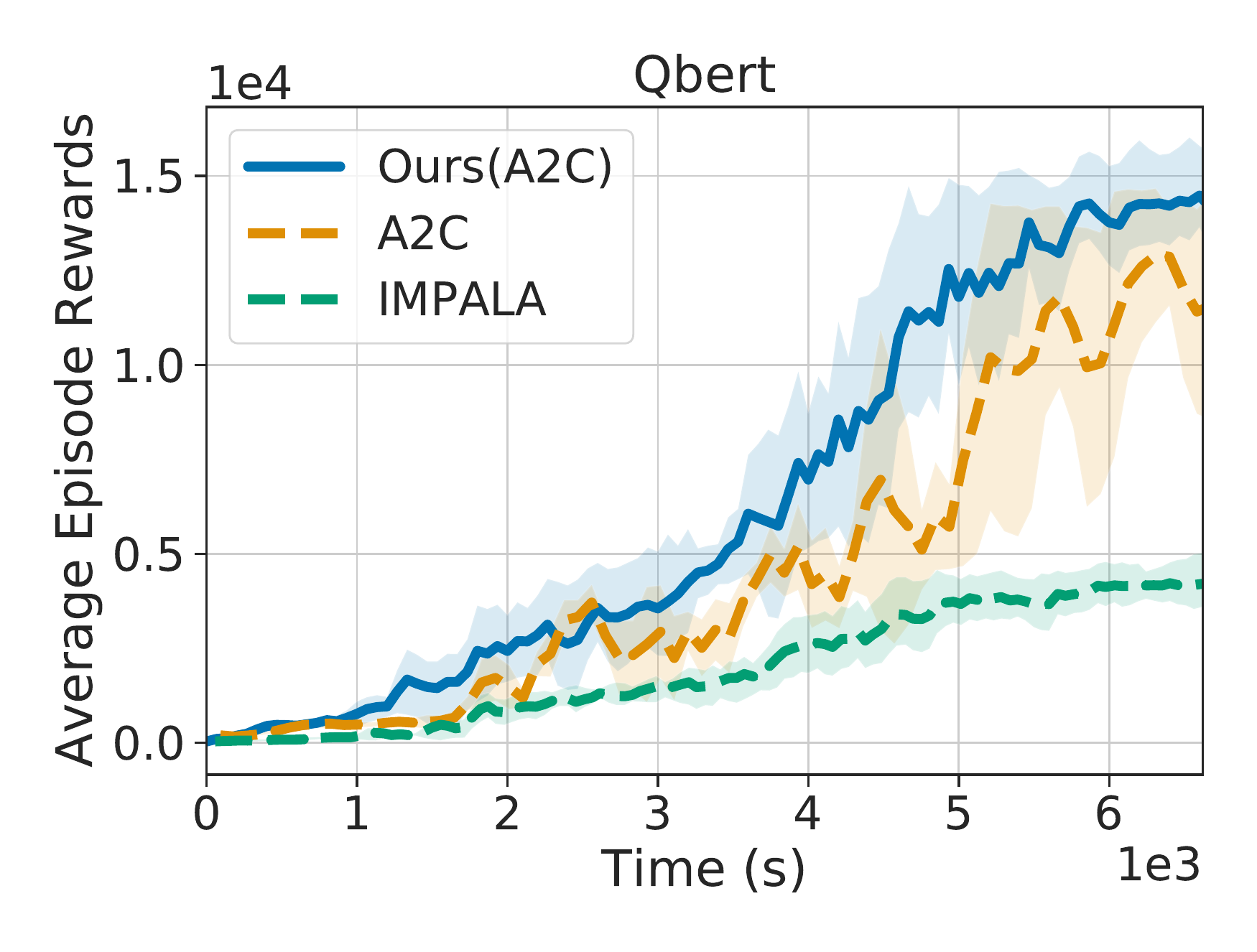}\\

\includegraphics[width=0.33\textwidth]{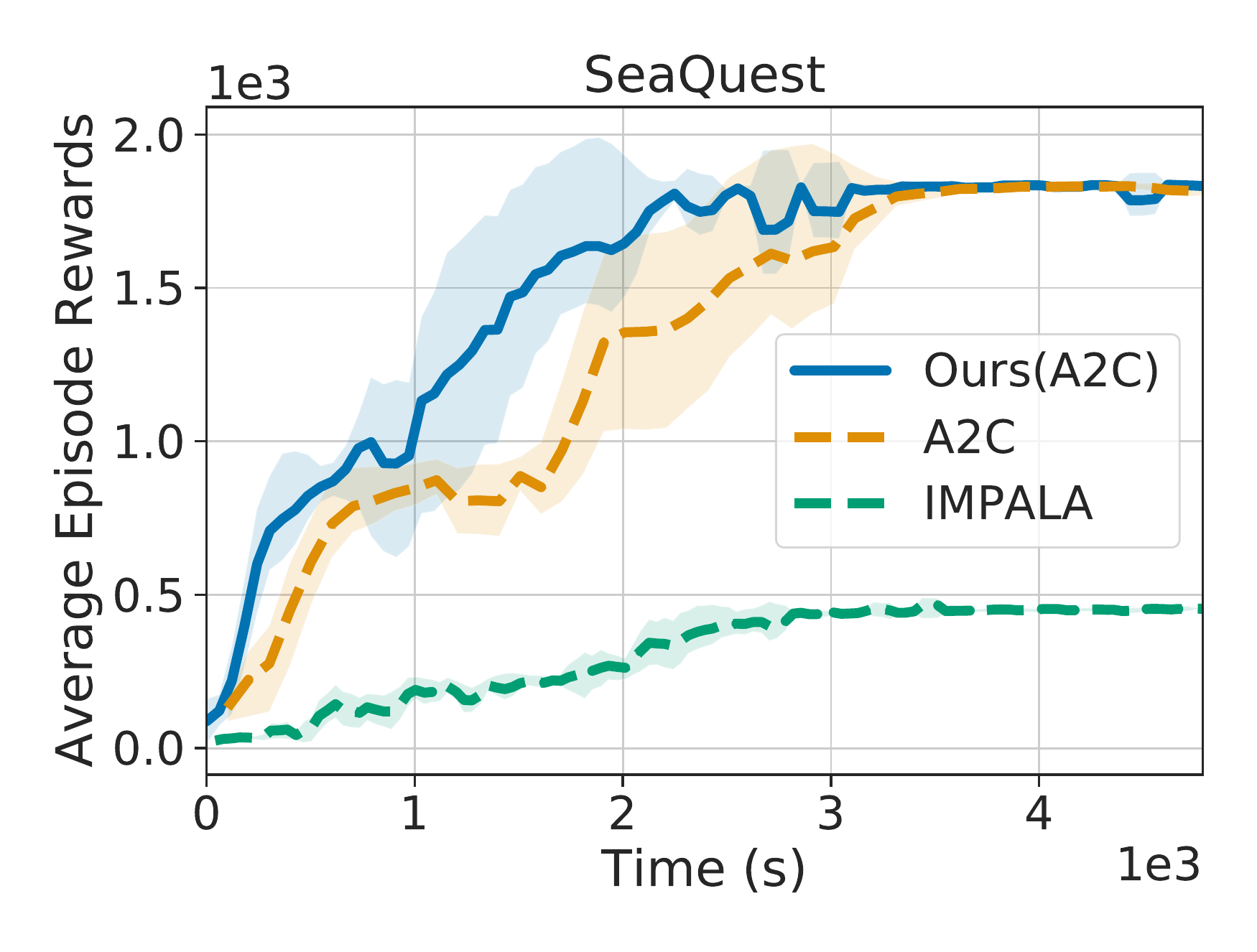}
&\hsp{-0.6cm}
\includegraphics[width=0.33\textwidth]{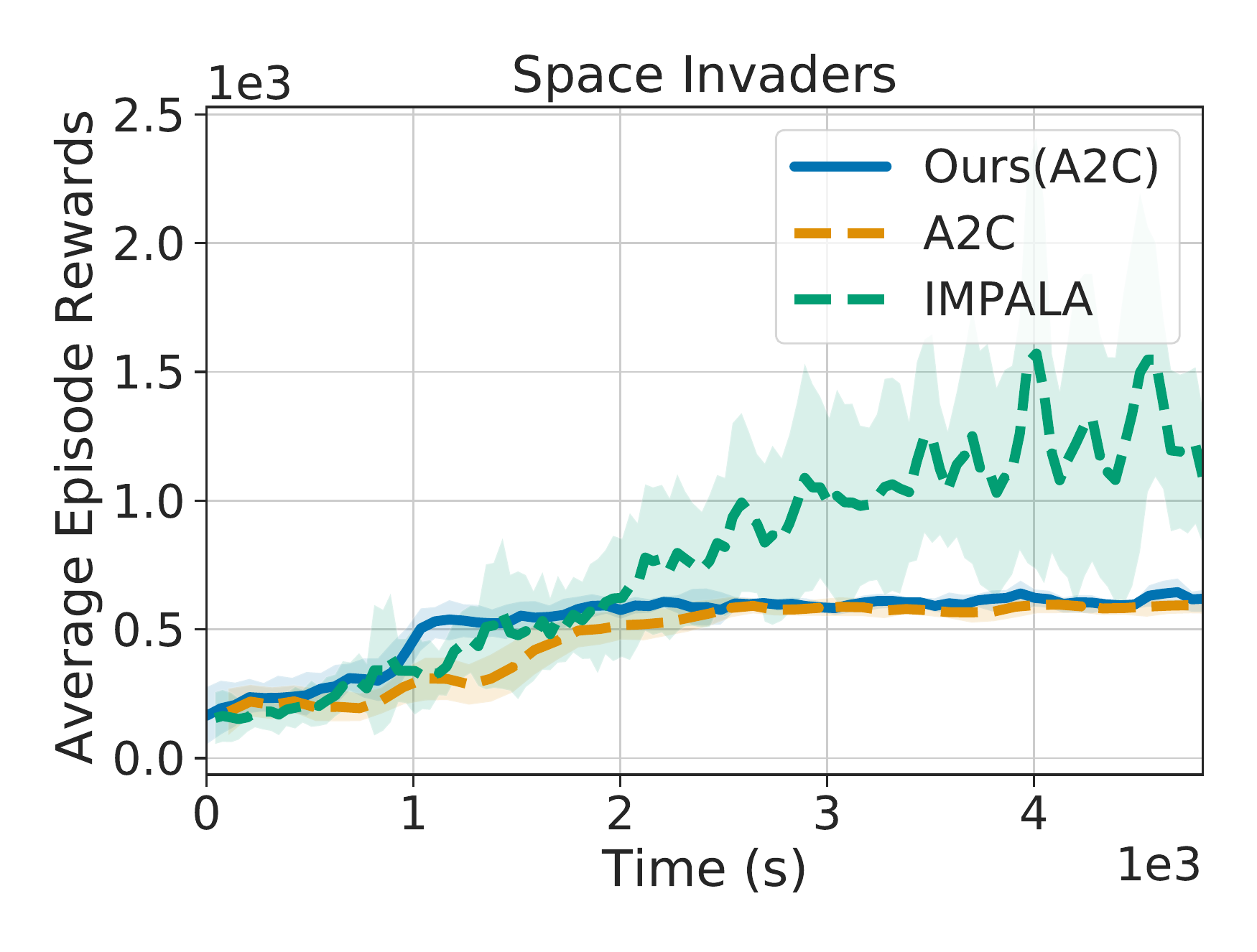}
&\hsp{-0.6cm}
\includegraphics[width=0.33\textwidth]{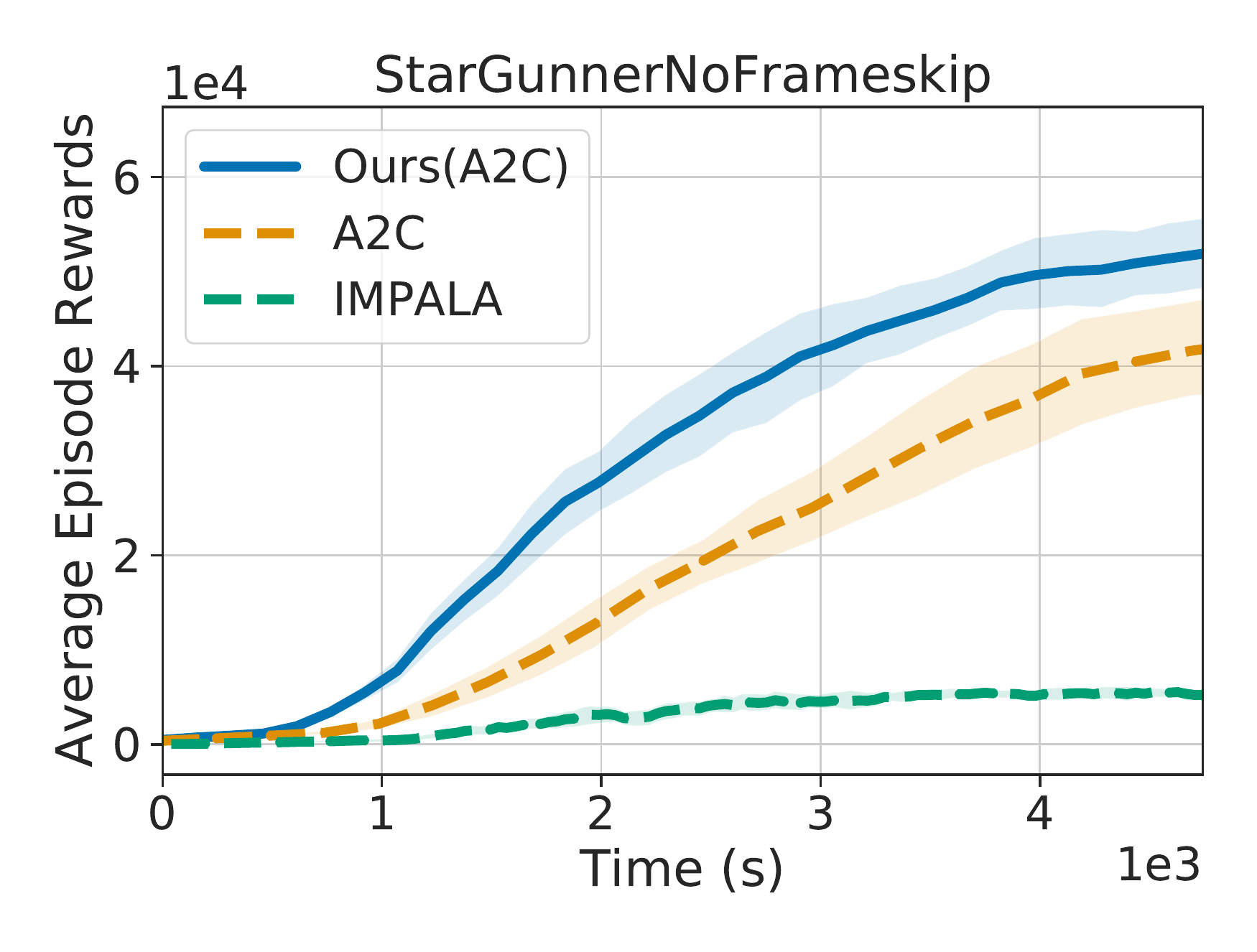}
\end{tabular}
\caption{Atari: {\textbf{Time}} versus reward.} 
\label{fig:atari_time_plot}
\end{figure}

\begin{figure}[t]
\centering
\begin{tabular}{ccc}
\includegraphics[width=0.33\textwidth]{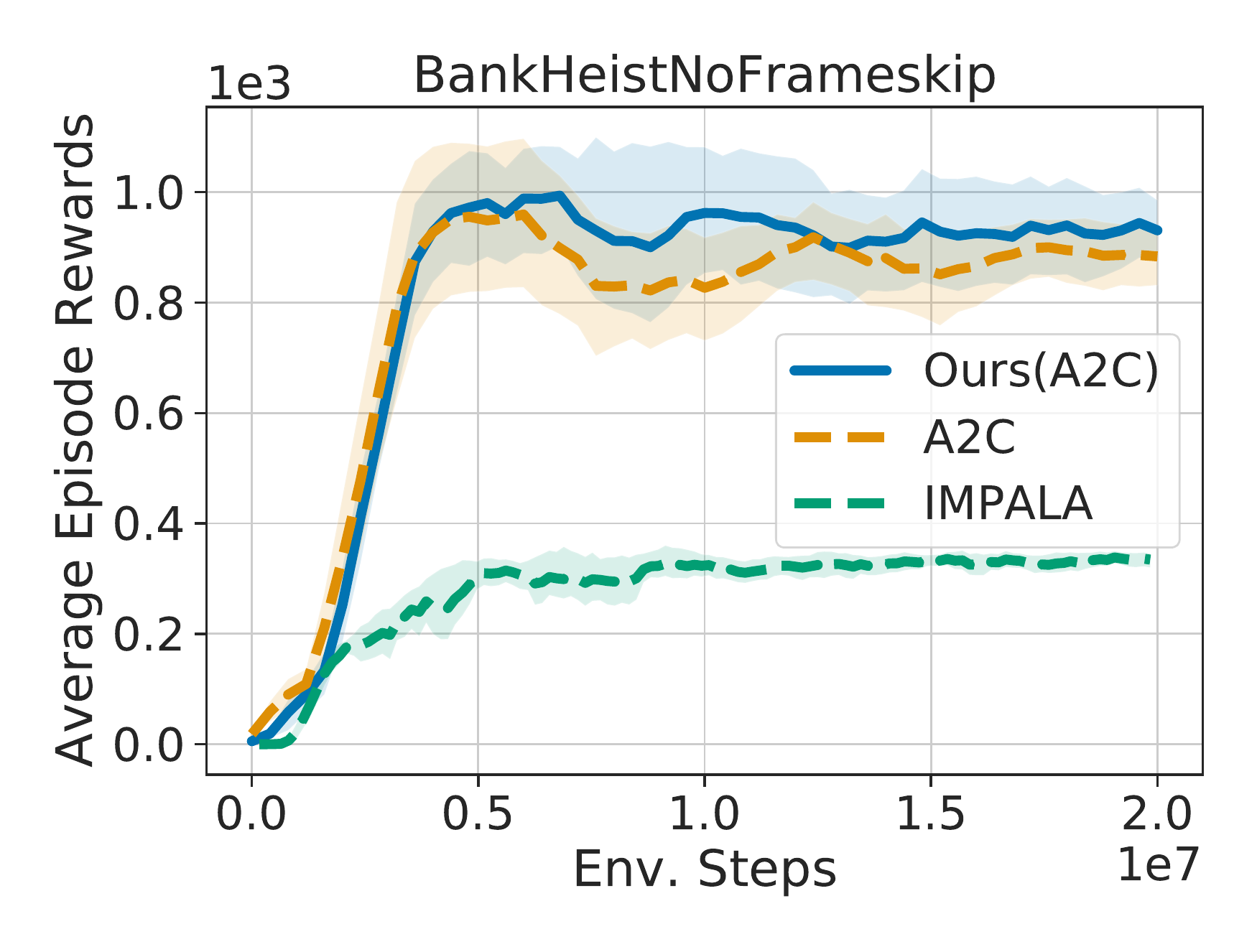}

&\hsp{-0.6cm}
\includegraphics[width=0.33\textwidth]{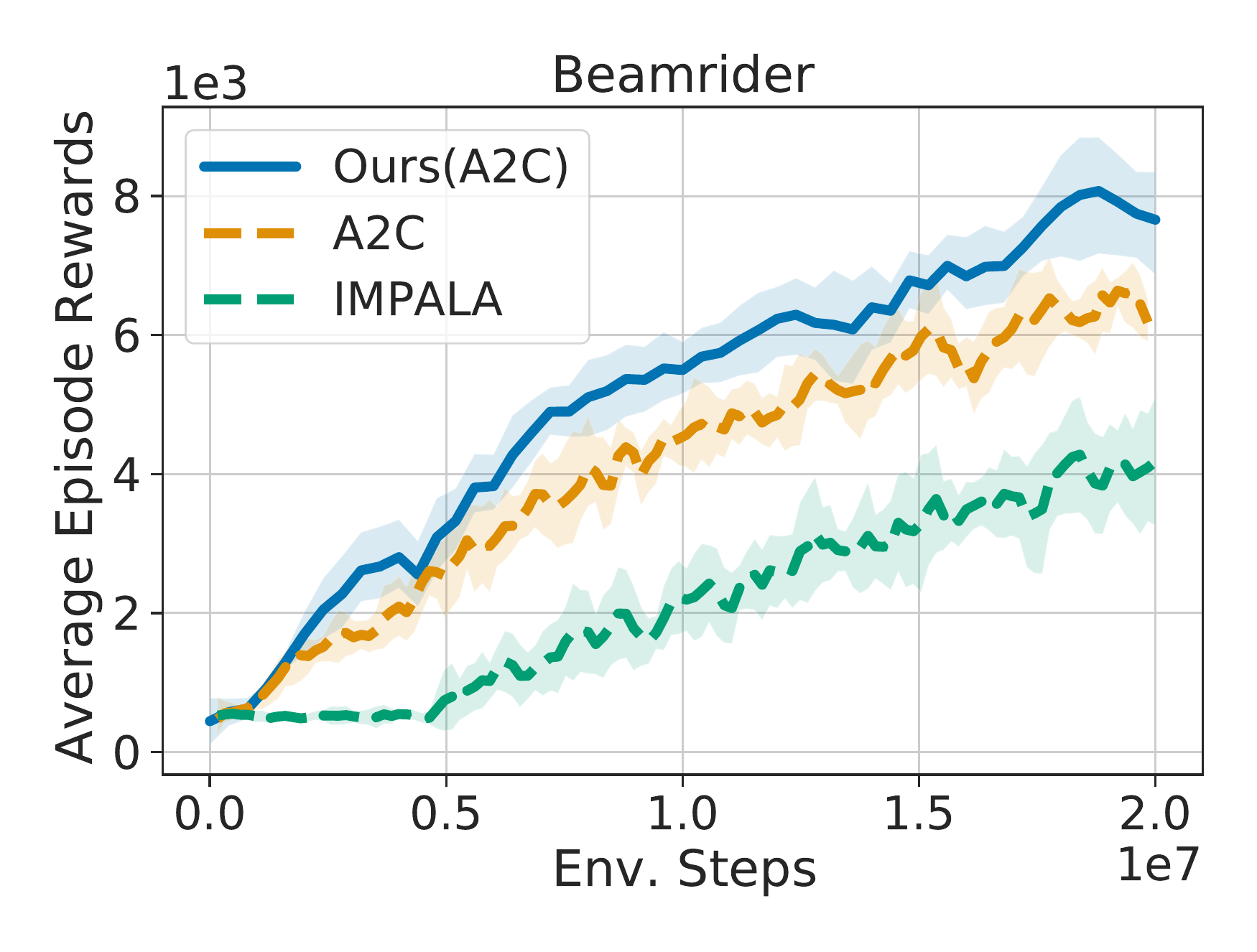}

&\hsp{-0.6cm}
\includegraphics[width=0.33\textwidth]{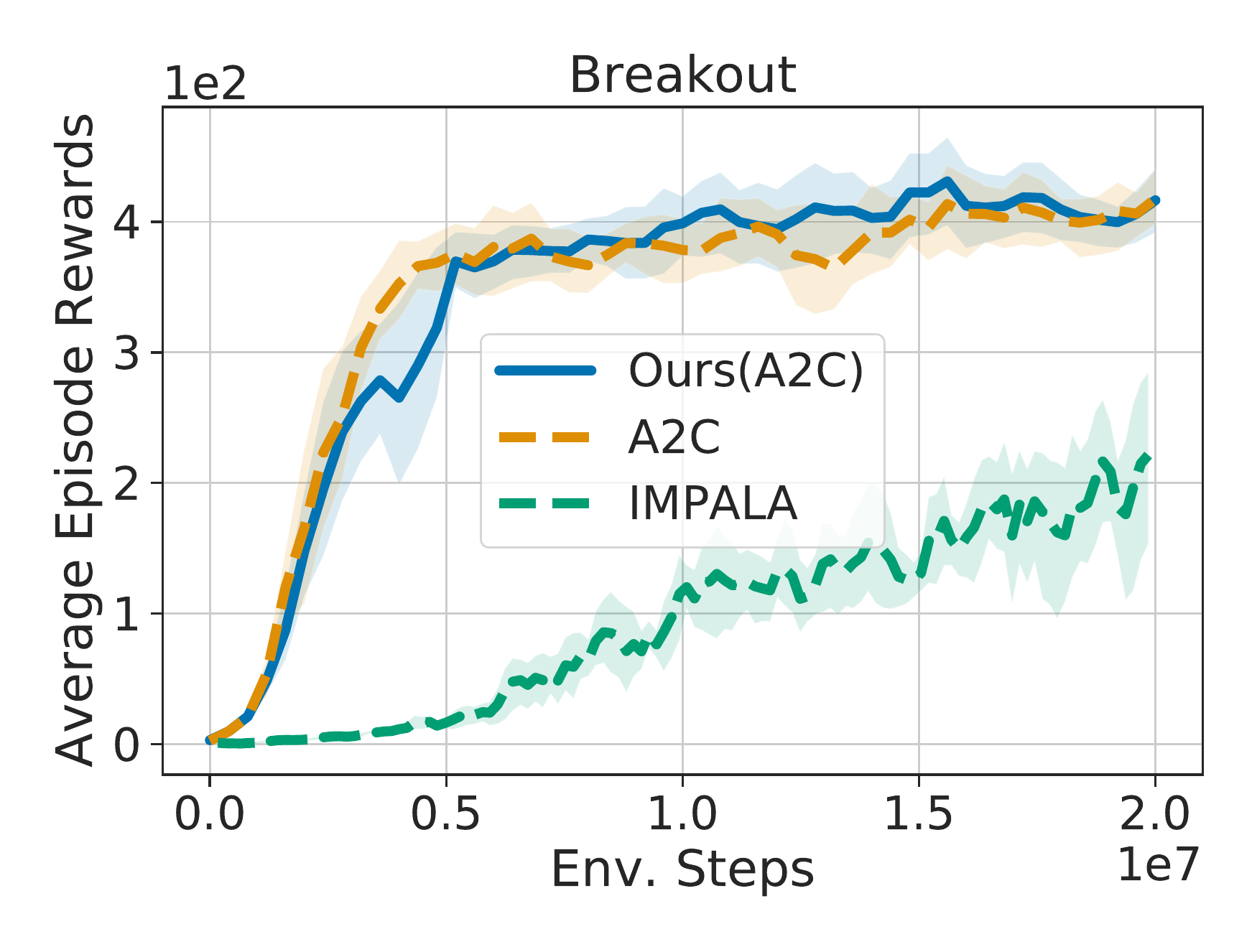}
\\

\includegraphics[width=0.33\textwidth]{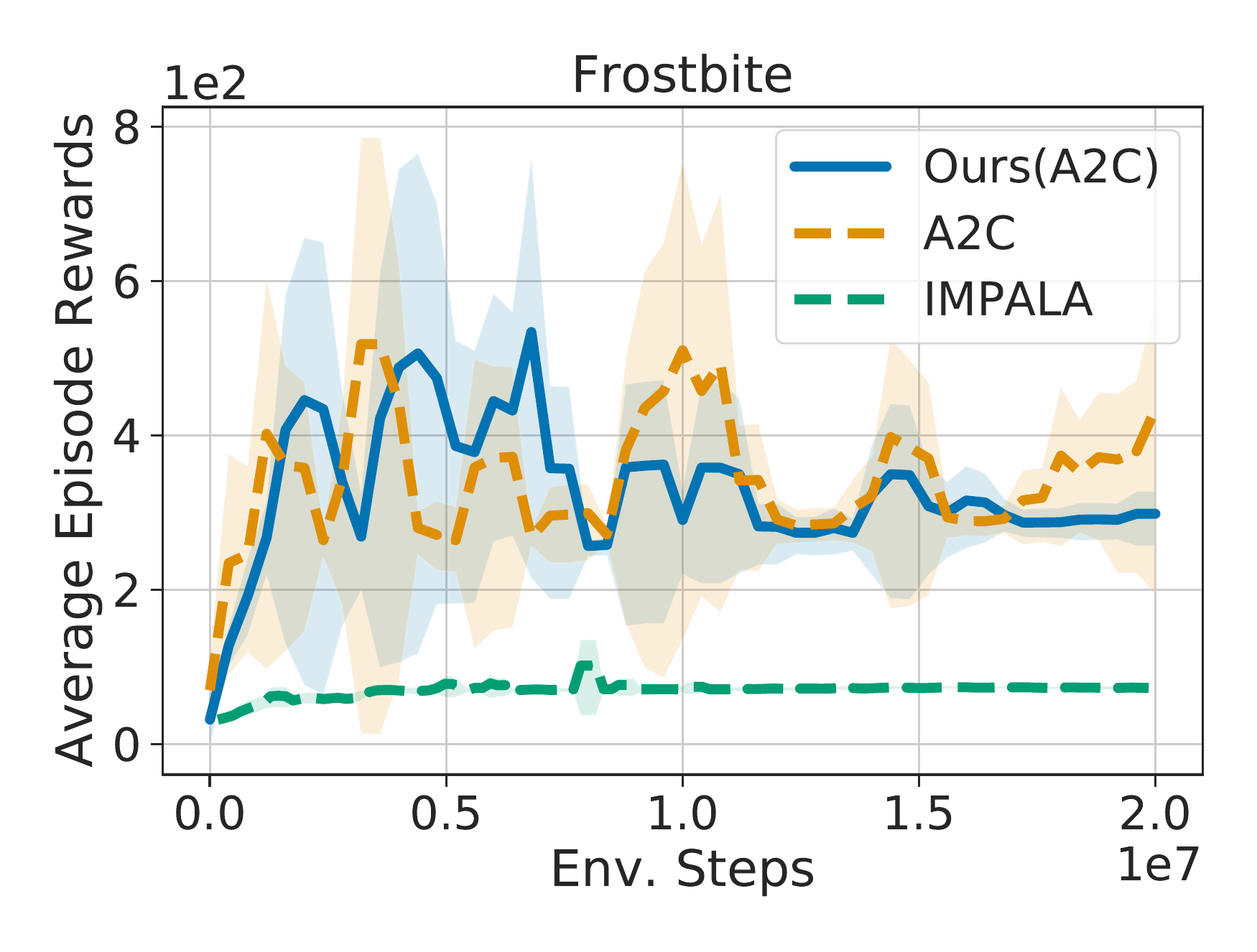}
&\hsp{-0.6cm}
\includegraphics[width=0.33\textwidth]{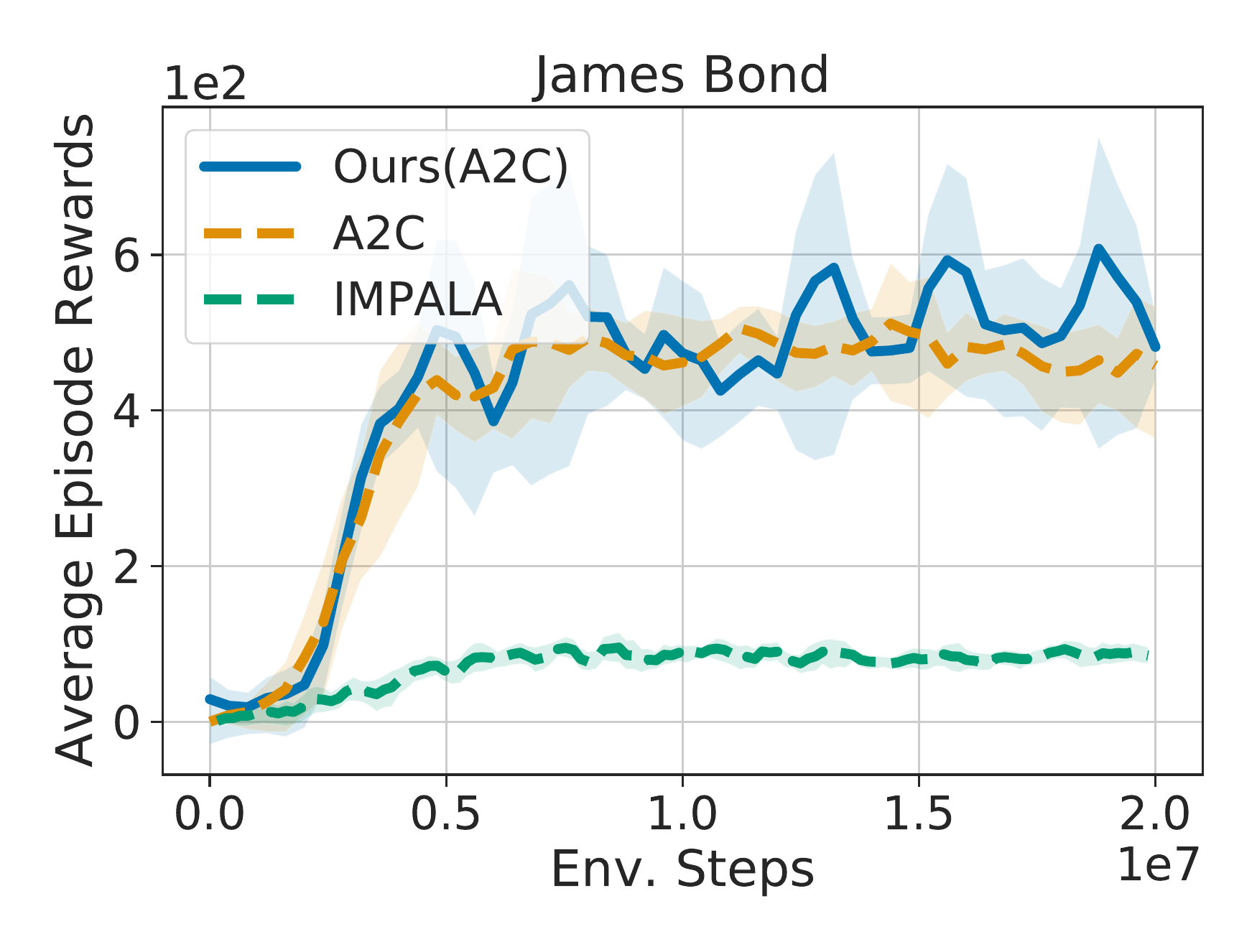}
&\hsp{-0.6cm}
\includegraphics[width=0.33\textwidth]{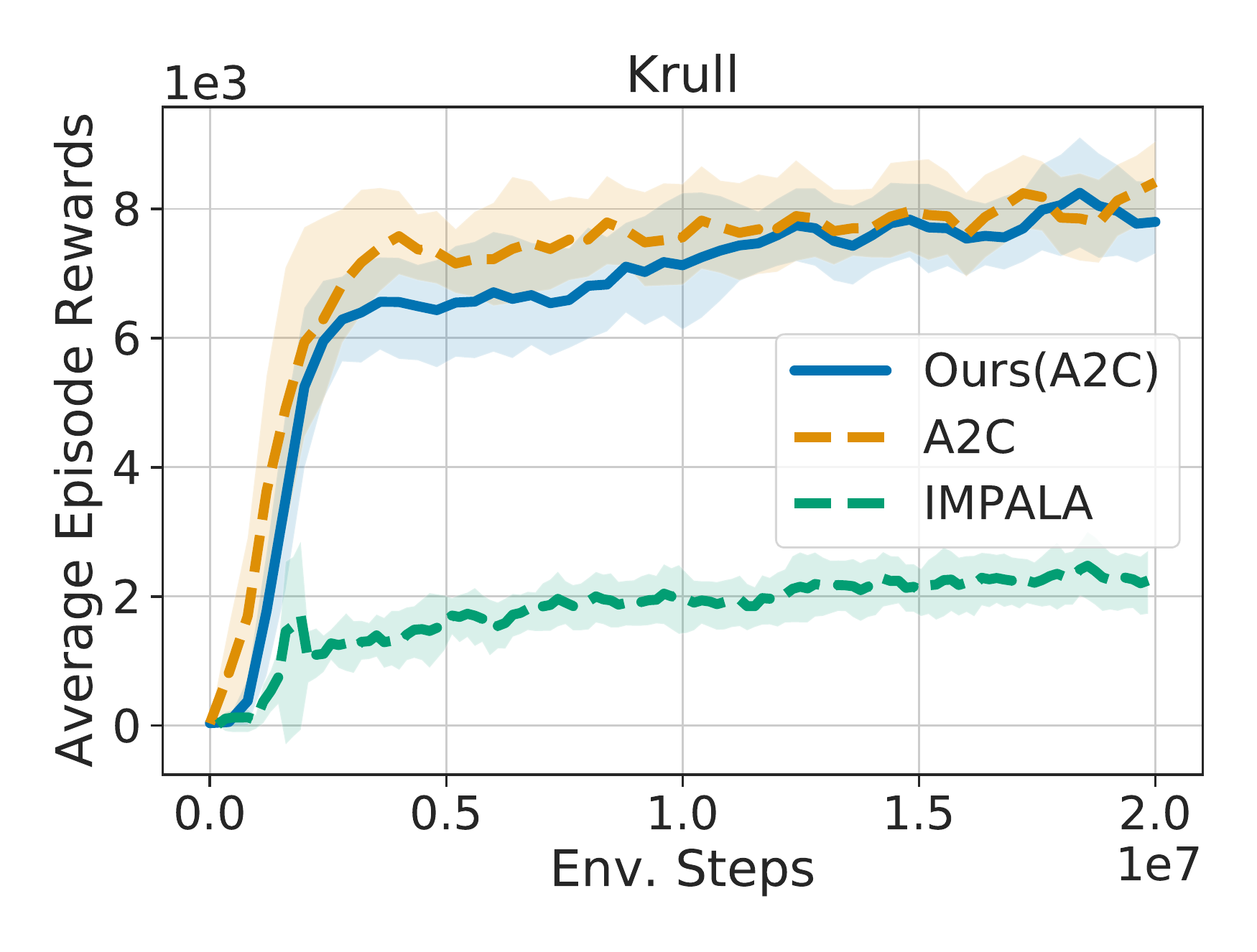}\\

\includegraphics[width=0.33\textwidth]{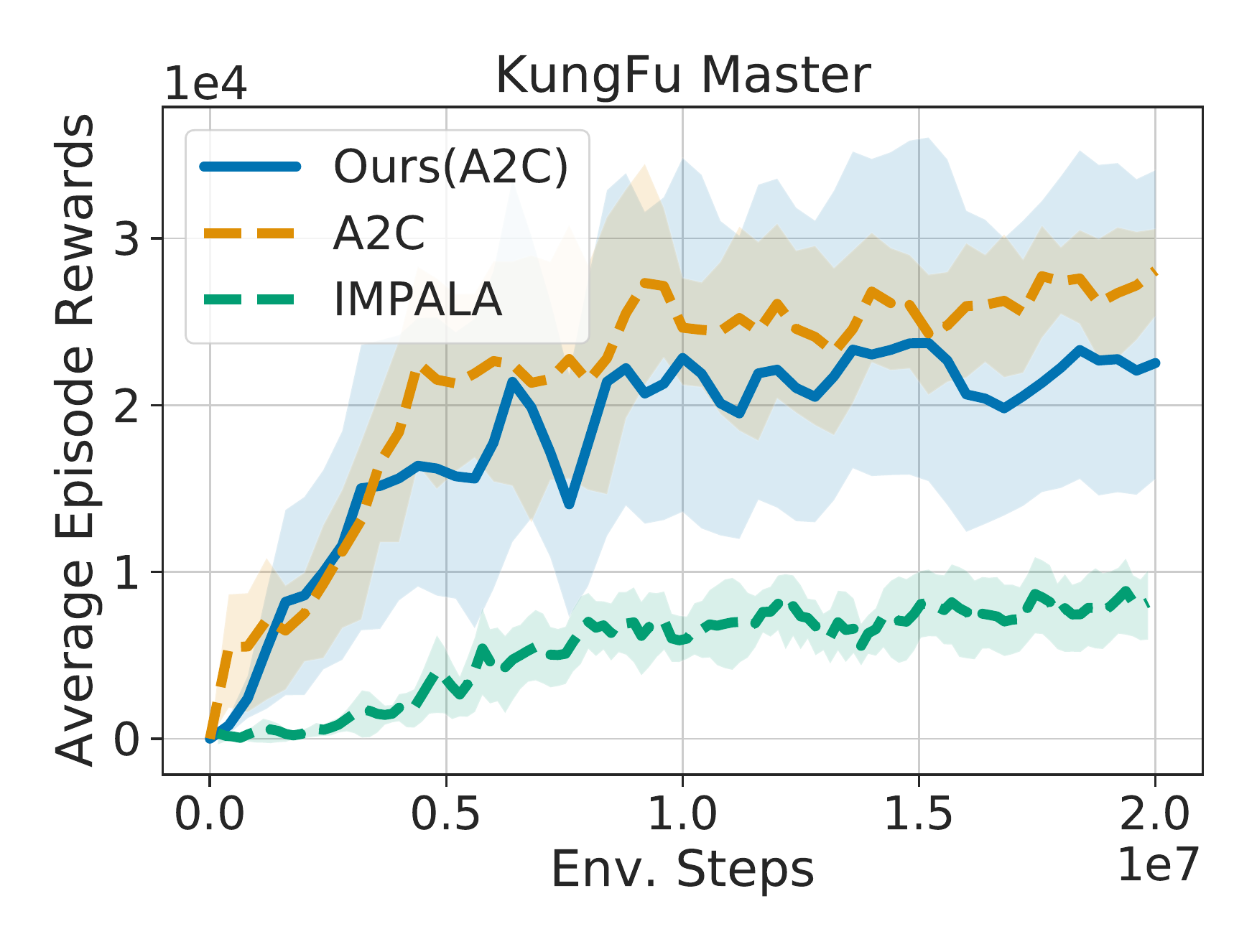}
&\hsp{-0.6cm}
\includegraphics[width=0.33\textwidth]{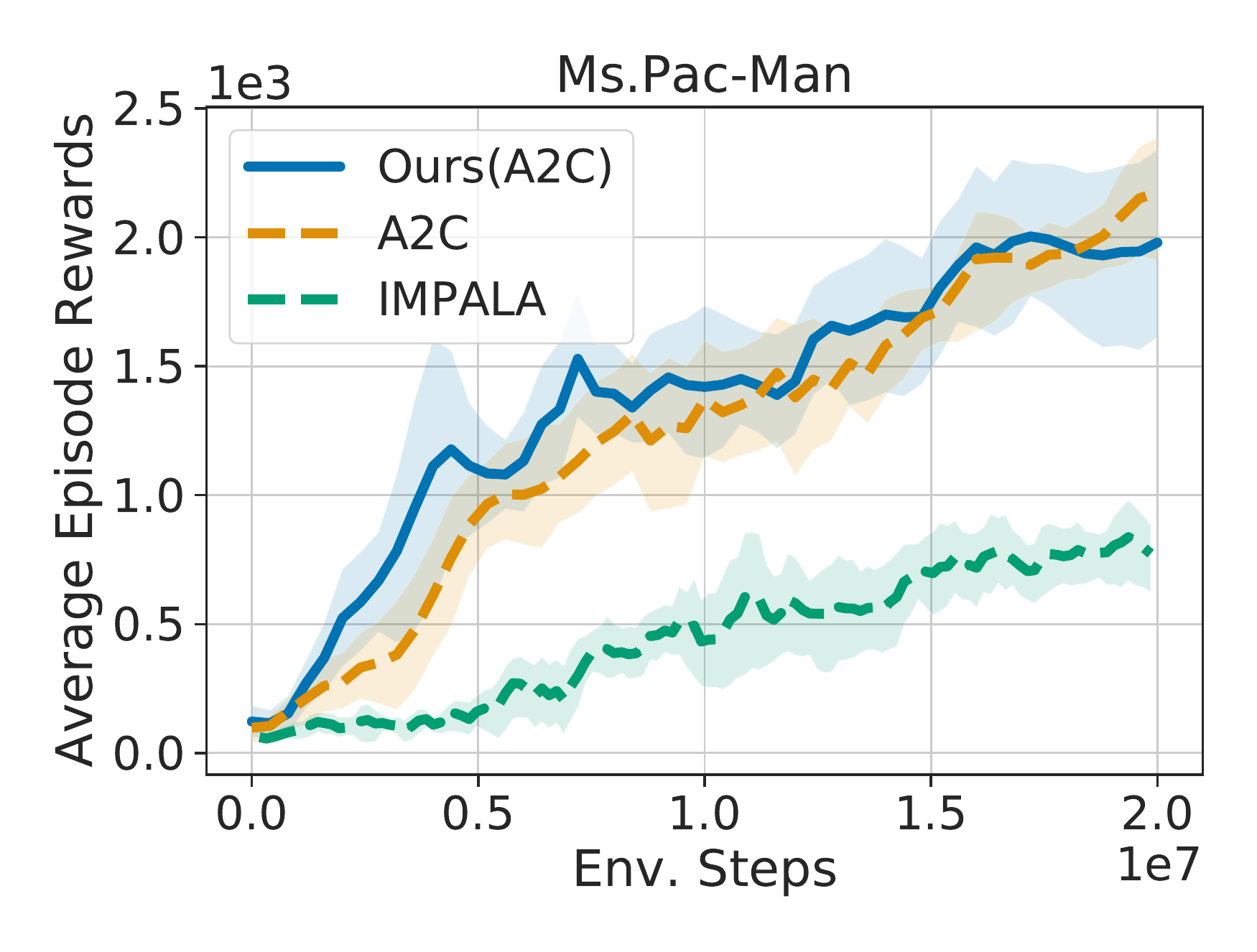}
&\hsp{-0.6cm}
\includegraphics[width=0.33\textwidth]{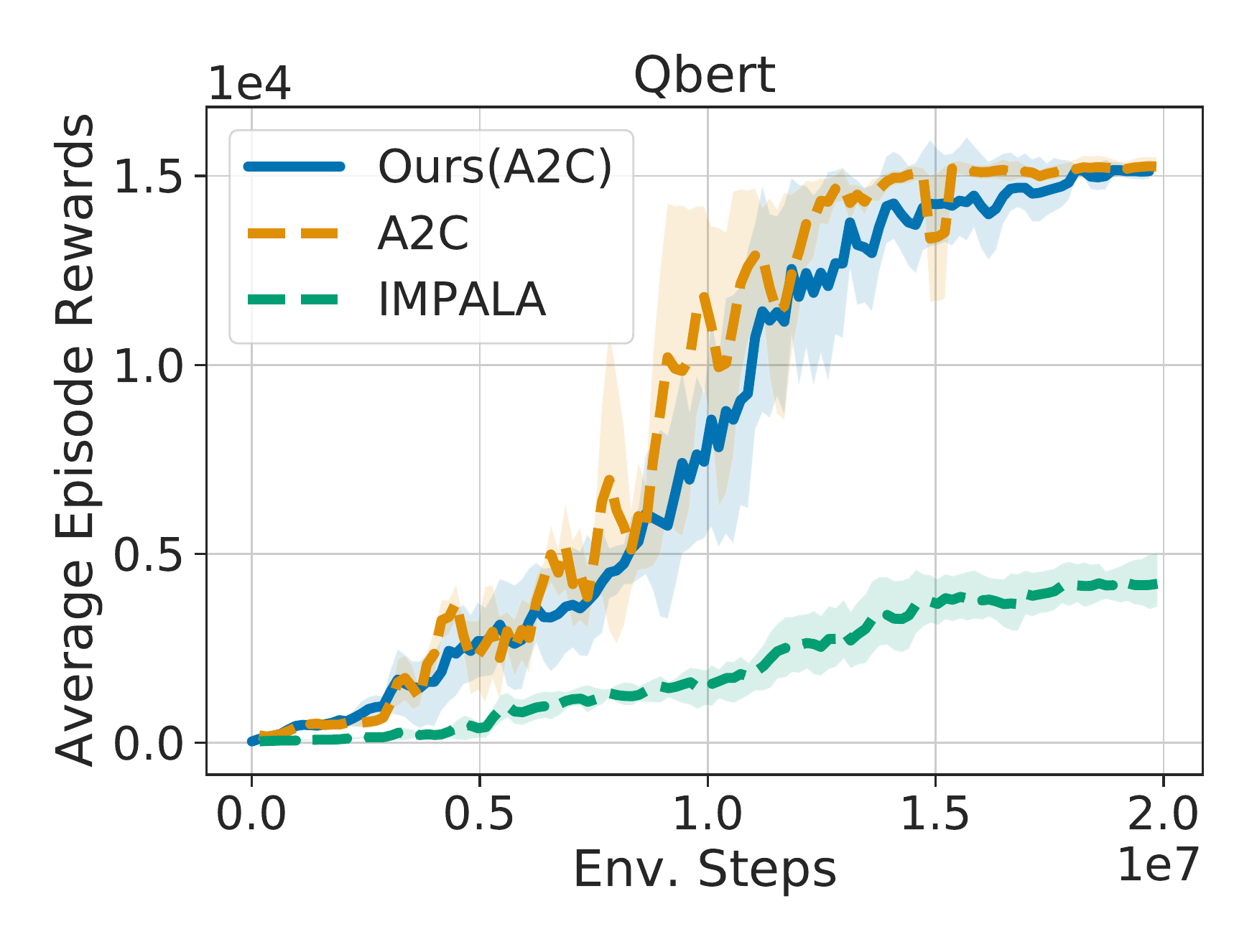}\\

\includegraphics[width=0.33\textwidth]{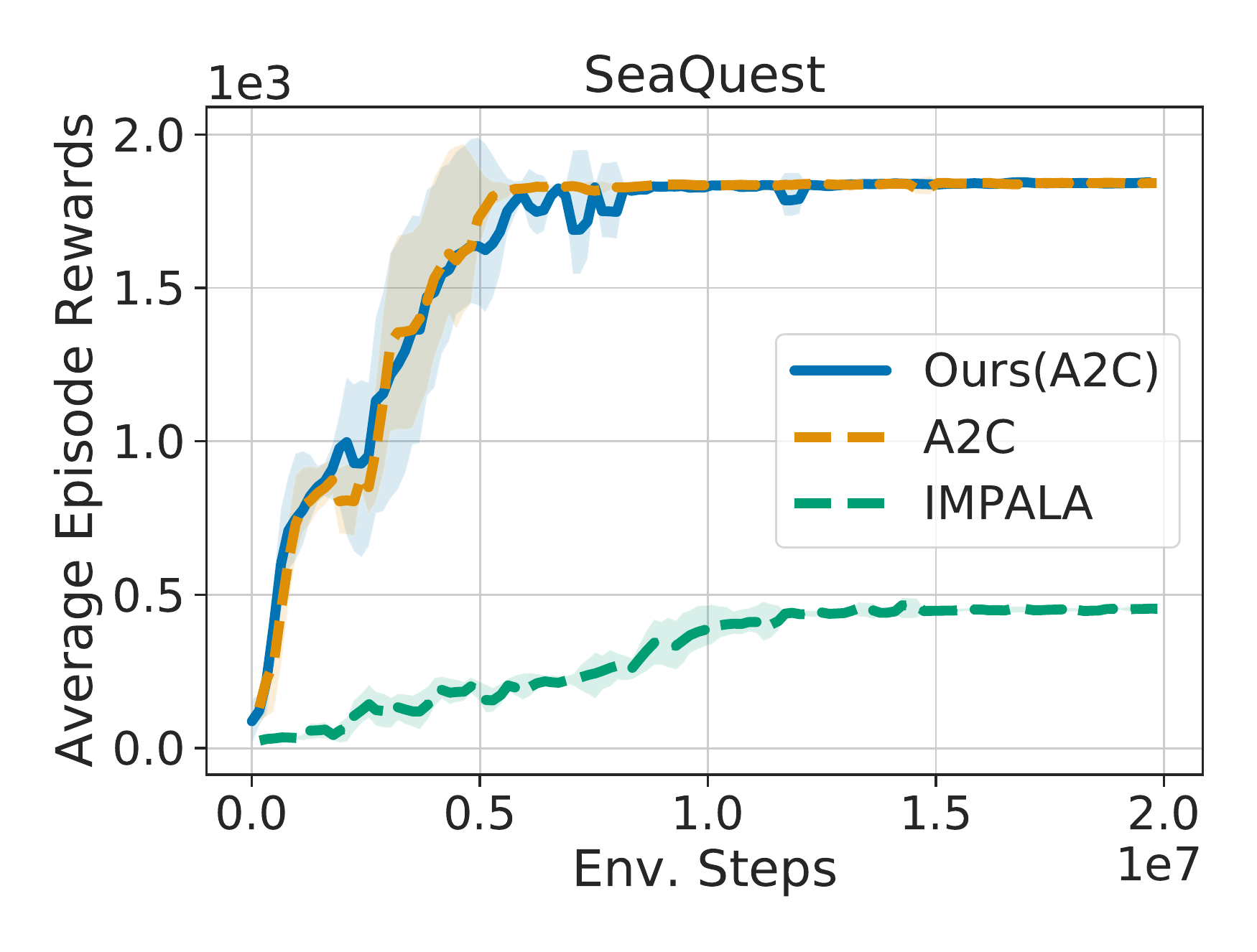}
&\hsp{-0.6cm}
\includegraphics[width=0.33\textwidth]{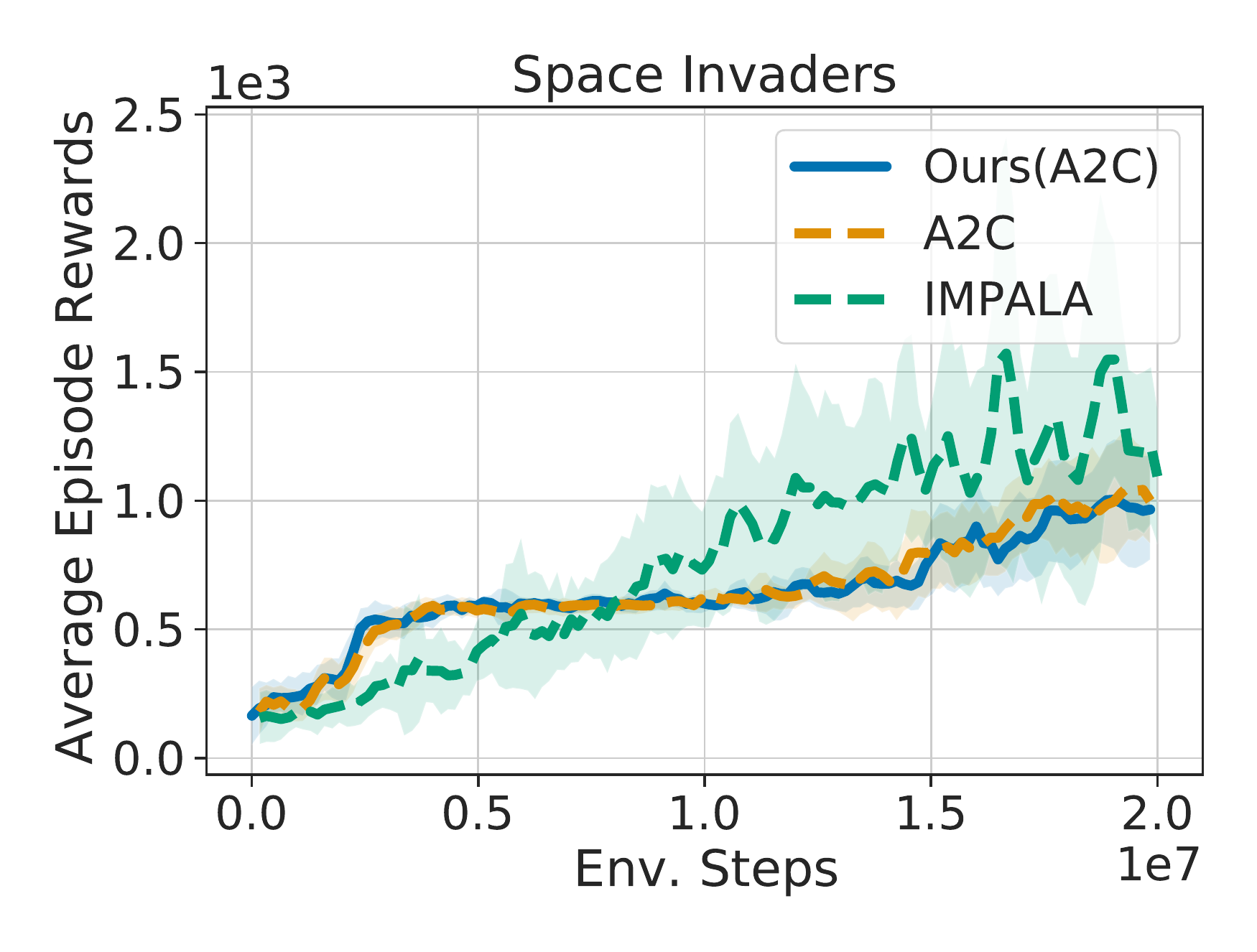}
&\hsp{-0.6cm}
\includegraphics[width=0.33\textwidth]{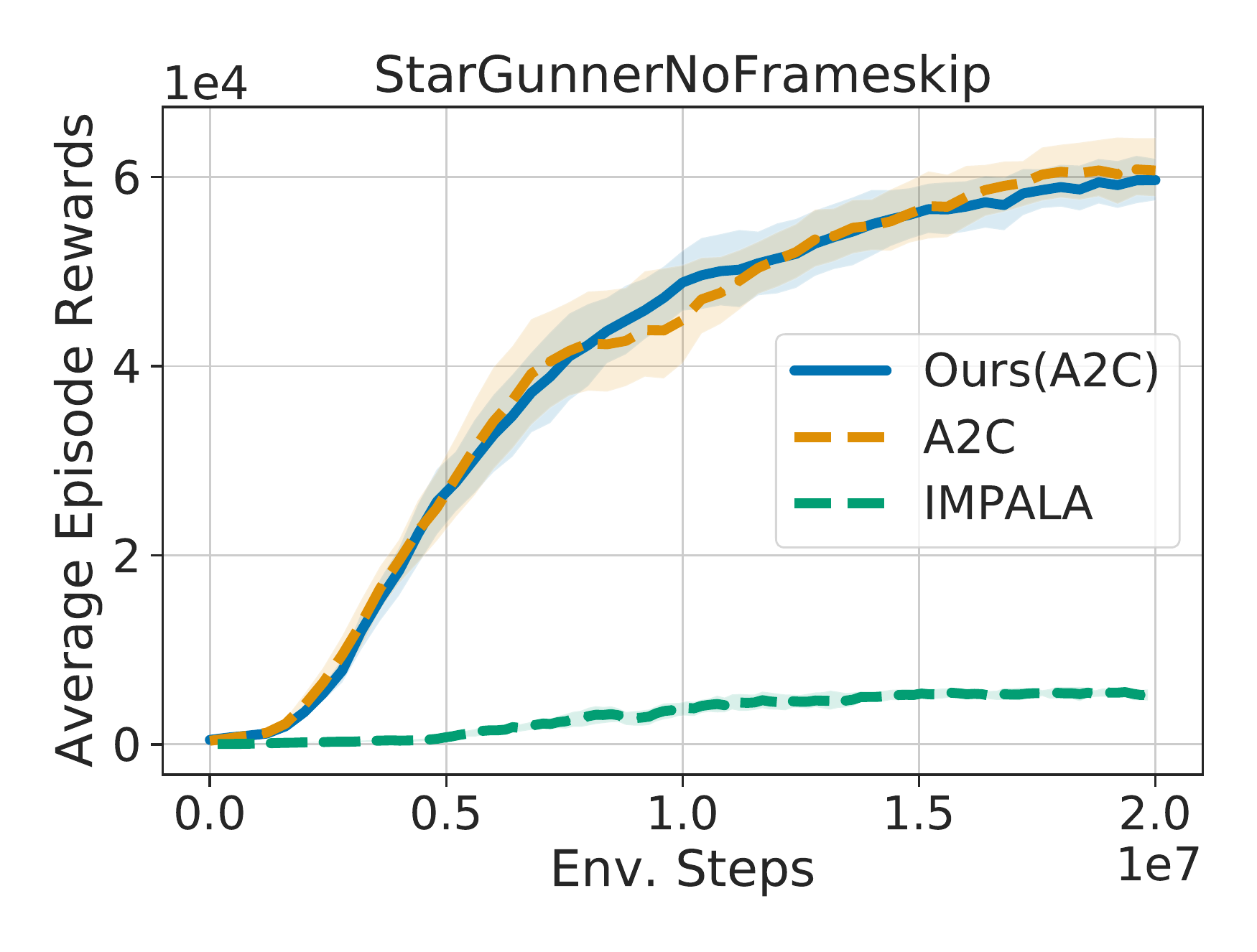}
\end{tabular}
\caption{Atari: {\textbf{Environment step}} versus reward.} 
\label{fig:atari_step_plot}

\end{figure}
\begin{figure}[t]
\centering
\begin{tabular}{cccc}
\includegraphics[width=0.33\textwidth]{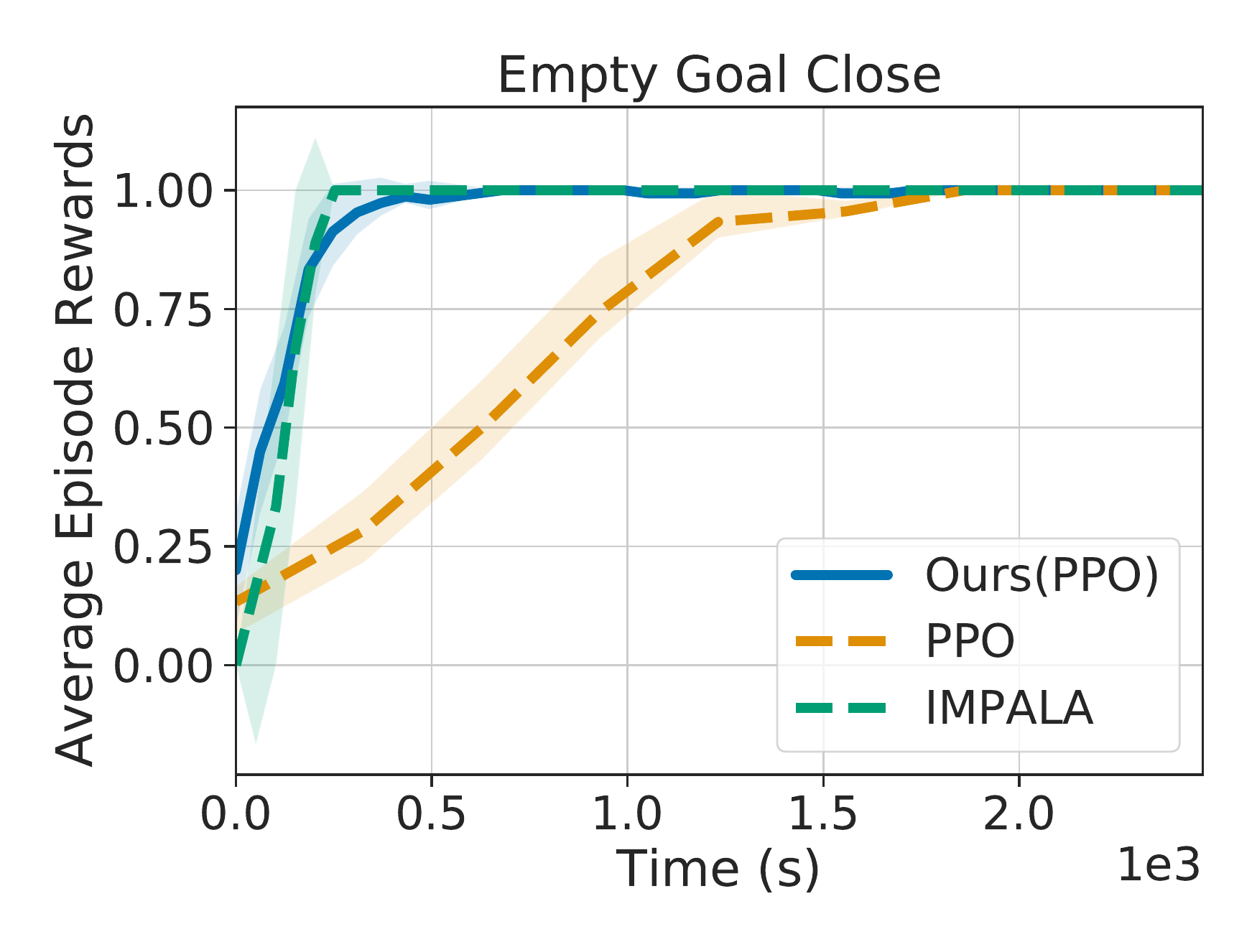}
&\hsp{-0.6cm}
\includegraphics[width=0.33\textwidth]{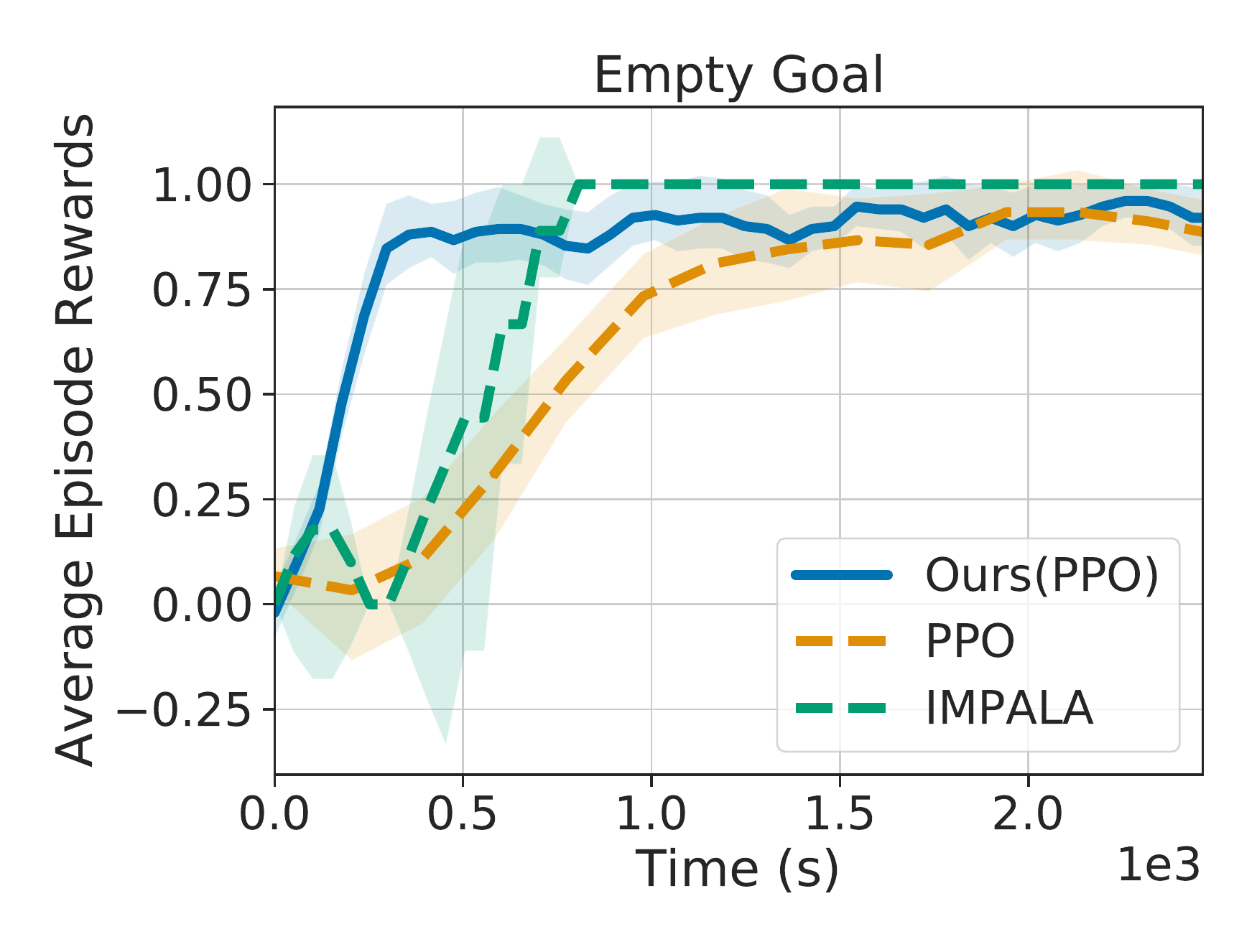}
&\hsp{-0.6cm}
\includegraphics[width=0.33\textwidth]{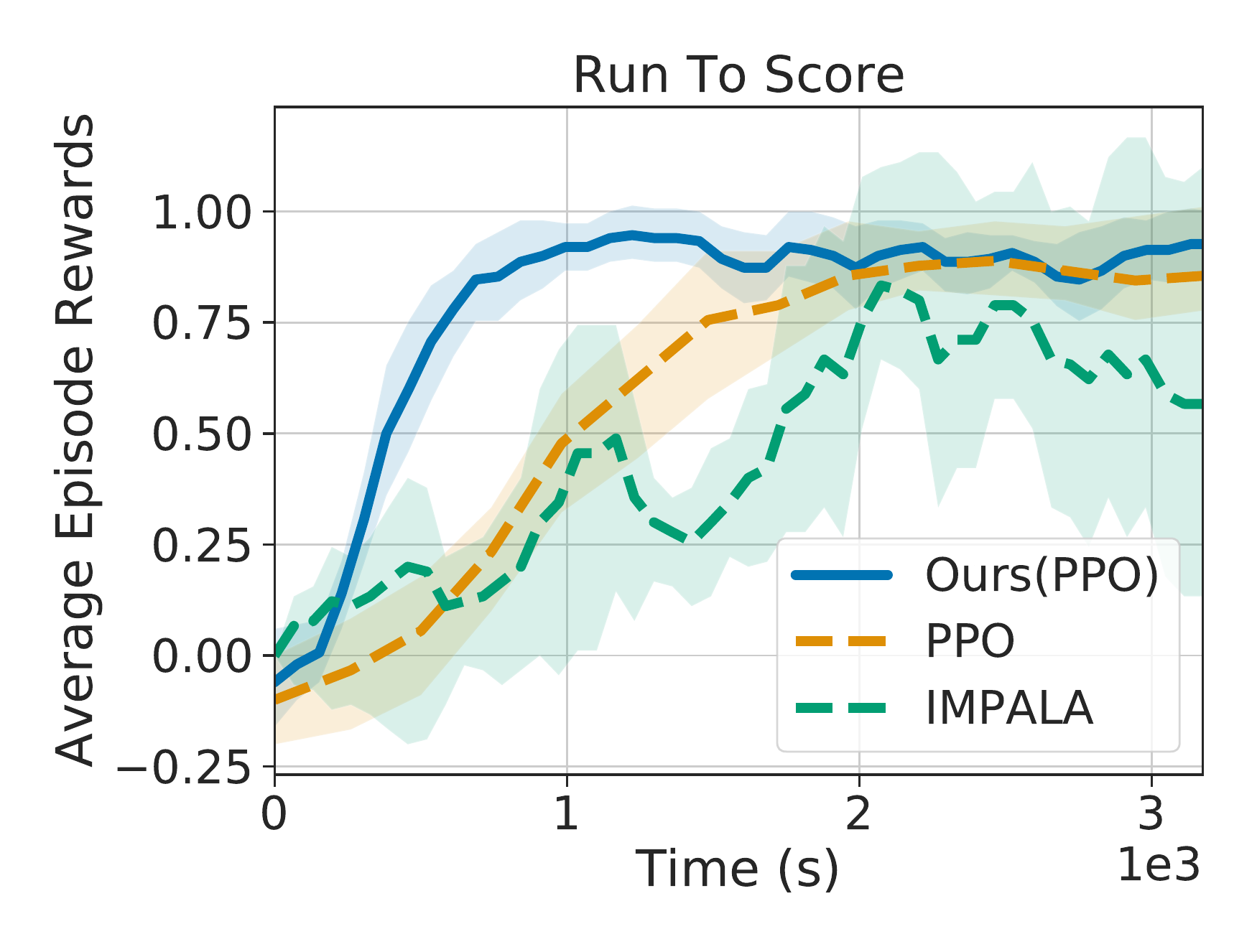}\\

\includegraphics[width=0.33\textwidth]{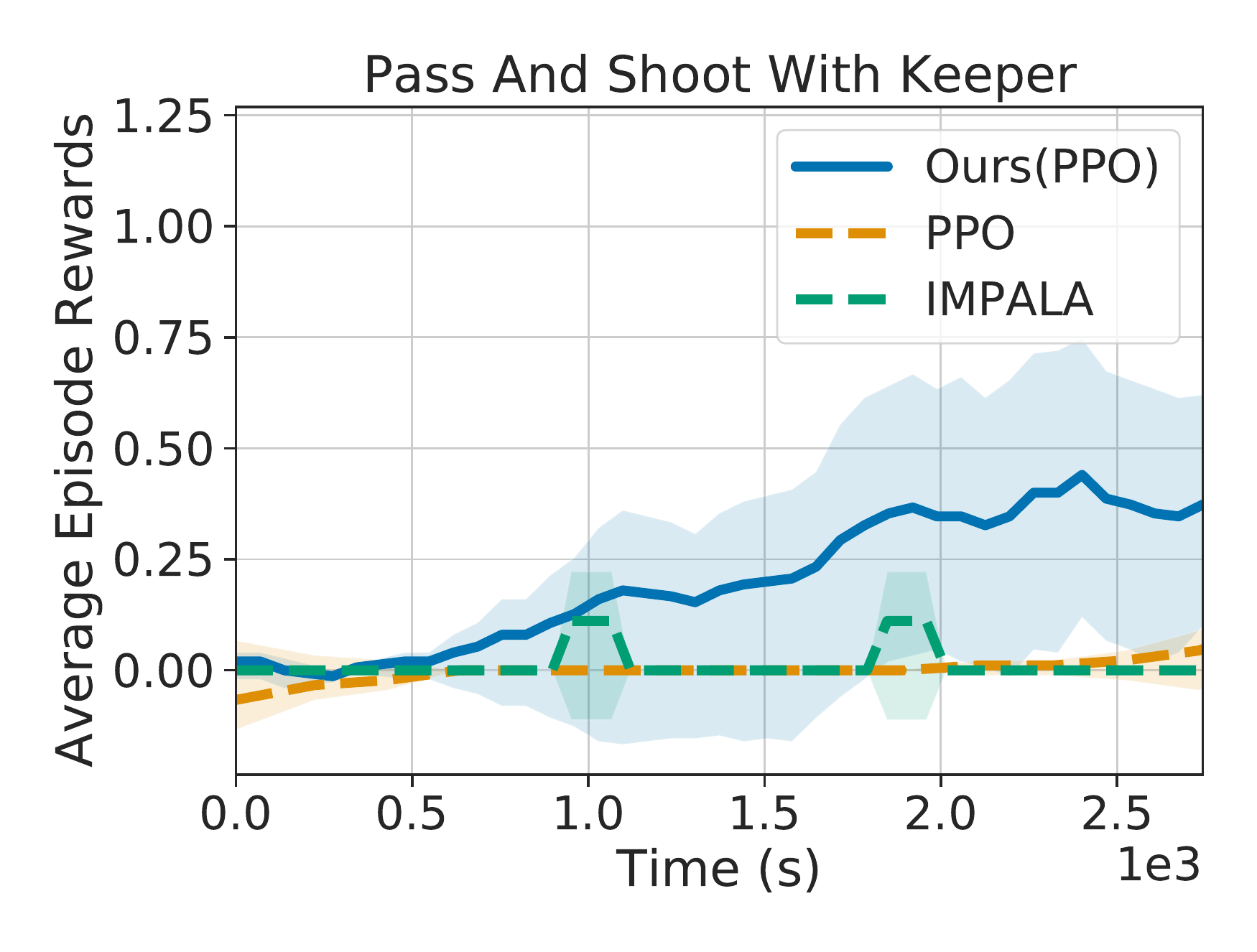}
&\hsp{-0.6cm}
\includegraphics[width=0.33\textwidth]{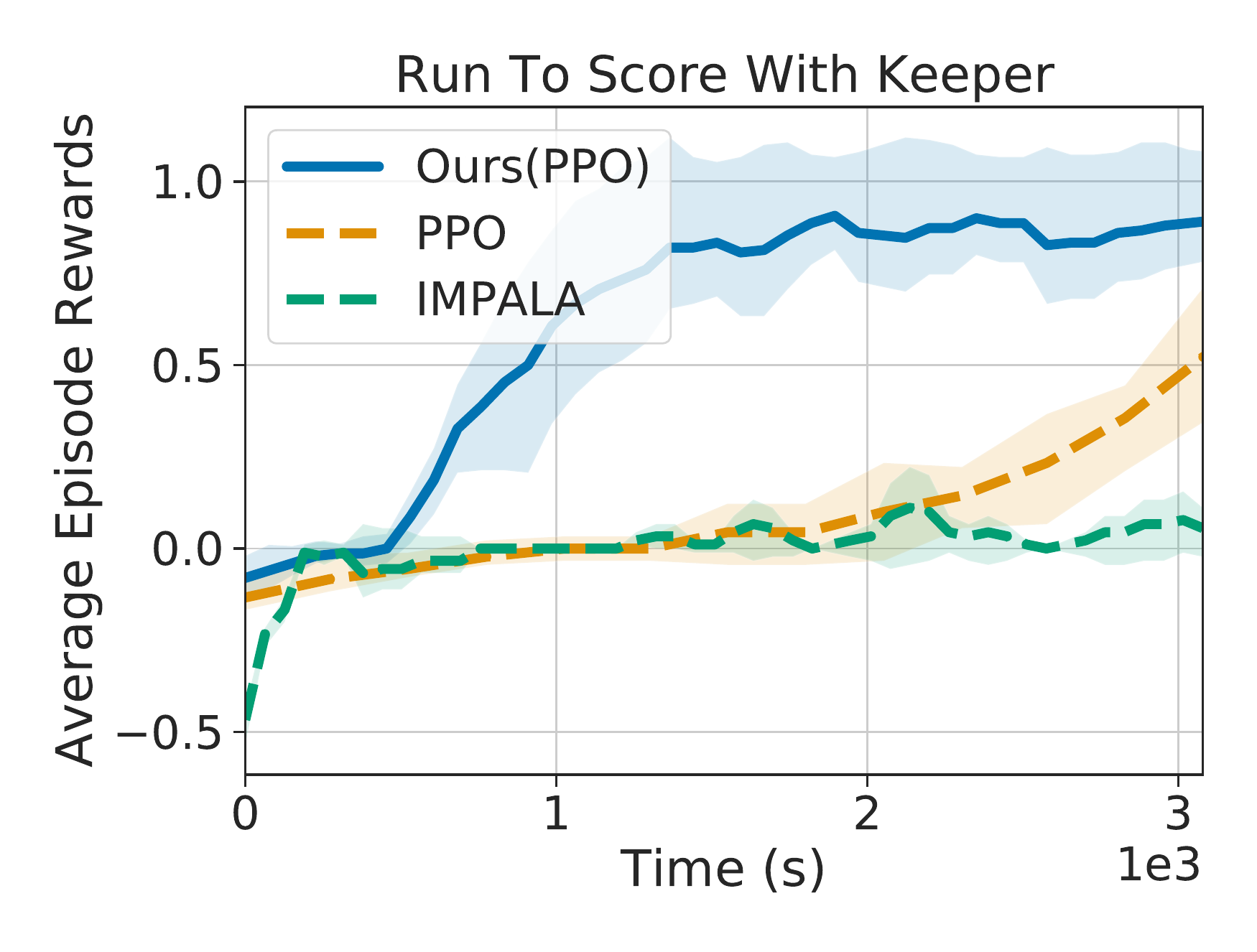}
&\hsp{-0.6cm}
\includegraphics[width=0.33\textwidth]{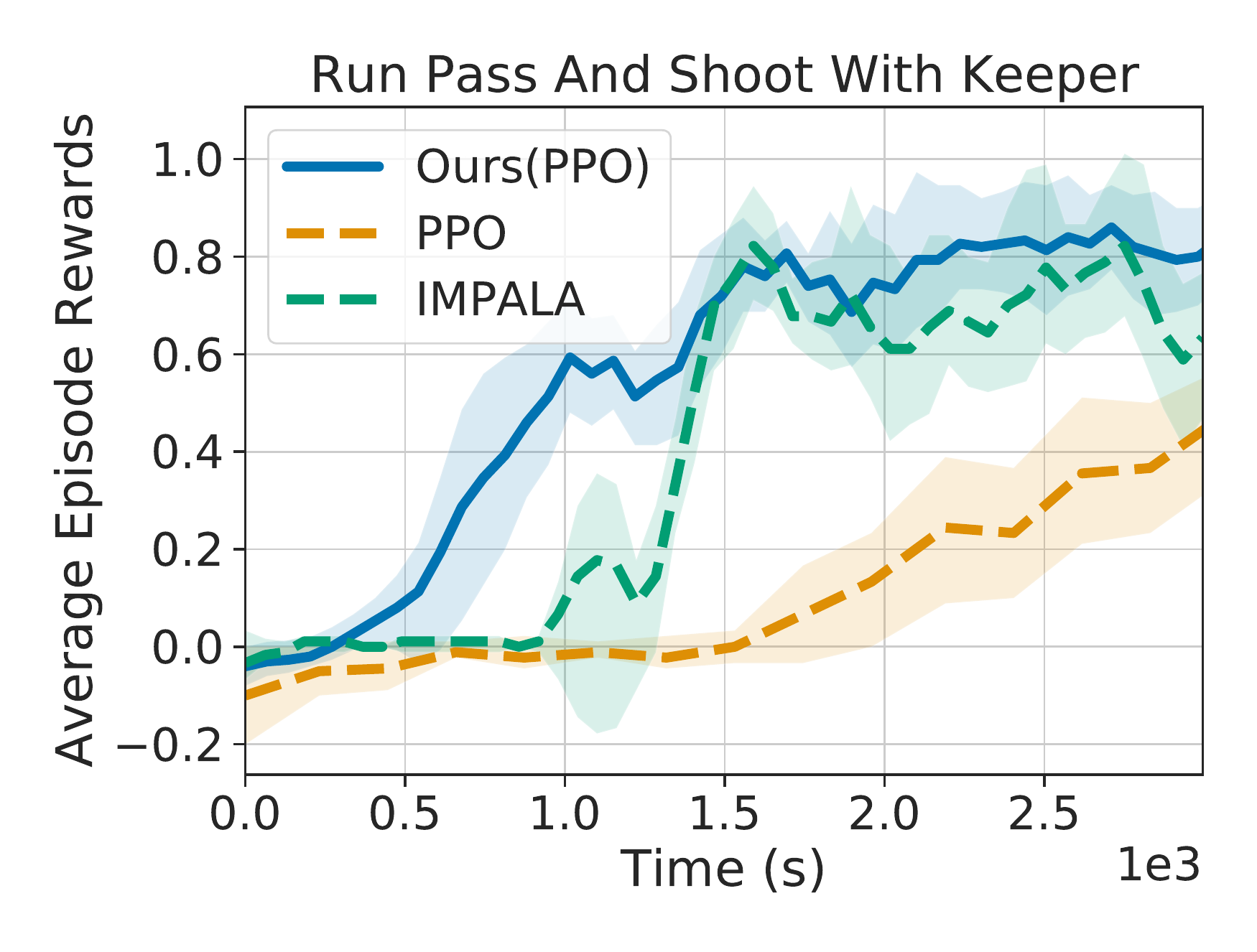}\\

\includegraphics[width=0.33\textwidth]{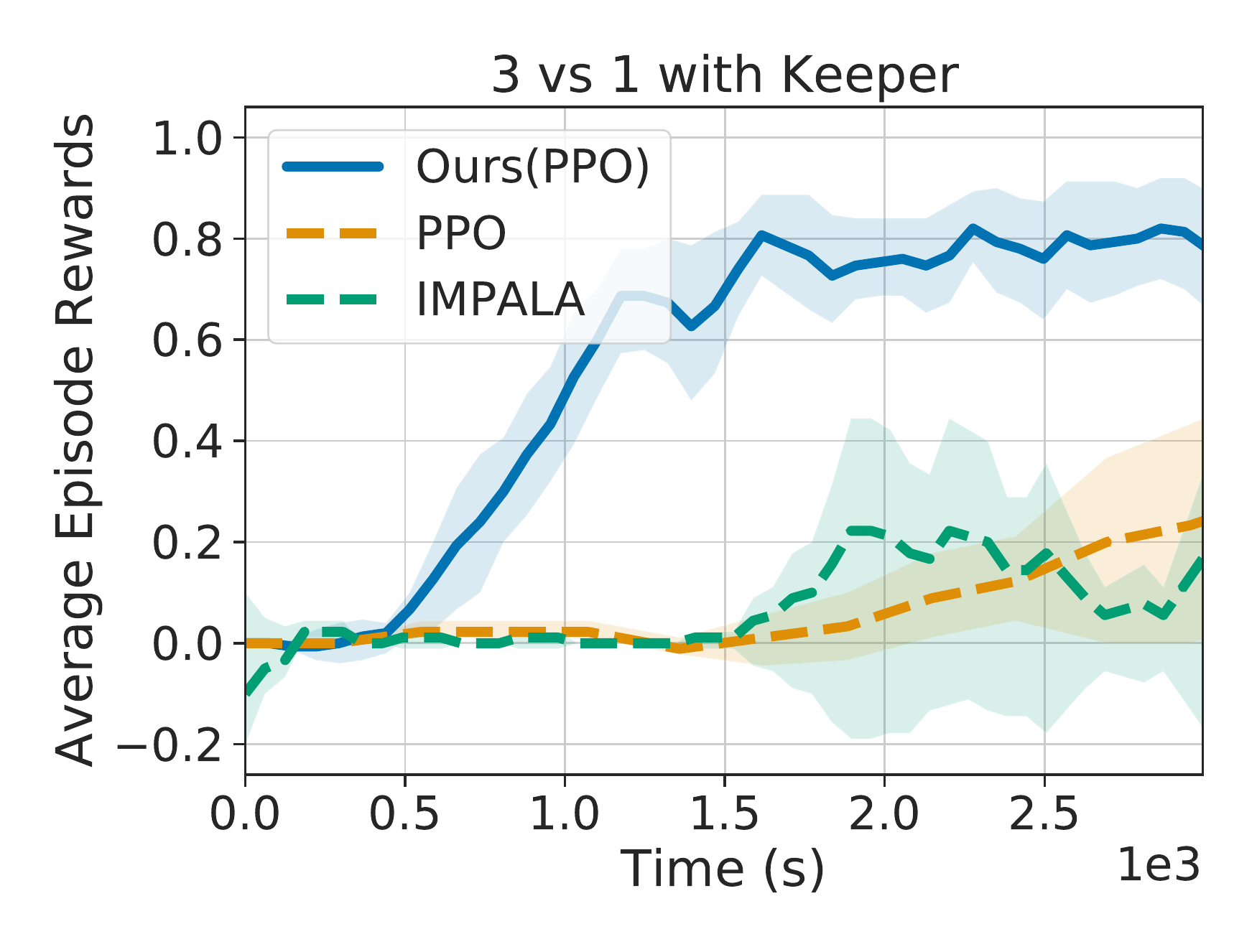}
&\hsp{-0.6cm}
\includegraphics[width=0.33\textwidth]{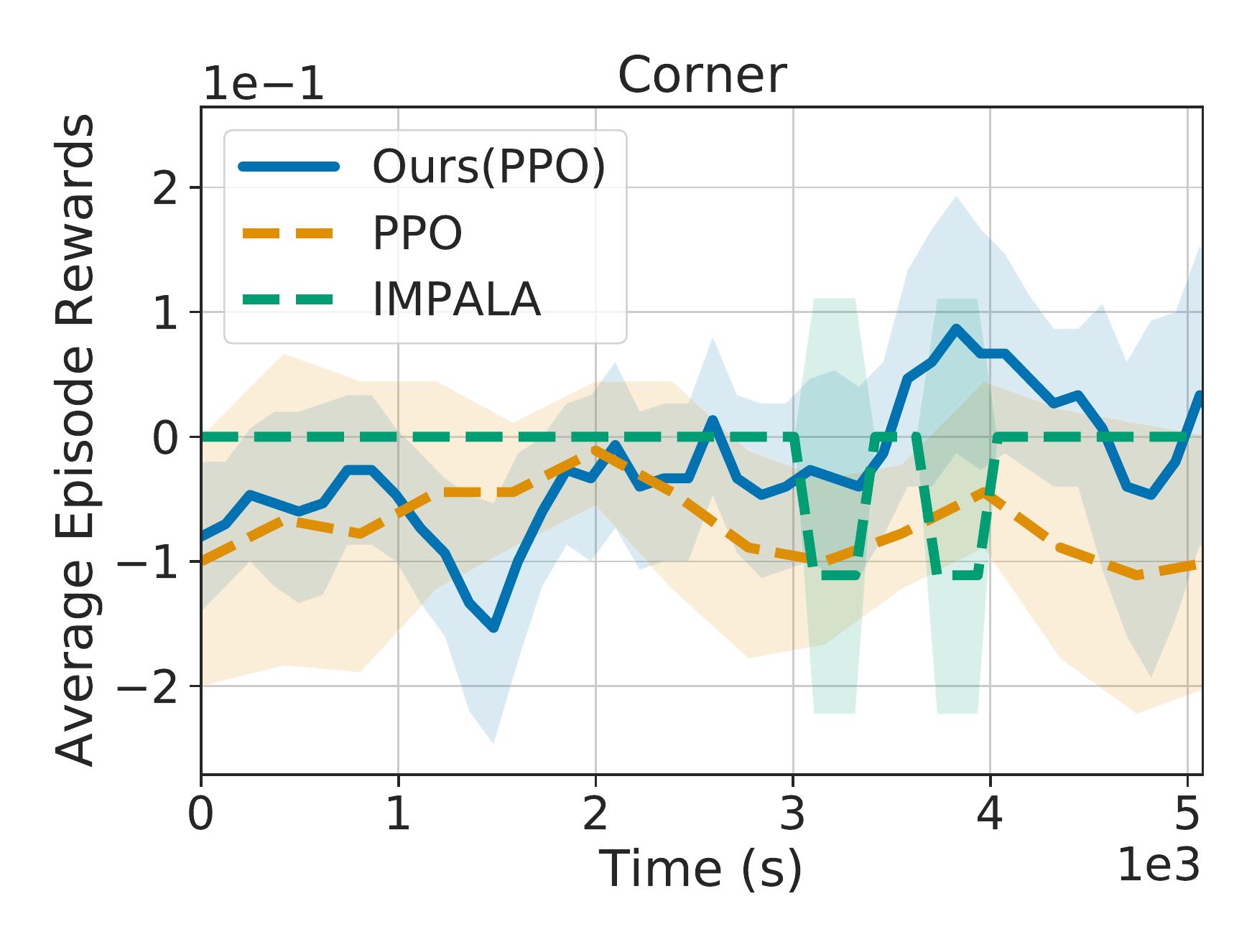}
&\hsp{-0.6cm}
\includegraphics[width=0.33\textwidth]{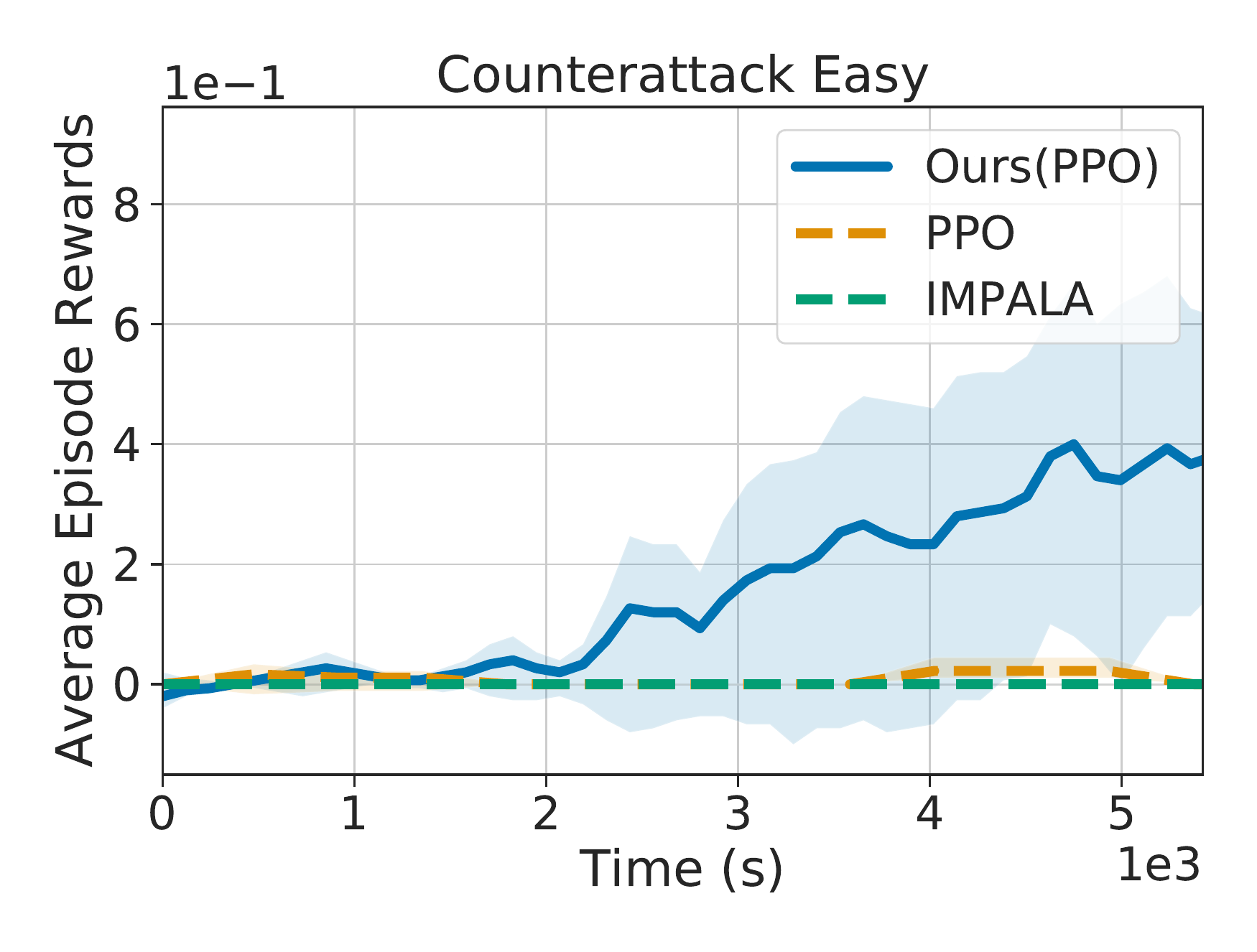}\\

\includegraphics[width=0.33\textwidth]{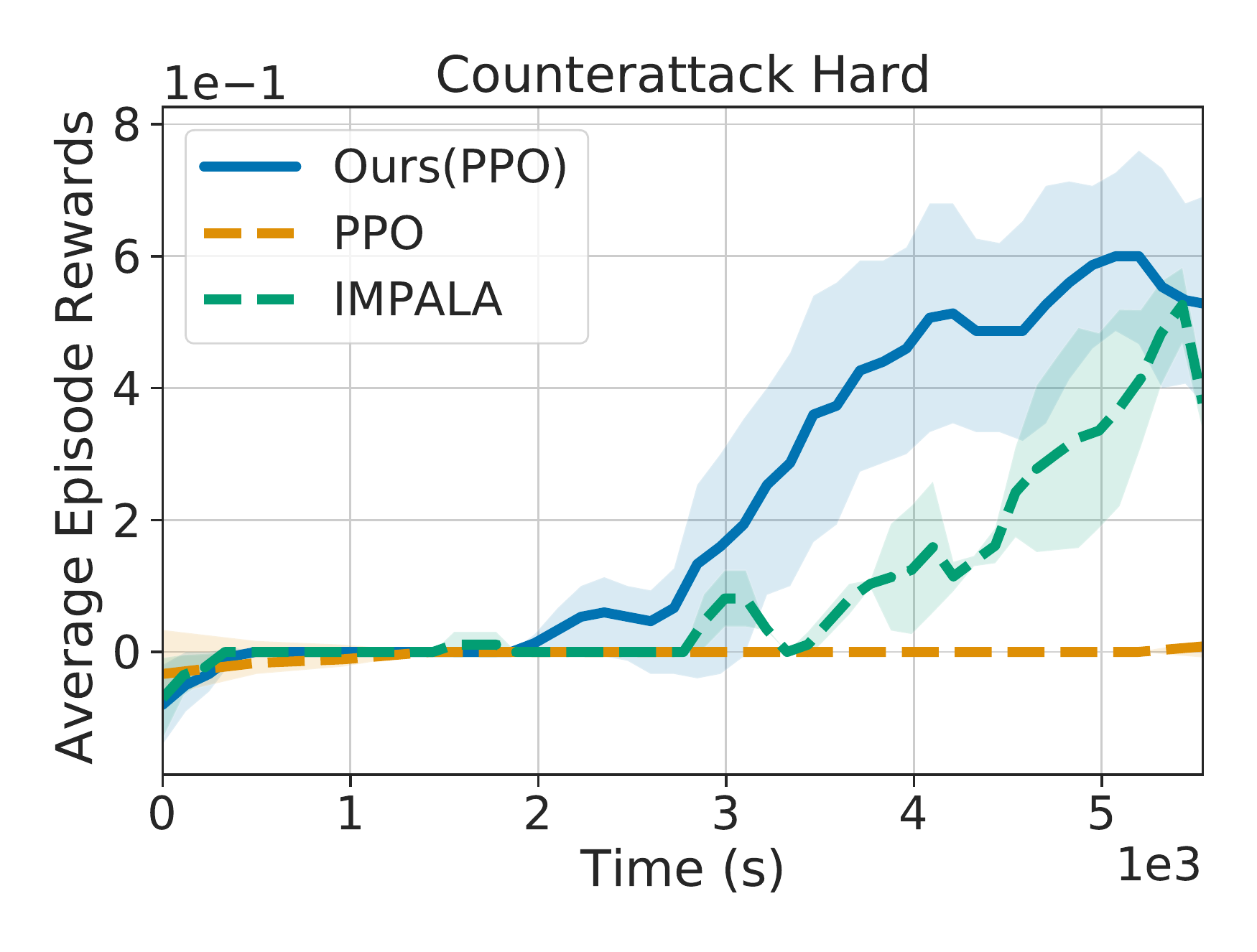}
&\hsp{-0.6cm}
\includegraphics[width=0.33\textwidth]{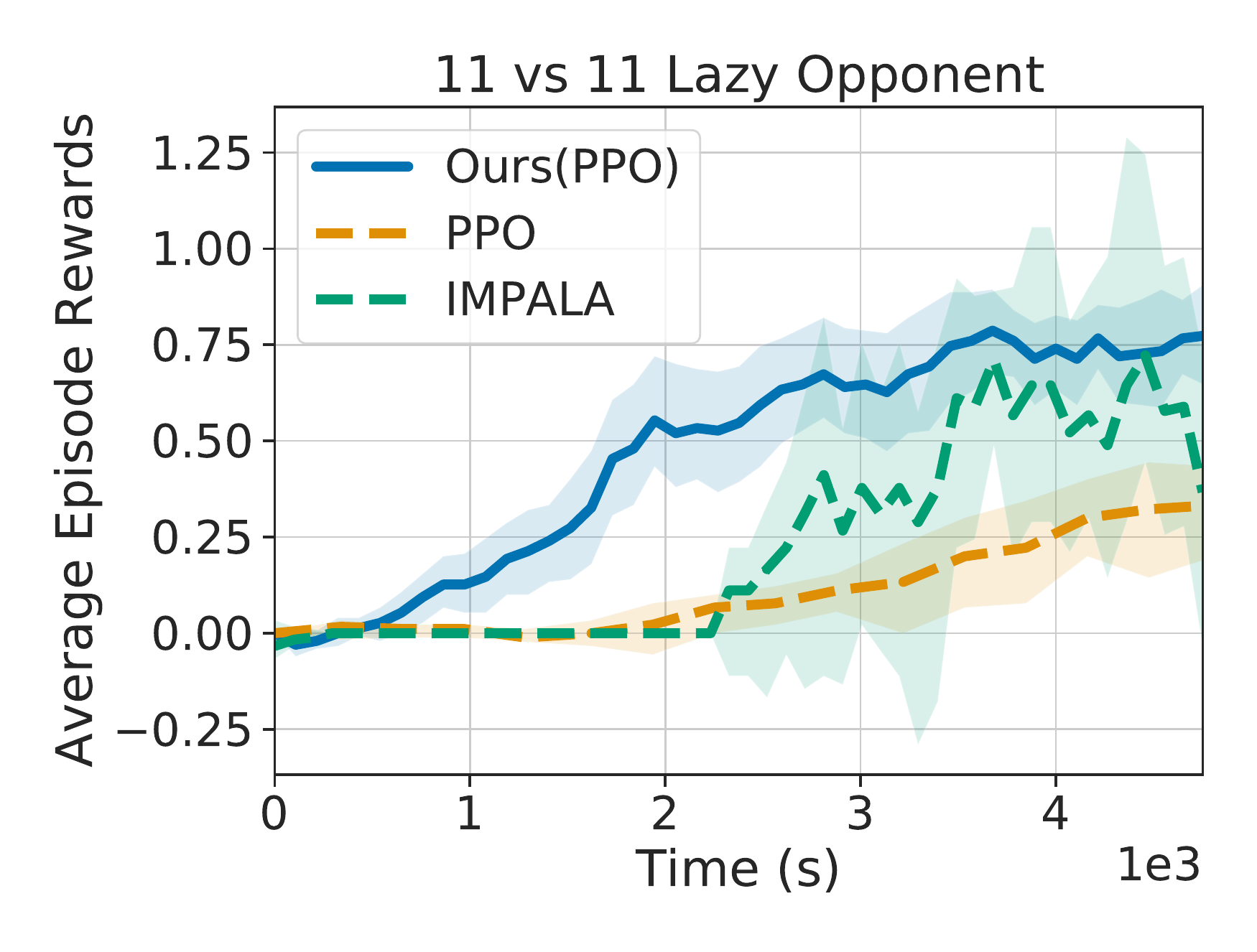}
&\hsp{-0.6cm}
\end{tabular}
\caption{GFootball: {\textbf{Time}} versus reward.} 
\label{fig:football_time_plot}
\end{figure}

\begin{figure*}[t]
\centering
\begin{tabular}{cccc}
\includegraphics[width=0.33\textwidth]{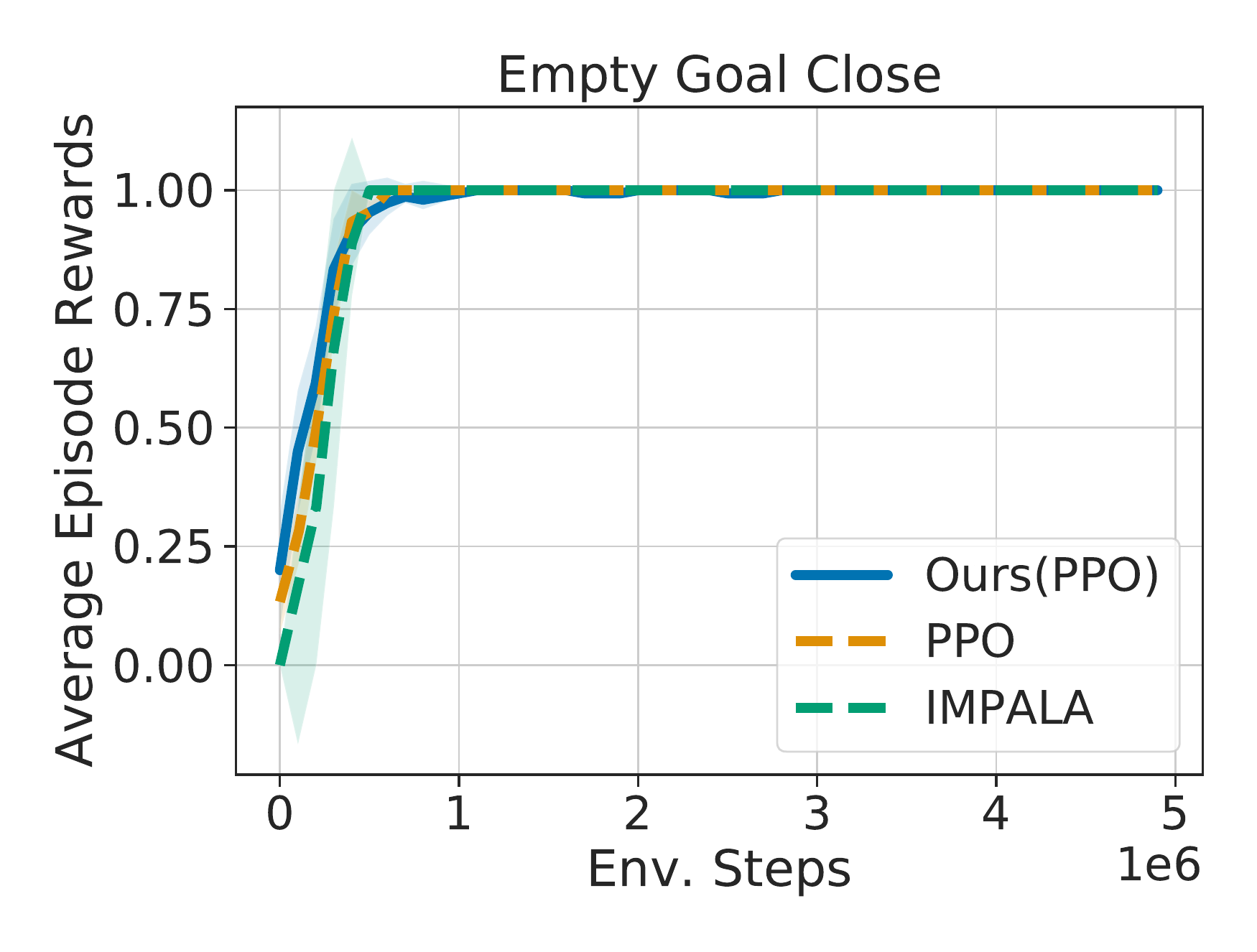}
&\hsp{-0.6cm}
\includegraphics[width=0.33\textwidth]{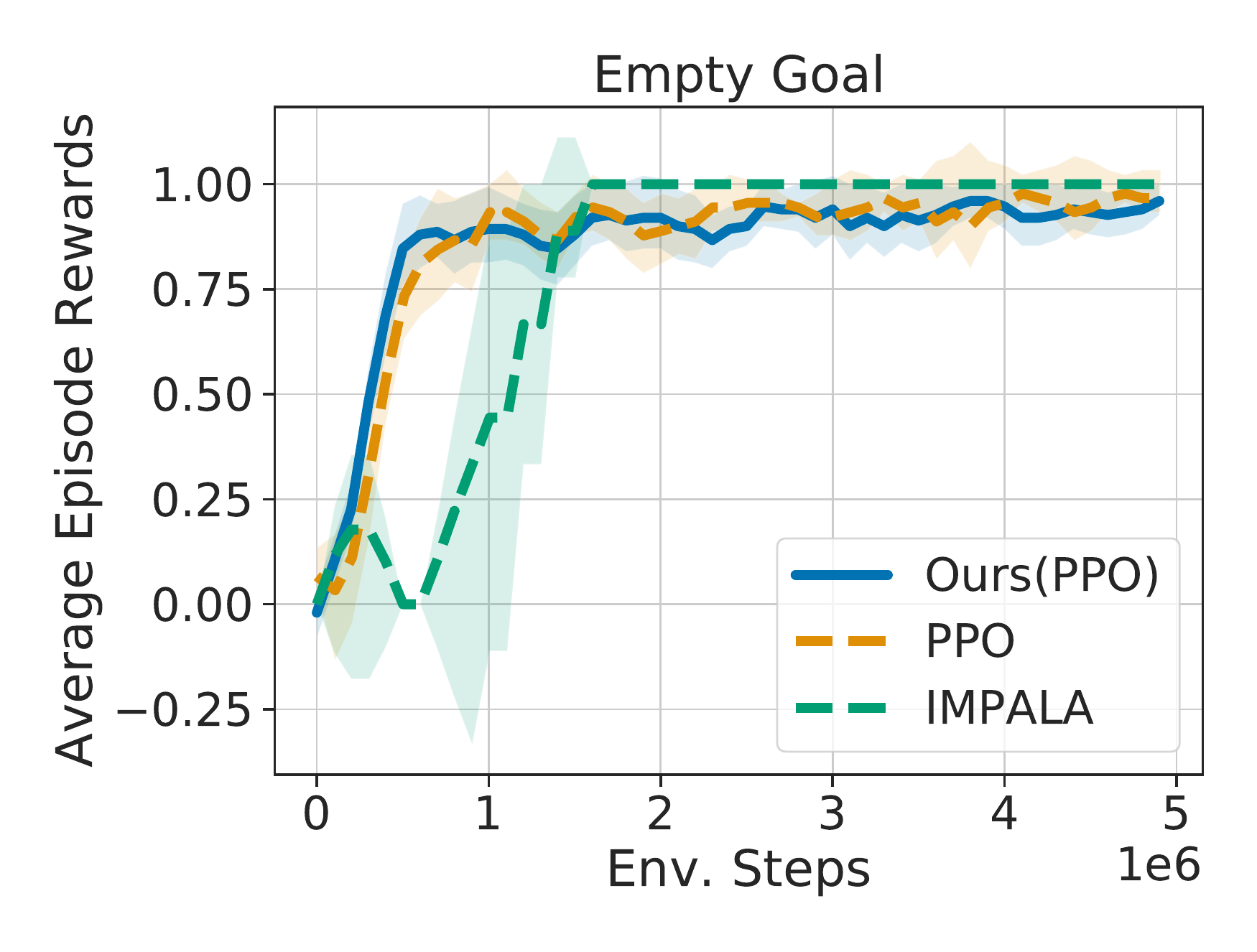}
&\hsp{-0.6cm}
\includegraphics[width=0.33\textwidth]{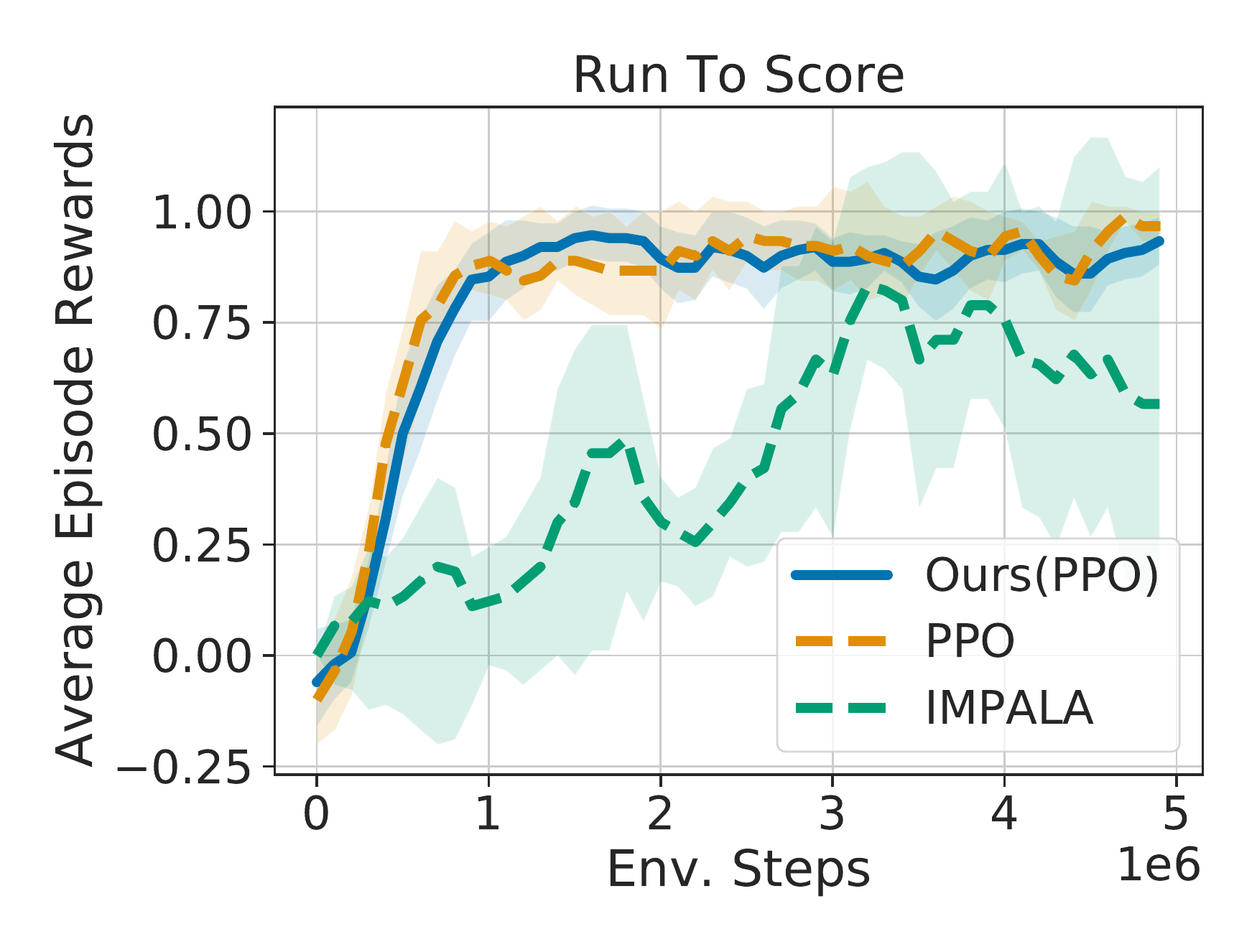}\\

\includegraphics[width=0.33\textwidth]{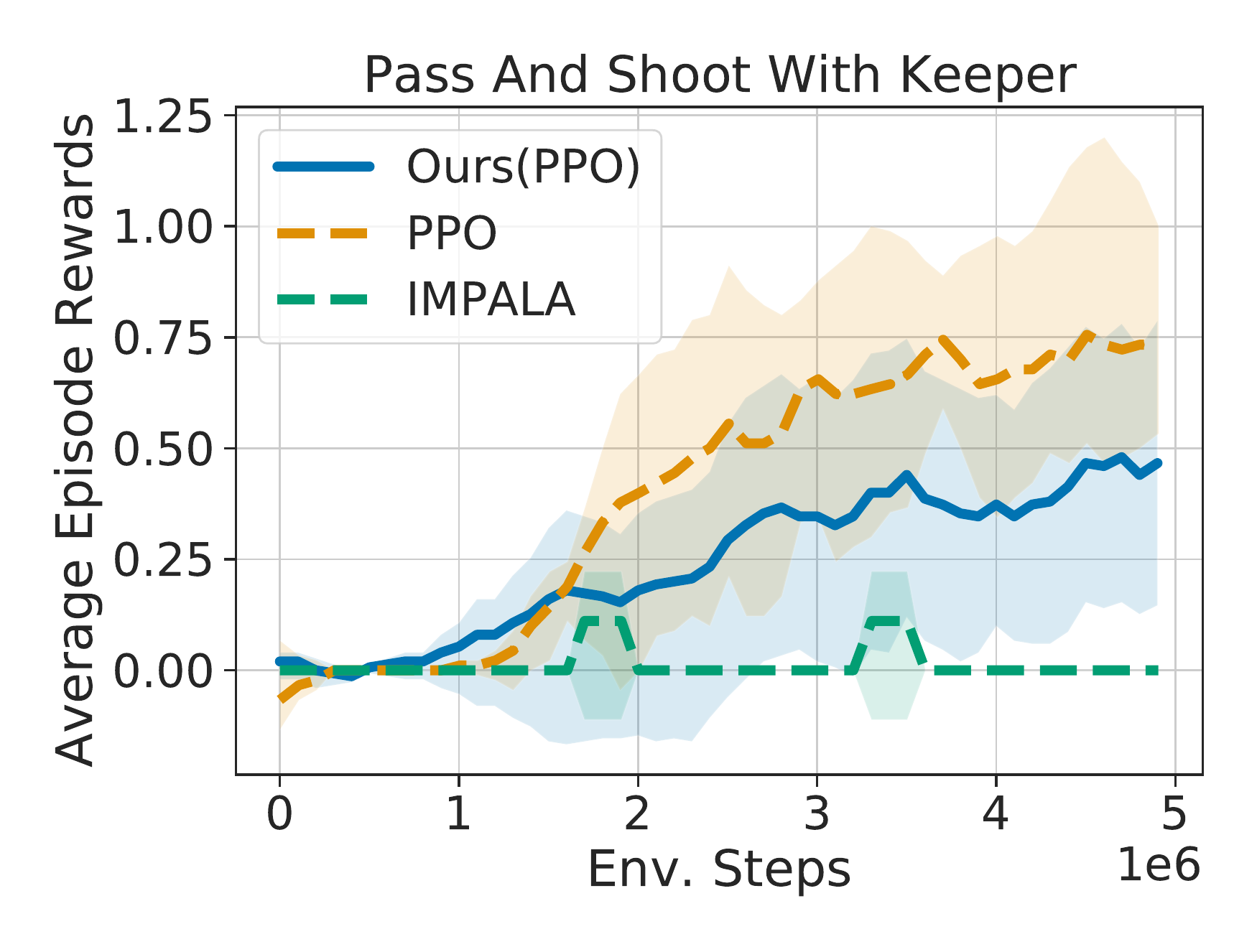}
&\hsp{-0.6cm}
\includegraphics[width=0.33\textwidth]{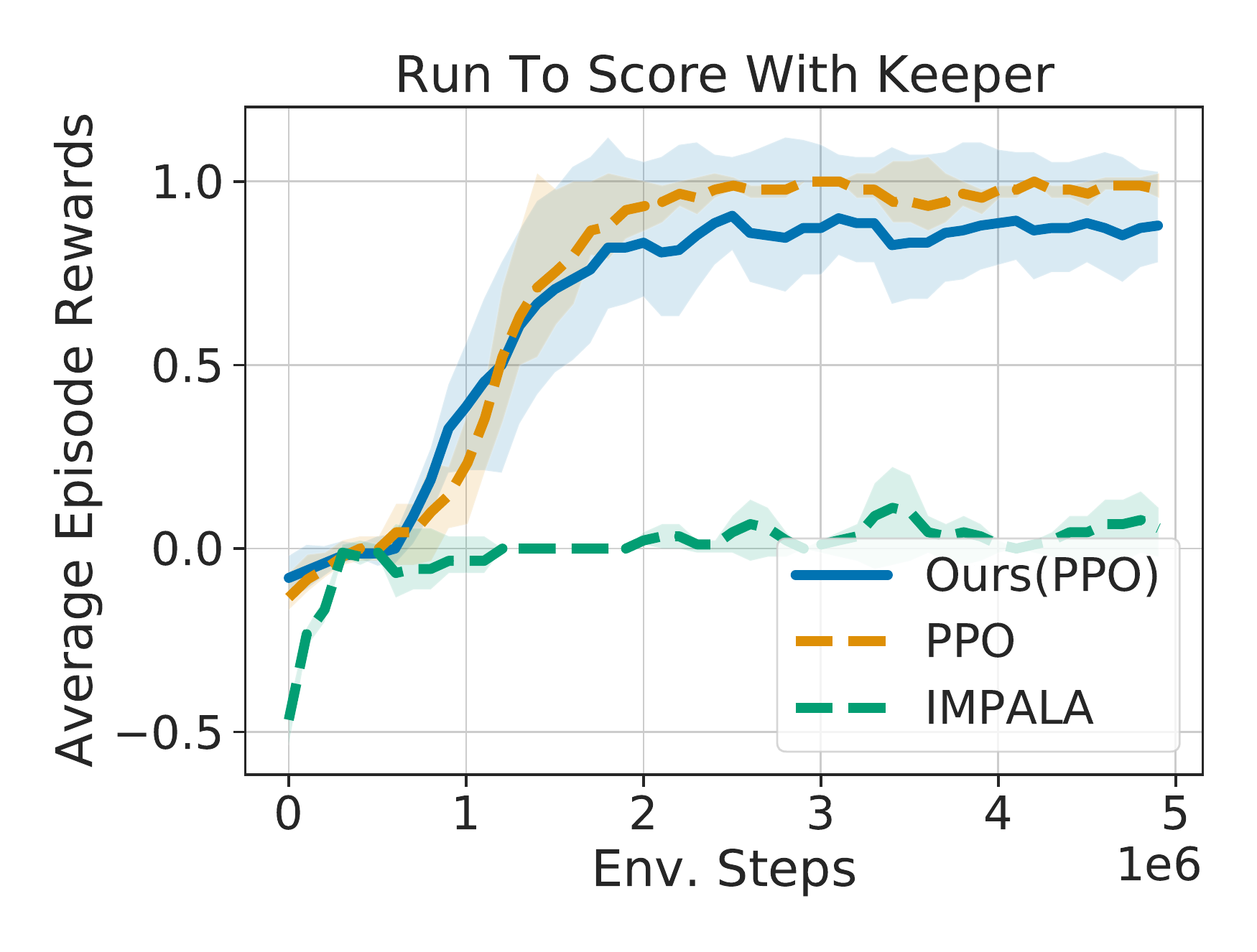}
&\hsp{-0.6cm}
\includegraphics[width=0.33\textwidth]{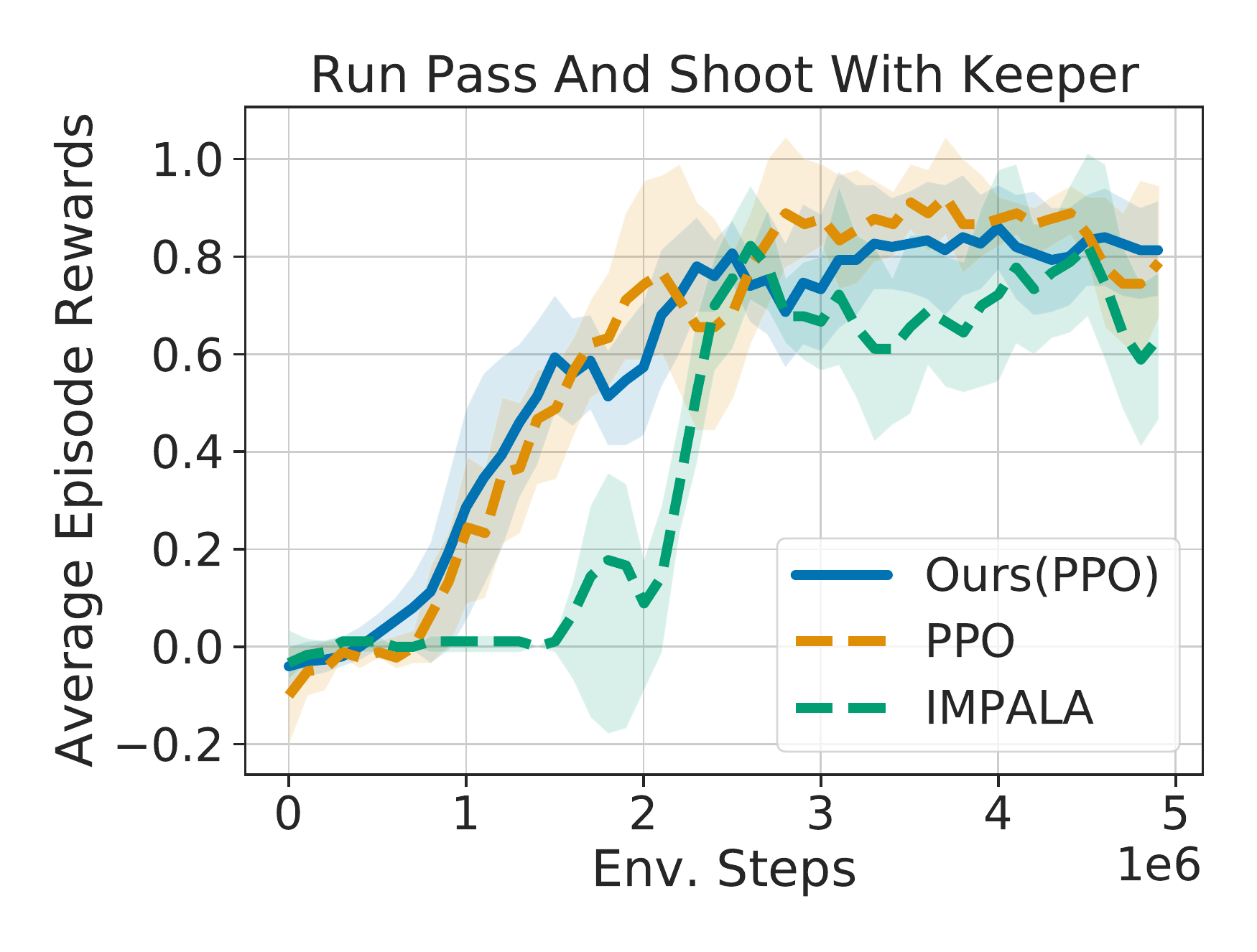}\\

\includegraphics[width=0.33\textwidth]{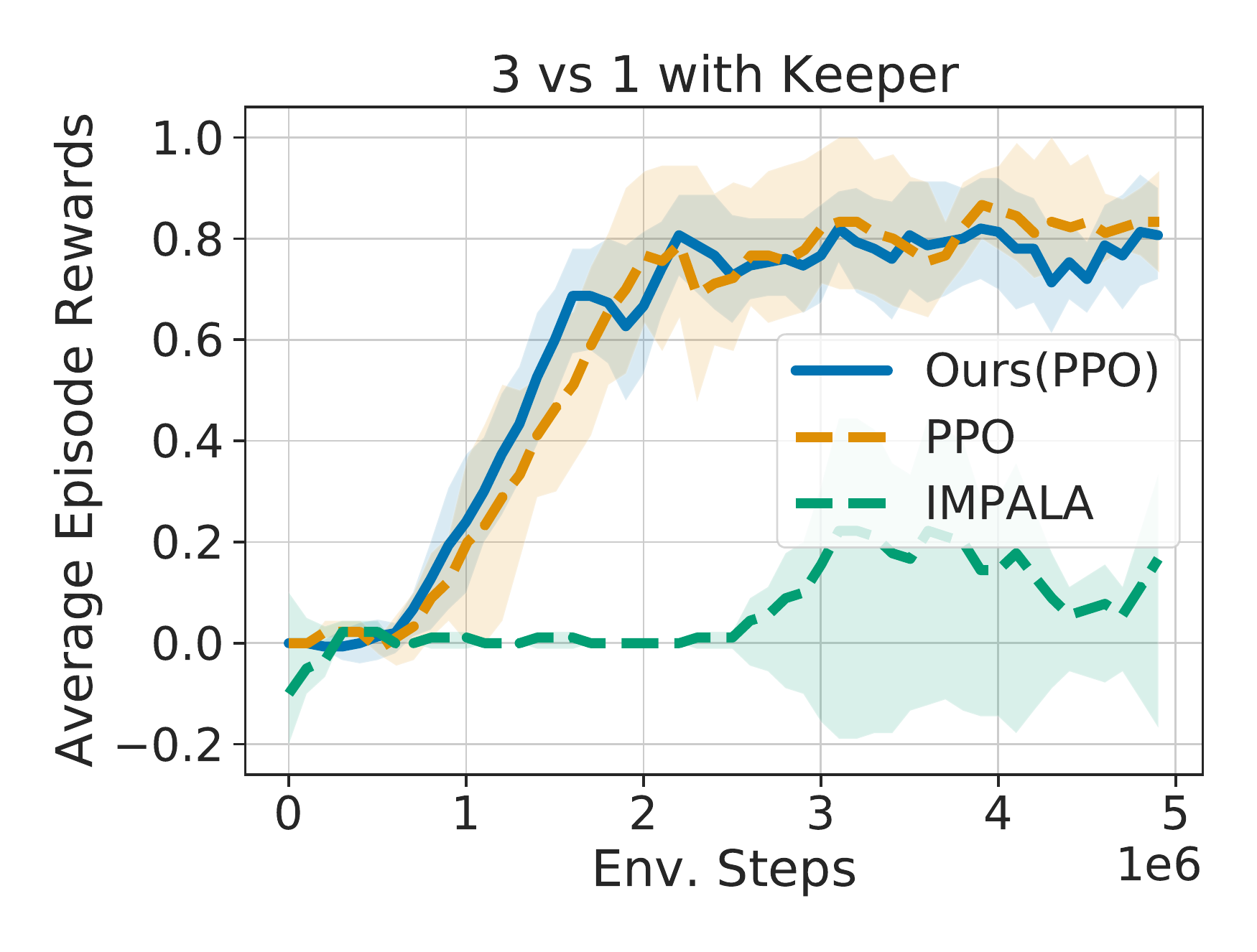}
&\hsp{-0.6cm}
\includegraphics[width=0.33\textwidth]{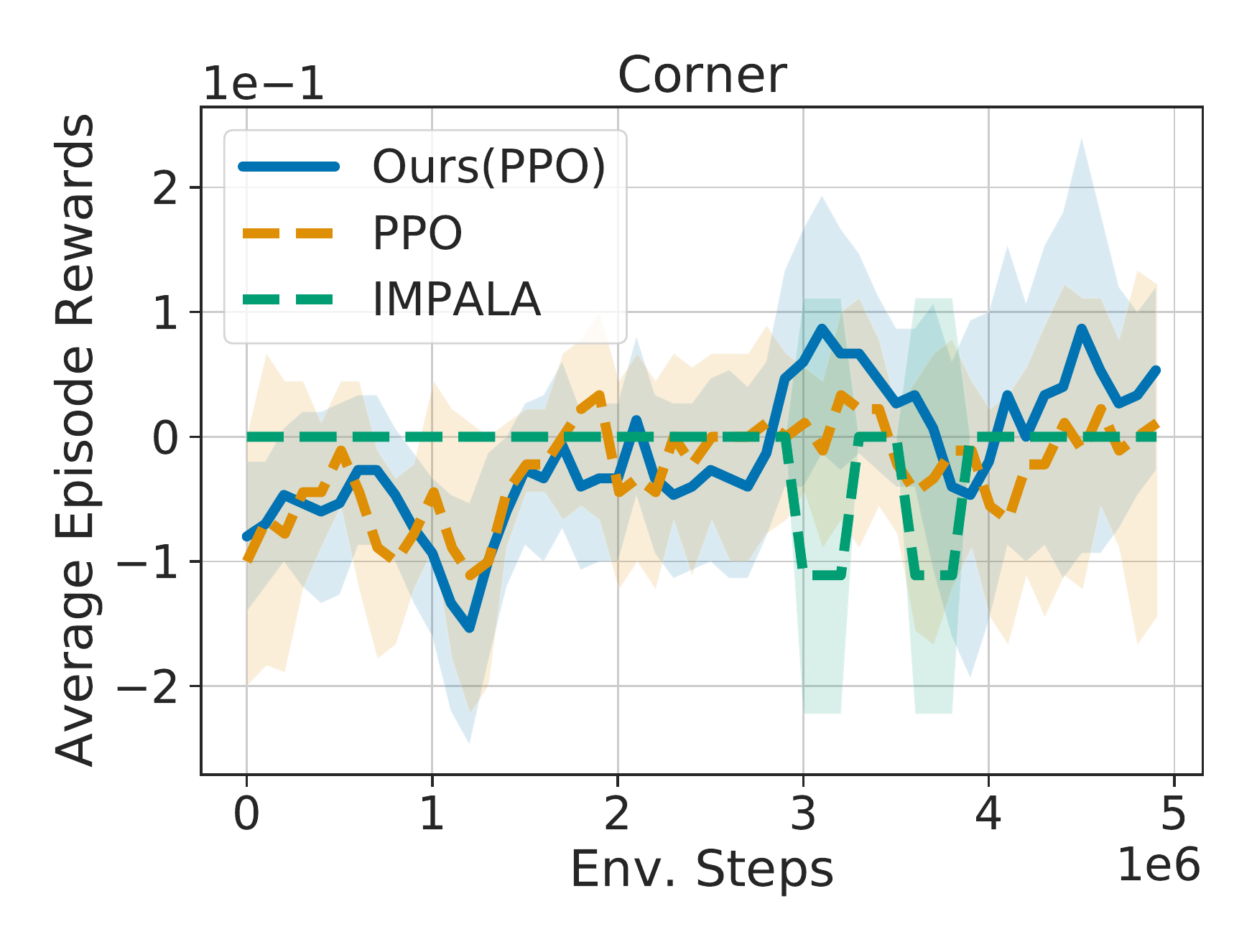}
&\hsp{-0.6cm}
\includegraphics[width=0.33\textwidth]{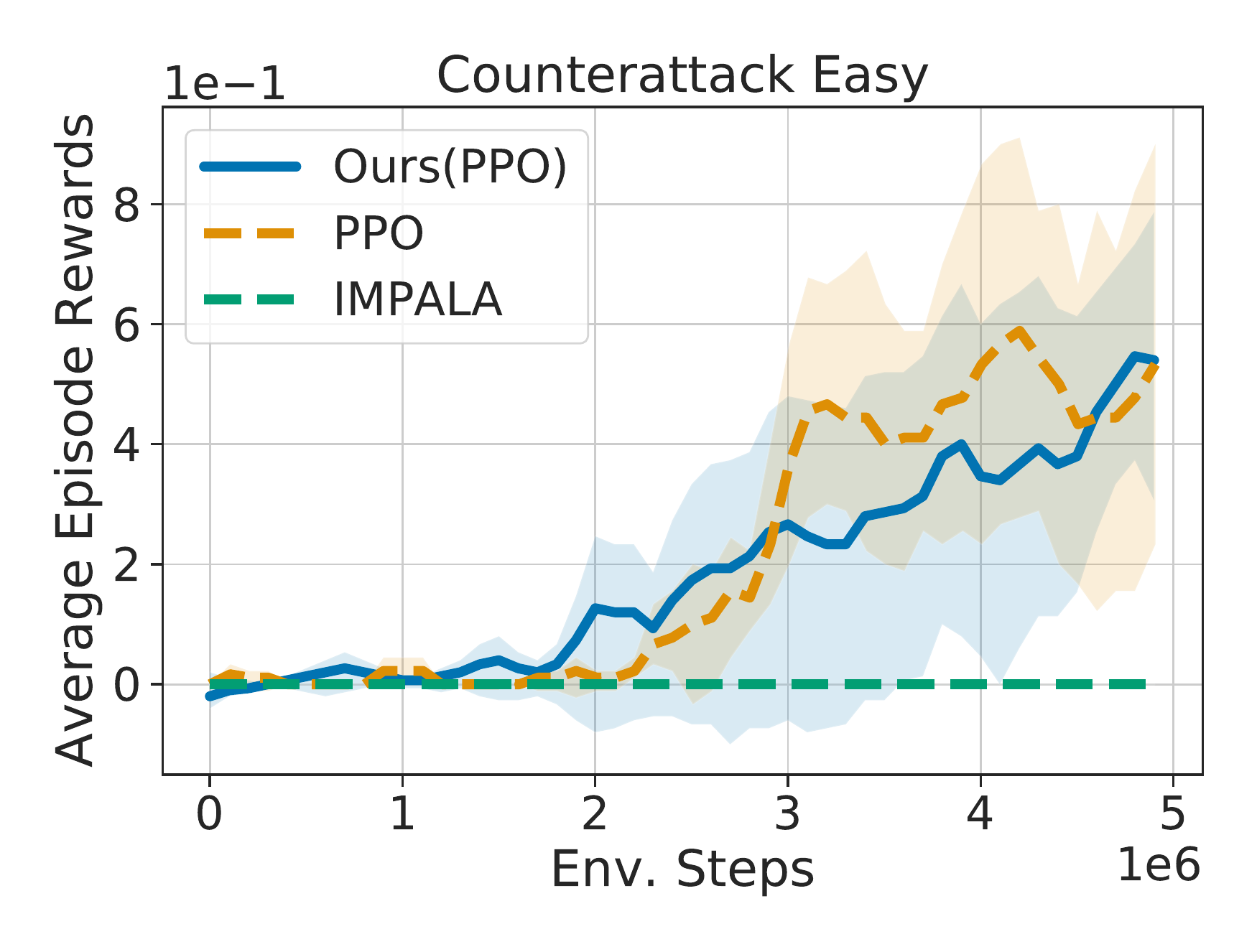}\\

\includegraphics[width=0.33\textwidth]{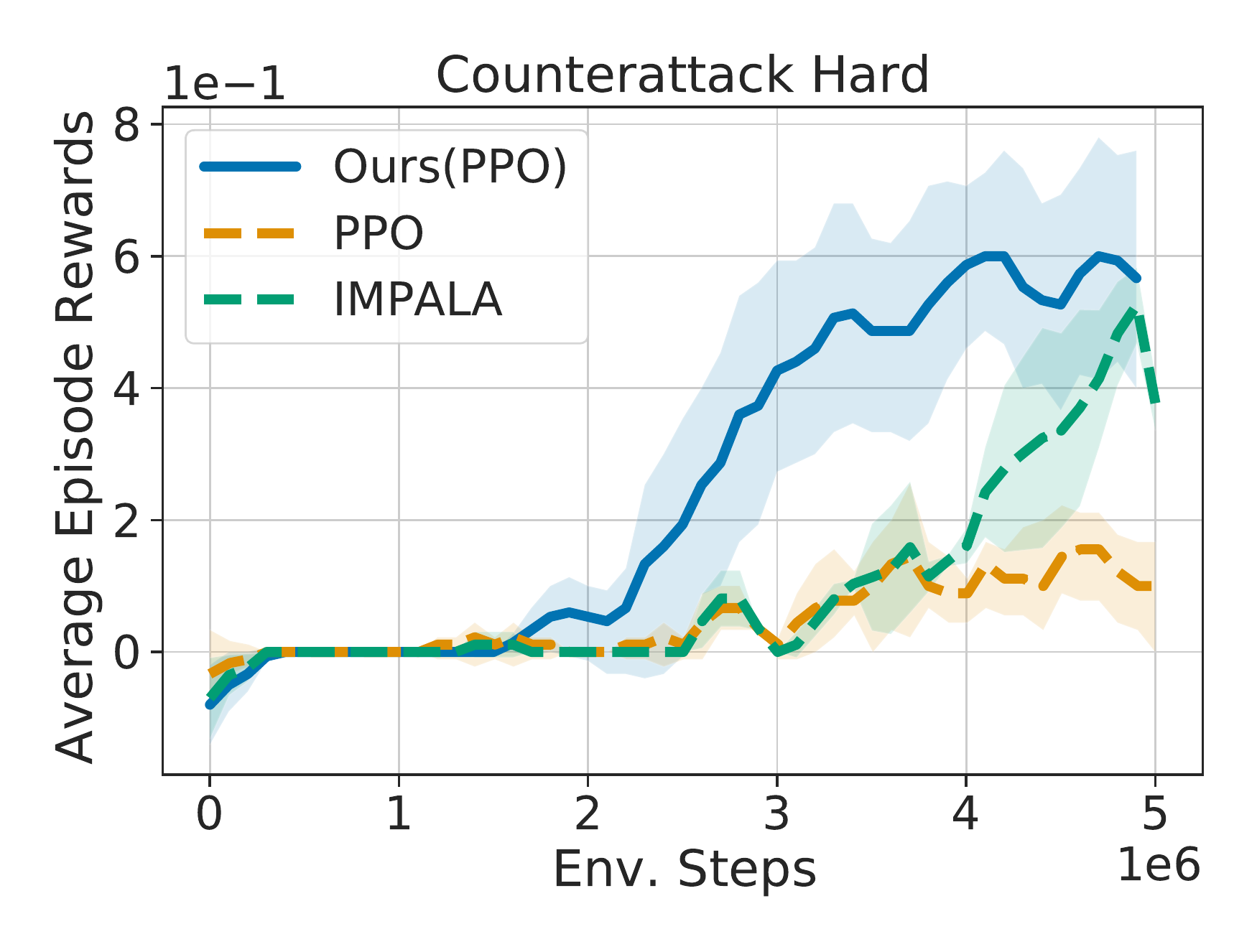}
&\hsp{-0.6cm}
\includegraphics[width=0.33\textwidth]{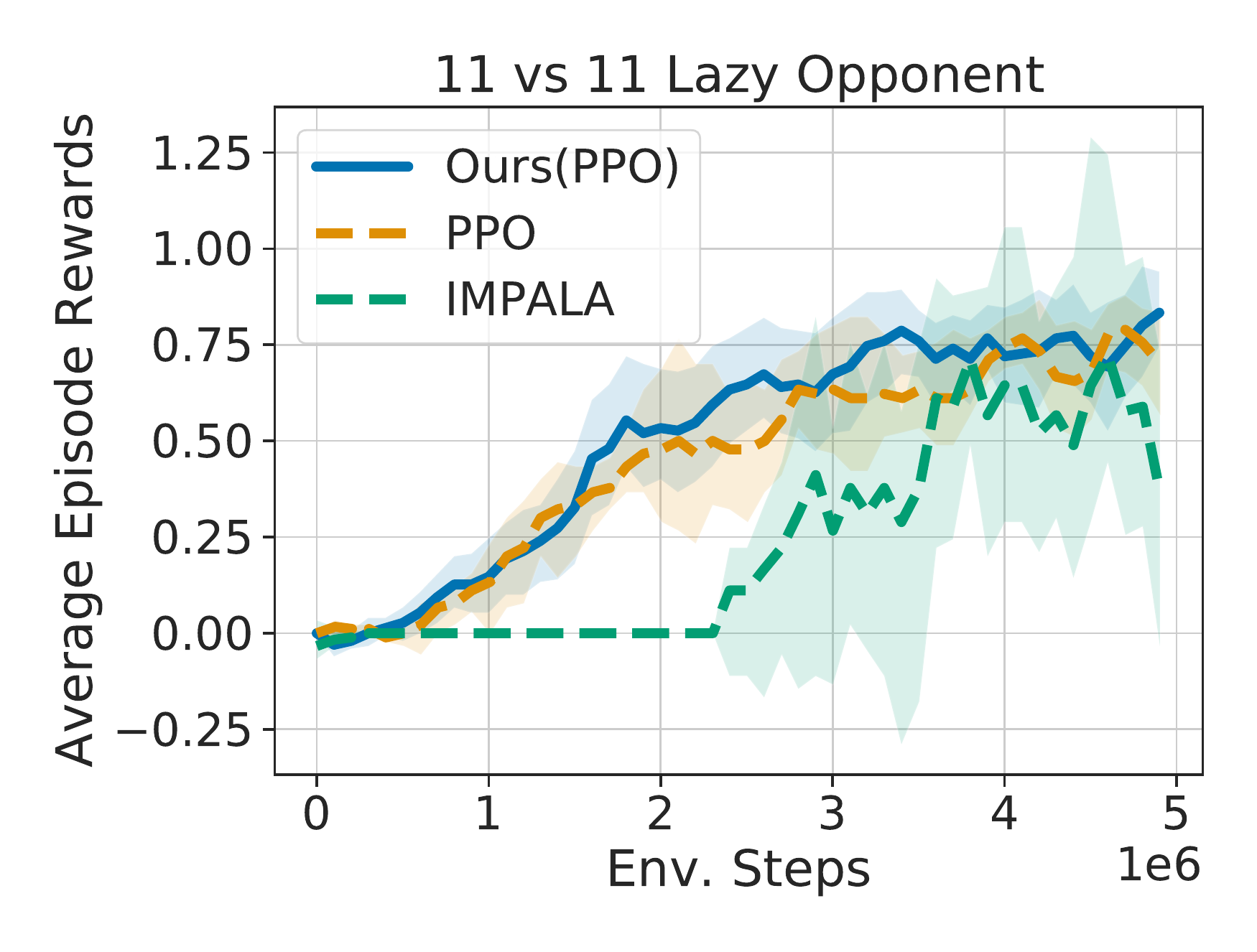}
\end{tabular}\caption{GFootball: {\textbf{Environment step}} versus reward.} 
\label{fig:football_step_plot}

\end{figure*}

\end{document}